\documentclass{article}

\usepackage{microtype}
\usepackage{graphicx}
\usepackage{subcaption}
\usepackage{booktabs} % for professional tables

\usepackage{hyperref}

\usepackage[accepted]{icml2026}

\usepackage{amsmath}
\usepackage{amssymb}
\usepackage{mathtools}
\usepackage{amsthm}
\usepackage{array}

\usepackage[capitalize,noabbrev]{cleveref}

\theoremstyle{plain}
\newtheorem{theorem}{Theorem}[section]

\newtheorem{lemma}[theorem]{Lemma}

\theoremstyle{definition}

\theoremstyle{remark}

\usepackage[textsize=tiny]{todonotes}

\icmltitlerunning{Debiased Model-based Representations for Sample-efficient Continuous Control}

\begin{document}

\twocolumn[
  \icmltitle{Debiased Model-based Representations for Sample-efficient Continuous Control}

  % It is OKAY to include author information, even for blind submissions: the
  % style file will automatically remove it for you unless you've provided
  % the [accepted] option to the icml2026 package.

  % List of affiliations: The first argument should be a (short) identifier you
  % will use later to specify author affiliations Academic affiliations
  % should list Department, University, City, Region, Country Industry
  % affiliations should list Company, City, Region, Country

  % You can specify symbols, otherwise they are numbered in order. Ideally, you
  % should not use this facility. Affiliations will be numbered in order of
  % appearance and this is the preferred way.
  \icmlsetsymbol{equal}{*}

  \begin{icmlauthorlist}
    \icmlauthor{Jiafei Lyu}{tencent}
    \icmlauthor{Zichuan Lin}{tencent}
    \icmlauthor{Scott Fujimoto}{mcgill}
    \icmlauthor{Kai Yang}{tencent}
    \icmlauthor{Yangkun Chen}{tencent}
    \icmlauthor{Saiyong Yang}{tencent}
    \icmlauthor{Zongqing Lu}{pku}
    \icmlauthor{Deheng Ye}{tencent}
    %\icmlauthor{}{sch}
    %\icmlauthor{}{sch}
  \end{icmlauthorlist}

  \icmlaffiliation{tencent}{Tencent Hunyuan}
  \icmlaffiliation{mcgill}{McGill University}
  \icmlaffiliation{pku}{School of Computer Science, Peking University}

  \icmlcorrespondingauthor{Jiafei Lyu}{dmksjfl@gmail.com}
  \icmlcorrespondingauthor{Deheng Ye}{775050607@qq.com}

  % You may provide any keywords that you find helpful for describing your
  % paper; these are used to populate the "keywords" metadata in the PDF but
  % will not be shown in the document
  \icmlkeywords{Machine Learning, ICML}

  \vskip 0.3in
]

% this must go after the closing bracket ] following \twocolumn[ ...

% This command actually creates the footnote in the first column listing the
% affiliations and the copyright notice. The command takes one argument, which
% is text to display at the start of the footnote. The \icmlEqualContribution
% command is standard text for equal contribution. Remove it (just {}) if you
% do not need this facility.

% Use ONE of the following lines. DO NOT remove the command.
% If you have no special notice, KEEP empty braces:
\printAffiliationsAndNotice{}  % no special notice (required even if empty)
% Or, if applicable, use the standard equal contribution text:
% \printAffiliationsAndNotice{\icmlEqualContribution}

\begin{abstract}
  Model-based representations recently stand out as a promising framework that embeds latent dynamics information into the representations for downstream off-policy actor-critic learning. It implicitly combines the advantages of both model-free and model-based approaches while avoiding the training costs associated with model-based methods. Nevertheless, existing model-based representation methods can fail to capture sufficient information about relevant variables and can overfit to early experiences in the replay buffer. These incur biases in representation and actor-critic learning, leading to inferior performance. To address this, we propose Debiased model-based Representations for Q-learning, tagged DR.Q algorithm. DR.Q explicitly maximizes the mutual information between the representations of the current state-action pair and the next state besides minimizing their deviations, and samples transitions with faded prioritized experience replay. We evaluate DR.Q on numerous continuous control benchmarks with a single set of hyperparameters, and the results demonstrate that DR.Q can match or surpass recent strong baselines, sometimes outperforming them by a large margin. Our code is available at \href{https://github.com/dmksjfl/DR.Q}{\texttt{https://github.com/dmksjfl/DR.Q}}.
\end{abstract}

\vspace{-4pt}
\section{Introduction}
\vspace{-4pt}

\begin{figure}[!h]
    \centering
    \includegraphics[width=0.97\linewidth]{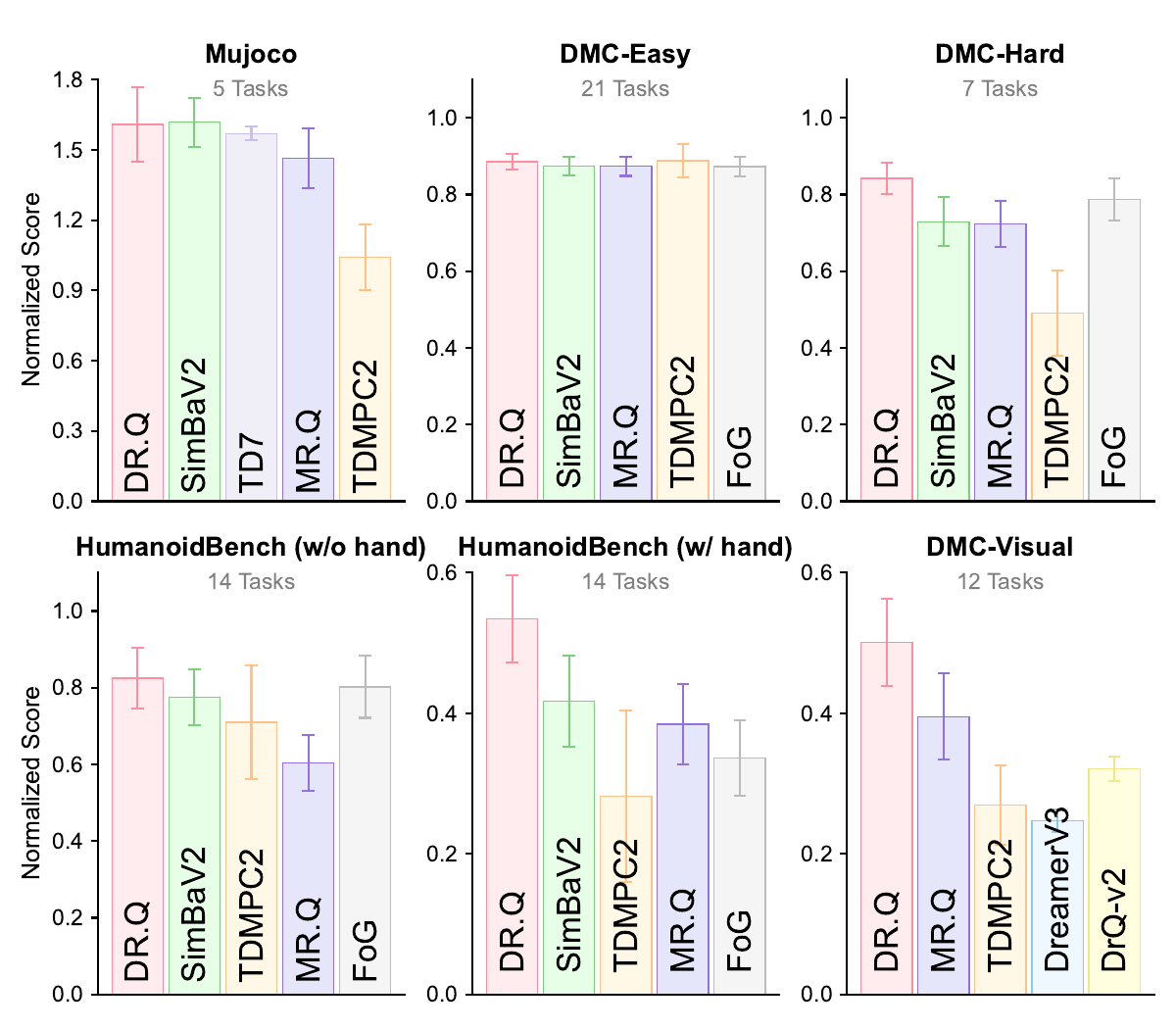}
    \caption{\textbf{Benchmark summary.} We aggregate results from three continuous control benchmarks and 73 tasks. Error bars denote the 95\% confidence interval. DR.Q generally match or outperform strong baselines like SimBaV2, MR.Q, and TDMPC2.}
    \label{fig:benchmarksummary}
\end{figure}

Reinforcement learning (RL) agents are known to suffer from sample inefficiency, often requiring large amounts of online interactions to learn a good policy, which can be expensive and hinder the practical application of RL algorithms. To improve the sample efficiency of RL agents, previous works have explored numerous directions in model-free RL, such as mitigating the value overestimation issue \citep{fujimoto2018addressing,kuznetsov2020controlling,lyu2022efficient}, reusing data from the replay buffer \citep{chen2021randomized,doro2023sampleefficient,lyu2024off}, modifying network architecture \citep{nauman2024bigger,lee2025simba,Lee2025HypersphericalNF}, etc. Meanwhile, some researchers resort to learning the world model of the environment and leveraging it for planning \citep{hansen2022temporal,hansen2024tdmpc} or data augmentation \citep{janner2019trust,voelcker2025madtd}. These model-based methods can exhibit higher sample efficiency compared to model-free ones, but their training costs are often higher.

To incorporate the benefits of model-based objectives into model-free algorithms, recent works \citep{fujimoto2023for,fujimoto2025towards} propose to learn \textit{model-based representations} that train state and action representations by modeling the latent dynamics of the environment. The learned model-based representations are then fed forward to downstream actor and critic networks to learn the policy and value functions. This framework is promising since it enables richer alternative learning signals and faster adaptation to environmental dynamics. However, we argue that there are two factors that negatively affect model-based representations. First, existing methods often train model-based representations by minimizing the deviation between the current state-action representation and the next state representation, which unfortunately does not necessarily incur higher mutual information between them (see Theorem \ref{theo:nocorrelationmsemutualinformation}). It indicates that current representation learning objectives may fail to capture sufficient information about the state-action representation and the next state representation. Second, the model-based representations are trained either by uniform sampling or prioritized experience replay (PER) which favors transitions with large temporal difference (TD) errors. Nevertheless, the learned representations can overfit to early (bad) experiences due to the primacy bias \citep{Nikishin2022ThePB}. These factors cause bias in representation learning and eventually incur inferior performance.

As such, we propose Debiased model-based Representations for Q-learning in this work, dubbed DR.Q algorithm. It actively maximizes the mutual information between the representations of the current state-action pair and the next state, apart from the commonly adopted objective of minimizing the representation deviations. By doing so, the learned state-action representation and the next state representation not only become numerically close, but also encode more relevant information about each other. Furthermore, DR.Q introduces a faded prioritized experience replay approach that assigns higher priority to new experiences with large TD errors and lower priority to earlier experiences. This generally ensures that more valuable samples are used for training while alleviating the influence of the primacy bias. Altogether, these result in informative model-based representations that can better benefit actor-critic learning.

We evaluate DR.Q across 73 environments from three standard continuous control online RL benchmarks: MuJoCo \citep{todorov2012mujoco}, DMC suite \citep{tassa2018deepmind}, and HumanoidBench \citep{sferrazza2024humanoidbench}. These tasks feature diverse characteristics and varying complexities. As depicted in Figure \ref{fig:benchmarksummary}, DR.Q can match or outperform strong domain-specific algorithms and general baselines, sometimes by a large margin. We open-source our code, model weights, and logs to facilitate future research.

\vspace{-4pt}
\section{Related Work}

\noindent\textbf{Dynamics-based representation learning.} Representation learning \citep{bengio2013representation,lesort2018state} is an effective way to capture the underlying patterns of the data, where the intermediate features can be learned independently from downstream tasks. Many representation learning methods are used in RL to produce high-quality representations, e.g., contrastive representation learning methods \citep{laskin2020curl,stooke2021decoupling,zheng2023texttttaco} and self-supervised representation learning methods \citep{grill2020bootstrap,paster2021blast,bardes2024vjepa,garrido2024learning}. Meanwhile, representation learning in RL is often related to dynamics, i.e., modeling how the system evolves from the current state given one legal action. Naturally, such dynamics-based representation learning that learns latent dynamics models can be found in numerous model-based RL papers \citep{watter2015embed,finn2016deep,zhang2019solar,schrittwieser2020mastering,schrittwieser2021online,hansen2022temporal,hansen2024tdmpc,karl2016deep,hafner2019learning,hafner2023mastering,wang2024efficientzero,sun2024learning,krinner2025accelerating}. Moreover, dynamics-based representation learning is also explored in model-free methods, which learn representations by predicting future latent states
\citep{munk2016learning,van2016stable,zhang2018decoupling,gelada2019deepmdp,schwarzer2020data,lee2020stochastic,ota2020can,guo2020bootstrap,mcinroe2021learning,guo2022byol,cetin2022stabilizing,yu2022mask,zhao2023simplified,yan2024enhancing,fujimoto2023for,fujimoto2025towards,ni2024bridging,scannell2024iqrl,scannell2024quantized,bagatella2025td}. These works demonstrate the effectiveness and advantages of dynamics-based representation learning under various scenarios.

\noindent\textbf{Sample-efficient RL algorithms.} Sample efficiency is one of the key metrics for evaluating online RL agents. Higher sample efficiency is preferred since it means that agents can learn faster and better given a fixed budget of online interactions. Many efforts have been made to enhance sample efficiency, including improving the exploration ability of the agent \citep{still2012information,burda2018exploration,haarnoja2018soft,ladosz2022exploration,yang2024exploration,jiang2025episodic}, scaling up compute by reusing data from the replay buffer \citep{chen2021randomized,doro2023sampleefficient,lyu2024off,romeo2025speq}, parallel simulation \citep{seo2025fasttd3,obando2025simplicial}, using normalization approaches \citep{wang2020striving,gogianu2021spectral,lyle2024normalization,bhatt2024crossq}, mitigating value estimation bias \citep{van2016deep,fujimoto2018addressing,kuznetsov2020controlling,moskovitz2021tactical,lyu2022efficient,lyu2023value}, leveraging model-based approaches \citep{janner2019trust,buckman2018sample,Hafner2020Dream,lai2021on,fan2021model,wang2022sample,voelcker2025madtd,wang2025offpolicy,wang2025controllable,amigo2025first}, etc. Another line of study improves sample efficiency by modifying the network architecture and scaling network capacities \citep{nauman2024bigger,kang2025a,lee2025simba,Lee2025HypersphericalNF,lyu2026temporal}. Instead, DR.Q focuses on improving model-based representations without altering network configurations.

\noindent\textbf{Experience replay methods.} Off-policy RL methods often use uniform sampling during training \citep{mnih2015human,haarnoja2018soft,fujimoto2018addressing}, i.e., all transitions in the replay buffer are treated equally. To better utilize the gathered samples, numerous experience replay methods have been developed. \cite{schaul2015prioritized} introduces prioritized experience replay (PER), which assigns priority to transitions based on their TD errors. PER is shown to be effective and has inspired numerous subsequent works \citep{horgan2018distributed,fujimoto2020equivalence,saglam2023actor,pan2022understanding,oh2022modelaugmented,li2024roer}. Hindsight experience replay (HER) \citep{andrychowicz2017hindsight,fang2019curriculum,yang2021bias} mitigates the sparse reward issues by injecting additional goals into trajectories. Other valuable attempts include adjusting the sampling probability to make the sampling distribution more uniform \citep{yenicesu2024cuer}, organizing the experiences into a graph \citep{hong2022topological}, and incorporating a ``forget" mechanism that allocates lower probabilities to older experiences while sampling more recent experiences \citep{novati2019remember,wang2020striving,kang2025a}, etc. The faded prioritized experience replay in DR.Q leverages the advantages of PER and the forget mechanism to ensure that more valuable samples are used for training.

\vspace{-4pt}
\section{Preliminary}
\vspace{-4pt}

\noindent\textbf{Reinforcement learning (RL).} RL problems can be formulated as a Markov Decision Process (MDP), which is specified by a 5-tuple $(\mathcal{S},\mathcal{A}, r,\gamma, \mathcal{P})$, where $\mathcal{S}$ is the state space, $\mathcal{A}$ is the action space, $r$ is the reward, $\mathcal{P}$ is the dynamics function, $\gamma$ is the discount factor. RL agent aims to learn a policy $\pi:\mathcal{S}\rightarrow\mathcal{A}$ that maximizes the cumulative discounted return $\sum_{t=o}^\infty\gamma^t r_t$. RL algorithms learn a value function $Q^\pi(s,a):=\mathbb{E}[\sum_{t=0}^\infty\gamma^tr_t|s_0=s,a_0=a]$, which measures the expected return given state $s$ and action $a$.

\noindent\textbf{Model-based representations.} Model-based representations leverage objectives from model-based RL to learn implicit state-action (or state) representations by enforcing dynamics consistency in the latent space. To be specific, one needs to train the state encoder $f(\cdot)$, the state-action encoder $g(\cdot)$, and the reward function $\hat{r}(\cdot)$. The state encoder $f(\cdot)$ receives the state as the input and outputs the state representation, i.e., $z_s=f(s)$. The resulting state representation $z_s$ and the corresponding action $a$ are then fed into the state-action encoder $g(\cdot)$ and the reward function to output the state-action representation $z_{sa}=g(z_s,a)$ and the predicted reward $\hat{r}(z_s,a)$. Then, the model-based representations are trained by minimizing $\mathbb{E}[(z_{sa} - z_{s^\prime})^2 + (\hat{r} - r)^2]$, where $z_{s^\prime}$ is the next state representation. Typically, MRQ \citep{fujimoto2025towards} trains model-based representations by:
\begin{align}
    \label{eq:mrqobjective}
        \mathcal{L}_{\rm enc}^{\rm MR.Q} &= \sum_{i=1}^{H} \lambda_r {\rm CE}(\hat{r}_i,{\rm TwoHot}(r_i))+\lambda_d \mathbb{E}[(\hat{z}_{s^\prime,i}-\tilde{z}_{s^\prime,i})^2] \nonumber \\
        &+ \lambda_t \mathbb{E}[(\hat{d}_i - d_i)^2],
\end{align}
where ${\rm CE}$ is the cross entropy loss, $H$ is the encoder horizon, ${\rm TowHot}$ is the two-hot encoding, $\hat{d}$ is the predicted done flag, $d$ is the true done flag, $\lambda_r,\lambda_d,\lambda_t$ are coefficients that balance each loss term, the subscript $i$ denotes the corresponding value at step $i$, $i\in[1, H]$. $\hat{z}_{s^\prime,i}$ is a linear mapping of the state-action representation $z_{sa,i}$, and $\tilde{z}_{s^\prime,i}$ is the next state representation produced by the target state encoder.

\noindent\textbf{Notations.} $H(X)$ denotes the entropy of the random variable $X$, and $H(X|Y)$ is the conditional entropy of $X$ given $Y$, $I(X;Y)$ is the mutual information between $X$ and $Y$. 

\vspace{-10pt}
\section{Debiased Model-based Representations}

In this section, we introduce our {\bf D}ebiased model-based {\bf R}epresentation learning for {\bf Q}-learning, dubbed DR.Q algorithm. Following prior methods that learn model-based representations \citep{fujimoto2023for,fujimoto2025towards}, DR.Q separates the conventional actor-critic training process into two phases: (i) learning state representations $z_s$ and state-action representations $z_{sa}$ using model-based objectives, (ii) optimizing downstream value functions $Q_\theta$ parameterized by $\theta$ and the policy $\pi_\phi$ parameterized by $\phi$. To that end, DR.Q needs to train the following components:
\begin{equation}
    \begin{split}
    f: s\rightarrow z_s; \quad g:(s,a)\rightarrow z_{sa}, \quad \hat{r}:z_{sa}\rightarrow \mathbb{R}, \\
    \pi_\phi: z_s \rightarrow a, \quad Q_\theta: z_{sa} \rightarrow \mathbb{R}.
\end{split}
\end{equation}
The overall framework of DR.Q is presented in Figure \ref{fig:framework}.

\begin{figure*}
    \centering
    \includegraphics[width=0.88\linewidth]{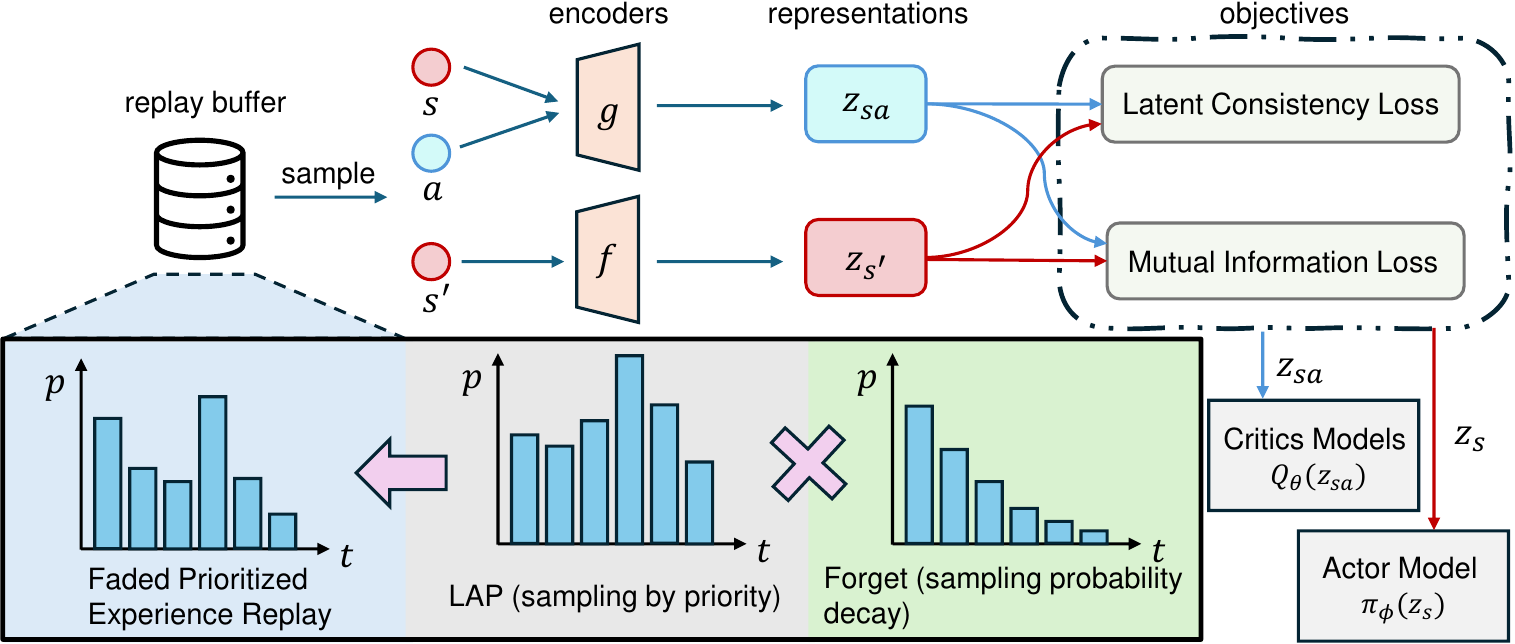}
    \caption{\textbf{Overall framework of DR.Q.} DR.Q introduces an auxiliary loss for maximizing the mutual information between the state-action representation $z_{sa}$ and the next state representation $z_{s^\prime}$ rather than merely minimizing the latent dynamics consistency loss. Meanwhile, DR.Q improves the sampling strategy by combining the prioritized experience replay with the experience forget mechanism.}
    \label{fig:framework}
\end{figure*}

\subsection{Representation Learning with Mutual Information}

In model-based RL, it is common practice to learn the dynamics model of the underlying environment, i.e., to train a model $g(s,a)$ that predicts the next state $s^\prime$ given the current state $s$ and action $a$. The objective function gives $\min \mathbb{E}[(g(s,a) - s^\prime)^2]$, which fulfills the dynamics consistency in the raw state-action space. Following this, prior methods \citep{fujimoto2025towards,hansen2024tdmpc} often choose to minimize the deviation between the state-action representation $z_{sa}$ and the next state representation $z_{s^\prime}$ when learning dynamics models in the latent space.

Such a training paradigm seems rational, but merely minimizing the numerical distance (e.g., Euclidean distance) between the representations $z_{sa}$ and $z_{s^\prime}$ does not inherently provide a mechanism to discard redundant or irrelevant information. It is possible that the learned representations are only \emph{falsely aligned}, where the small numerical distances are achieved by incorrectly minimizing the deviations between redundant elements, while key components that are vital for downstream value learning and policy learning may become less emphasized. This phenomenon can be pronounced for high dimensional state space or action space tasks, where many factors can be less important given a specific task (e.g., the dexterous hands information matters less for the humanoid robot to run or walk). 
We further theoretically justify our claim. For theoretical analysis, we assume that $S, A$ are random variables that follow some distribution (e.g., $S$ can follow a distribution over initial states and agent's actions). $S=s, A=a$ are the observed state and action vectors. We denote $Z_{sa}$ and $Z_{s^\prime}$ as the state-action representation and the next state representation, respectively, which are random variables as they are outcomes of deterministic mappings of $S,A$ to the representation space. $z_{sa},z_{s^\prime}$ are the observed instances of $Z_{sa}, Z_{s^\prime}$. Theorem \ref{theo:nocorrelationmsemutualinformation} states that merely minimizing the Euclidean distance between $Z_{sa}$ and $Z_{s^\prime}$ does not necessarily maximize their mutual information.

\begin{theorem}
\label{theo:nocorrelationmsemutualinformation}
    % Minimizing $\mathbb{E}[\|Z_{sa} - Z_{s^\prime}\|_2^2]$ does not necessarily ensure that the mutual information $I(Z_{sa};Z_{s^\prime})$ increases.
    Minimizing $\mathbb{E}[\|Z_{sa} - Z_{s^\prime}\|_2^2]$ does not necessarily increase the mutual information $I(Z_{sa};Z_{s^\prime})$.
\end{theorem}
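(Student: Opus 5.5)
The statement is a non-implication, so the plan is to prove it with an explicit counterexample. I will exhibit two joint distributions of $(Z_{sa},Z_{s^\prime})$, each realizable as deterministic functions of the underlying random variables $S,A$. The second will have strictly smaller $\mathbb{E}[\|Z_{sa}-Z_{s^\prime}\|_2^2]$ than the first but no larger, and in fact strictly smaller, mutual information $I(Z_{sa};Z_{s^\prime})$. This shows that decreasing the squared-distance objective can coincide with a decrease of mutual information, which is exactly the content of Theorem \ref{theo:nocorrelationmsemutualinformation}.

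\textbf{Step 1: the ``informative'' configuration.} I will take scalar representations and let $X$ be a random variable derived from $(S,A)$ with $\mathrm{Var}(X)=\sigma^2>0$. I set $Z_{sa}=X$ and $Z_{s^\prime}=-X$. Then $Z_{s^\prime}$ is a deterministic bijective function of $Z_{sa}$.
\begin{itemize}
\item In the discrete case, say $X$ uniform on $\{-1,1\}$, this gives $I(Z_{sa};Z_{s^\prime})=H(X)=\log 2$.
\item The squared distance is $\mathbb{E}[(2X)^2]=4$.
\end{itemize}

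\textbf{Step 2: the ``falsely aligned'' configuration.} I collapse both representations to near-constants, namely $Z_{sa}=c$ and $Z_{s^\prime}=c$. This can be reached continuously, for example $Z_{sa}=\epsilon X$ and $Z_{s^\prime}=\epsilon Y$ with $Y$ independent of $X$.
\begin{itemize}
\item The squared distance becomes $0$, or $O(\epsilon^2)$ in the perturbed version, so it is strictly less than $4$.
\item The mutual information is $0$, because a constant, or independent variables, share no information.
\end{itemize}
Hence the distance objective strictly decreased while $I$ dropped from $\log 2$ to $0$.

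To make the claim robust, I will also give a version where $H(Z_{sa})$ stays fixed, so the result is not merely an artifact of collapse. Take $Z_{sa}=(X,N_1)$ and $Z_{s^\prime}=(X^\prime,N_2)$, where the redundant coordinates $N_1,N_2$ are independent noise of small variance. Compare two cases:
\begin{itemize}
\item the $X$-coordinates are perfectly coupled, as in Step 1;
\item the $X$-coordinates are independent but both have tiny variance, so the total squared distance is smaller while $I=0$.
\end{itemize}
This directly formalizes the paper's ``redundant elements'' intuition. The underlying reason is that $\mathbb{E}\|Z_{sa}-Z_{s^\prime}\|^2$ depends only on the first and second moments of the difference, whereas $I$ is invariant to invertible rescalings. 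Scale can therefore be traded against dependence.

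The main obstacle is realizability together with well-definedness of $I$. Deterministic continuous mappings can give infinite mutual information, for example between $X$ and $-X$ with $X$ continuous. I would avoid this by using discrete $X$ throughout, and by checking that each configuration is induced by some choice of encoders $f,g$ from $(S,A)$. This is easy once $S$ is allowed to carry independent coordinates.
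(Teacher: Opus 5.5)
Your proposal is correct, and it reaches the statement by a different mechanism than the paper.

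The paper works with continuous Gaussian representations and gives two examples meant to show ``no definite correlation.'' The first uses additive noise, $Z_{s'}=X+\epsilon$: shrinking $\sigma^2$ lowers the MSE $\sigma^2 d$ but raises the mutual information $\frac{d}{2}\log(1+1/\sigma^2)$. The second uses scaling, $Z_{s'}=kX$ with $k\downarrow 1$, where both quantities are claimed to fall.

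Only the second direction is needed for the theorem, and your discrete construction supplies it cleanly. The anti-coupled pair on $\{-1,1\}$ has MSE $4$ and information $\log 2$. The collapsed or decoupled configuration has smaller MSE and zero information. This also shows the stronger point that a global minimizer of the MSE can carry no information at all, which is exactly the ``falsely aligned'' scenario the paper describes.

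Your concern about well-definedness is precisely what undermines the paper's scaling example. For Gaussian $X$ one has $H(kX\mid X)=-\infty$ rather than $0$, so $I(X;kX)=+\infty$ for every $k\neq 0$. The reported value $\frac{d}{2}\log(2\pi e k^2)$ is a differential entropy, not the mutual information, and it does not tend to $0$ as $k\to 1$ either. The Gaussian route can be repaired by adding independent noise, $Z_{s'}=kX+\eta$ with $\eta\sim\mathcal{N}(\mathbf{0},\sigma^2 I_d)$. Then the MSE $((k-1)^2+\sigma^2)d$ and the information $\frac{d}{2}\log(1+k^2/\sigma^2)$ both decrease as $k\downarrow 1$.

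That repaired version buys a setting closer to real-valued encoders and exhibits both directions. Yours buys finite, exactly computable quantities with no measure-theoretic subtleties.

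For your ``not an artifact of collapse'' variant, the simplest choice is to keep both marginals uniform on $\{-1,1\}$ and only change the coupling, from $Z_{s'}=-Z_{sa}$ to independence. The MSE then drops from $4$ to $2$ and the information from $\log 2$ to $0$, with both marginal entropies unchanged.
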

The above theorem reveals the pitfalls of prior model-based representation methods, i.e.,  the trained representations $z_{sa}, z_{s^\prime}$ may fail to capture sufficient information about each other or encode informative knowledge of the latent dynamics by enforcing them to be numerically close, resulting in \emph{biased} representations. In light of this, we deem it necessary to include an additional mutual information loss when learning model-based representations, besides the mean-squared error (MSE) loss, as depicted in Figure \ref{fig:framework}, i.e.,
\begin{equation}
    \mathcal{L} = \min \underbrace{\mathbb{E}[(Z_{sa} - Z_{s^\prime})^2]}_{\rm latent\, consistency\, loss} - \underbrace{{\rm MI}(Z_{sa};Z_{s^\prime}).}_{\rm mutual\, information\, loss}
\end{equation}
Furthermore, we show in Lemma \ref{lemma:reducedentropy} that maximizing the mutual information between $Z_{sa}$ and $Z_{s^\prime}$ can reduce the conditional entropy of $Z_{s^\prime}$ given $Z_{sa}$.
\begin{lemma}
\label{lemma:reducedentropy}
    The conditional entropy $H(Z_{s^\prime}|Z_{sa})$ strictly reduces if $I(Z_{s^\prime};Z_{sa})$ increases.
\end{lemma}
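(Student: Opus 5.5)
The plan is to use the standard identity $I(Z_{s'};Z_{sa}) = H(Z_{s'}) - H(Z_{s'}\mid Z_{sa})$. Then ask under what condition an increase of the mutual information forces a strict decrease of the conditional entropy. The statement cannot hold unconditionally: if the encoder changes, both $H(Z_{s'})$ and $H(Z_{s'}\mid Z_{sa})$ may change. I will therefore make explicit the implicit assumption that the marginal entropy $H(Z_{s'})$ is held fixed (or does not increase) while $I$ increases. This is the natural reading: we compare two joint distributions of $(Z_{sa},Z_{s'})$, with the next-state representation distribution unchanged, or with the comparison made at fixed marginal uncertainty.

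First, I will write the definitions. For discrete variables, $H(Z_{s'}\mid Z_{sa}) = -\sum p(z_{sa},z_{s'})\log p(z_{s'}\mid z_{sa})$. For continuous variables I use differential entropies, and the identity still holds whenever the quantities are finite. Then I derive $I(Z_{s'};Z_{sa}) = H(Z_{s'}) - H(Z_{s'}\mid Z_{sa})$ by expanding $\log \frac{p(z_{sa},z_{s'})}{p(z_{sa})p(z_{s'})} = \log p(z_{s'}\mid z_{sa}) - \log p(z_{s'})$ and taking expectations. This is a routine step.

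Second, I take two configurations, indexed by superscripts $(1)$ and $(2)$, with $I^{(2)} > I^{(1)}$ and $H^{(2)}(Z_{s'}) \le H^{(1)}(Z_{s'})$ (in particular, equality when the marginal is fixed). Subtracting the identities gives
\begin{equation*}
H^{(2)}(Z_{s'}\mid Z_{sa}) - H^{(1)}(Z_{s'}\mid Z_{sa}) = \bigl(H^{(2)}(Z_{s'}) - H^{(1)}(Z_{s'})\bigr) - \bigl(I^{(2)} - I^{(1)}\bigr) < 0,
\end{equation*}
which is exactly the strict reduction claimed.

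The main obstacle is stating this assumption honestly without overclaiming. As a fallback, I would also show that the decrease in conditional entropy equals the gain in mutual information minus the change in marginal entropy. This makes precise that maximizing MI reduces $H(Z_{s'}\mid Z_{sa})$ whenever the representation does not simply inflate its marginal entropy. In practice, the latent-consistency and normalization terms keep the marginal entropy from growing.
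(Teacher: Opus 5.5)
Your proposal is correct and matches the paper's proof: both use the identity $H(Z_{s'}\mid Z_{sa}) = H(Z_{s'}) - I(Z_{s'};Z_{sa})$ and hold the marginal entropy $H(Z_{s'})$ fixed, which the paper justifies by calling it an inherent property of the environment. Your version is somewhat more careful. It states that hypothesis explicitly and weakens it to $H(Z_{s'})$ being non-increasing, which matters because $Z_{s'}$ comes from a learned encoder, so its entropy can change during training.
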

The above lemma is promising since it indicates that the uncertainty of predicting $Z_{s^\prime}$ given $Z_{sa}$ can be effectively reduced by maximizing the mutual information term. This ensures a stronger connection and mapping between the learned representations $z_{sa}$ and $z_{s^\prime}$, and can hopefully benefit the subsequent actor-critic learning. This potential benefit can be supported by the theoretical insights of DeepMDP \citep{gelada2019deepmdp} and MR.Q \citep{fujimoto2025towards}, i.e., the value error is upper-bounded by the transition and reward modeling errors in the latent space, and a more precise latent dynamics directly tightens the bound on the value error. As shown in Lemma \ref{lemma:reducedentropy}, maximizing $I(Z_{sa};Z_{s^\prime})$ strictly reduces the conditional entropy $H(Z_{s^\prime}|Z_{sa})$, which implies that the latent dynamics model becomes more deterministic and discriminative when predicting the next state representation. Since we are simultaneously minimizing MSE, the accuracy of the estimated dynamics can be improved, and therefore we can better control the value error upper bound derived in DeepMDP and MR.Q, which ultimately may result in better policy performance.

\subsection{Faded Prioritized Experience Replay}

Another source of bias when learning model-based representations comes from the sampling strategy. Common sampling strategies include uniform sampling and PER \citep{schaul2015prioritized}. Uniform sampling cannot determine whether the transition is worth training. PER compensates this by assigning higher priorities to transitions with larger TD errors. Denote the TD error of the transition $e_i$ in a replay buffer $\mathcal{D}$ as $\delta(i), i\in [0,|\mathcal{D}|-1]$, $\mathcal{D}=\{e_i\}$. PER follows the sampling probability $p(i) = \frac{|\delta(i)|^\alpha +\kappa}{\sum_j (|\delta(j)|^\alpha+\kappa)}$, where the hyperparameter $\alpha$ smooths out extremes, and $\kappa$ is added to avoid zero probability. We assume $\kappa=0$ for simplicity, i.e., the sampling probability gives $p(i) = \frac{|\delta(i)|^\alpha}{\sum_j |\delta(j)|^\alpha}$. However, both uniform sampling and PER can suffer from the primacy bias \citep{Nikishin2022ThePB}, i.e., overfitting to past experiences, which can deviate far from the distribution of the current policy. Consequently, it may lead to undesired training instability and inferior performance. 

Some researchers propose to alleviate the primacy bias by introducing the forget mechanism \citep{wang2020striving,kang2025a}, which focuses more on recent, new experiences and gradually reduces the influence of old experiences with a decay rate $\epsilon, \epsilon\in(0,1)$. Suppose that the transitions are sequentially added to the replay buffer, with index 0 being the newest transition; the forget mechanism \citep{kang2025a} generally follows the sampling probability $P(i) = \frac{(1-\epsilon)^i}{\sum_j (1-\epsilon)^j}$. Nevertheless, it is not necessarily true that recent experiences are always worth getting frequently sampled since the new sample may have a small TD error that can contribute less to critic learning. If the transition is less ``surprising", it is better to reduce its sampling probability when learning model-based representations.

To enjoy the advantages of the above two types of replay methods and alleviate their negative effects simultaneously, we propose the faded prioritized experience replay (faded PER) strategy, which combines PER and the forget mechanism by assigning high priorities to transitions that are both new and have large TD errors (as shown in Figure \ref{fig:framework}). To be specific, the faded PER samples transitions via:
\vspace{-4pt}
\begin{equation}
    \label{eq:fadedprioritizedexperiencereplay}
    P(i) = \dfrac{|\delta(i)|^\alpha (1-\epsilon)^i}{\sum_j|\delta(j)|^\alpha (1-\epsilon)^j}.
\end{equation}
In this way, the agent can focus more on recent important experiences. As long as the experience is not too old (i.e., deviate from the current policy too much) and its TD error is large, it can still enjoy a comparatively high probability of getting sampled, hence mitigating the negative effects of PER and the forget mechanism. We theoretically analyze the properties of the faded PER in Theorem \ref{theo:fadedexperiencereplay}.
\begin{theorem}
    \label{theo:fadedexperiencereplay}
    Let $\mathcal{D}$ be a replay buffer with the decay rate $\epsilon$, $N$ be the batch size, $P(i)$ be the probability that a transition $e_i$ is sampled using faded PER, $\delta(i)$ is the current TD error of $e_i$. Then, we have:
    
    (i) for $i_1 < i_2$, if $|\delta(i_1)| = |\delta(i_2)|$, then $P(i_1) > P(i_2)$;

    (ii) denote $\widehat{P}(i) = \frac{|\delta(i)|^\alpha}{\sum_j |\delta(j)|^\alpha}$, then there exist $C>0, k\in \mathbb{N}$ such that $\widehat{P}(i)(1-\epsilon)^i \le P(i) \le \frac{1}{1 + C (1-\epsilon)^{k-i}}$;

    (iii) the expected sample times of $e_i$, $\mathbb{E}[n_i]$, satisfies $0 < \mathbb{E}[n_i] \le \frac{N}{1+C(1-\epsilon)^{k-i}} < N, k\in\mathbb{N}, C>0$.
\end{theorem}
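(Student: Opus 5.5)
We work directly from the definition \eqref{eq:fadedprioritizedexperiencereplay}. Throughout, write $w_j = |\delta(j)|^\alpha(1-\epsilon)^j$ and $W=\sum_j w_j$, so that $P(i)=w_i/W$. We assume $|\delta(j)|>0$ for the relevant indices, which is implicit since $\kappa$ is set to $0$; otherwise some probabilities vanish and the strict inequalities must be read on the support.

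\textbf{Part (i).} This is immediate. If $|\delta(i_1)|=|\delta(i_2)|>0$ and $i_1<i_2$, then $(1-\epsilon)^{i_1}>(1-\epsilon)^{i_2}$ because $0<1-\epsilon<1$. Hence $w_{i_1}>w_{i_2}$, and dividing by the common normalizer $W$ gives $P(i_1)>P(i_2)$.

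\textbf{Part (ii), lower bound.} Since $(1-\epsilon)^j\le 1$ for all $j\ge 0$, we have $W\le \sum_j|\delta(j)|^\alpha$. Therefore
\[
P(i)=\frac{|\delta(i)|^\alpha(1-\epsilon)^i}{W}\ge \frac{|\delta(i)|^\alpha(1-\epsilon)^i}{\sum_j|\delta(j)|^\alpha}=\widehat P(i)(1-\epsilon)^i .
\]

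\textbf{Part (ii), upper bound.} Write $P(i)=1/(1+\sum_{j\neq i}w_j/w_i)$ and drop all but one term of the sum. Pick any index $k\neq i$ (the buffer has at least two transitions) and keep only the $j=k$ term. Then
\[
P(i)\le \frac{1}{1+w_k/w_i}=\frac{1}{1+\frac{|\delta(k)|^\alpha}{|\delta(i)|^\alpha}(1-\epsilon)^{k-i}} ,
\]
so the bound holds with $C=|\delta(k)|^\alpha/|\delta(i)|^\alpha>0$ and this $k\in\mathbb{N}$. Note that $C$ and $k$ may depend on $i$, which the existential phrasing allows. A natural concrete choice is $k=0$, the newest transition, or the argmax of $w_j$ over $j\neq i$ for the tightest single-term bound.

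\textbf{Part (iii).} Model a batch as $N$ independent draws with replacement from $P$. Then $n_i\sim\mathrm{Binomial}(N,P(i))$, so $\mathbb{E}[n_i]=N P(i)$. Positivity follows from $P(i)>0$, since $w_i>0$. The middle inequality is $N$ times the upper bound from (ii). The final strict inequality holds because $C(1-\epsilon)^{k-i}>0$, which makes the denominator exceed $1$.

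\textbf{Main obstacle.} The only delicate point is interpreting the existential constants in (ii)–(iii): the bound is only meaningful once one accepts that $C$ and $k$ may depend on $i$ and on the TD errors. Strictness also needs nonzero TD errors and a buffer of size at least two. I would state these conventions explicitly; after that, each part is a one-line manipulation.
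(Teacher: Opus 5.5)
Your proposal is correct and follows the paper's proof essentially step by step. Part (i) compares numerators over the common normalizer. The lower bound in (ii) uses $(1-\epsilon)^j\le 1$ in the denominator, and the upper bound keeps only the $i$-th term and one other term $k$ of the normalizer, with $C=|\delta(k)|^\alpha/|\delta(i)|^\alpha$. Part (iii) follows from $\mathbb{E}[n_i]=NP(i)$. Your explicit conventions (nonzero TD errors, $|\mathcal{D}|\ge 2$, and $C,k$ allowed to depend on $i$) are exactly what the paper relies on implicitly. It handles $|\delta(i)|=0$ separately in (ii), but it tacitly needs $|\delta(i)|>0$ for the strict positivity in (iii).
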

The above theorem states that if the two transitions have identical TD errors, the sampling probability of the older experience is strictly lower. Moreover, the sampling probability for any transition $e_i$ under the faded PER can be connected with its sampling probability under PER. Furthermore, the expected sampling times of old experiences are bounded and lie within a constant range. Theorem \ref{theo:fadedexperiencereplay} sheds light on adopting the faded PER in practice.

\subsection{Algorithm}

Given the insights above, we propose our empirical algorithm, DR.Q. It mainly debiases existing model-based representation methods from two aspects: (i) incorporates an auxiliary loss for maximizing the mutual information between state-action representations and the next state representation to ensure that the learned representations are sufficiently informative and expressive; (ii) combines PER and the forget mechanism such that most valuable samples are most frequently used while avoiding overfitting to old experiences. The full pseudo-code for DR.Q is deferred to Appendix \ref{sec:pseudocode}.

\subsubsection{Encoder Training}

The encoders are responsible for modeling the latent dynamics models of the environment, which involve the state encoder $f_\omega(s)$ that outputs the state representation $z_s$, the state-action encoder $g_\omega(z_s,a)$ that produces the state-action representation $z_{sa}$, where $\omega$ is the network parameter, and the linear MDP predictor $M(z_{sa})$ that predicts the next state representation and the reward signal, i.e., 
\begin{equation}
    \label{eq:drqlatentdynamicsmodel}
    \hat{r}, \hat{z}_{s^\prime} = M(z_{sa}), \quad z_{sa} = g_\omega(z_s,a),\quad z_s = f_\omega(s).
\end{equation}
where $\hat{z}_{s^\prime}$ is the predicted next state representation. We do not predict the done flags since we empirically find that removing this term has no effect on representation learning or policy learning. The encoder loss of DR.Q is composed of three key terms: the reward loss, the latent dynamics consistency loss, and the mutual information loss.

\textbf{Reward loss.} Following MR.Q \citep{fujimoto2025towards}, we use a two-hot encoding of the reward, which can be robust to reward magnitude and more effective when dealing with sparse rewards. The locations in the two-hot encoding are determined by the function ${\rm symexp}(x) = {\rm sign}(x)(\exp(x)-1)$ \citep{hafner2023mastering}, resulting in non-uniform intervals. The reward loss is computed based on the cross entropy loss between the two-hot encoding of the true reward $r$ and the predicted reward $\hat{r}$, i.e.,
\begin{equation}
    \label{eq:drqreward}
    \mathcal{L}_{\rm reward}(\hat{r},r) = \mathrm{CE}(\hat{r}, {\rm TwoHot}(r)).
\end{equation}
\textbf{Latent dynamics consistency loss.} We fulfill the latent dynamics consistency by minimizing the MSE between the next state representation $\tilde{z}_{s^\prime}$ and the predicted next state representation $\hat{z}_{s^\prime}$, i.e.,
\begin{equation}
    \label{eq:drqlatentconsistency}
    \mathcal{L}_{\rm dynamics}(\hat{z}_{s^\prime},\tilde{z}_{s^\prime}) = \mathbb{E}[(\hat{z}_{s^\prime} - \texttt{SG}(\tilde{z}_{s^\prime}))^2],
\end{equation}
where $\texttt{SG}$ is the stop gradient operator, $\tilde{z}_{s^\prime}$ is produced by the target state encoder network $f_{\omega^\prime}$ parameterized by $\omega^\prime$. We use the target state encoder for better stability. Note that the above loss does not contradict with our previous analysis (i.e., minimize the deviation between $z_{sa}$ and $z_{s^\prime}$) since $\hat{z}_{s^\prime}$ is a linear mapping of the state-action representation $z_{sa}$. Such a mapping is necessary to ensure that the dimensions of $\tilde{z}_{s^\prime}$ and $\hat{z}_{s^\prime}$ are identical.

\textbf{Mutual information loss.} This loss is the core component of DR.Q. Unfortunately, it is intractable to directly calculate the mutual information between the next state representation $z_{s^\prime}$ and its predicted counterpart $\hat{z}_{s^\prime}$, especially in high-dimensional spaces. In lieu of infeasible integrals, a common alternative is to optimize the InfoNCE loss \citep{oord2018representation,poole2019variational,wu2020mutual,Tschannen2020On,lu2023fmicl,chen2020simple}, which serves as a lower bound of the mutual information, i.e., $I(X;Y)\ge \log (N) - \mathcal{L}_{\rm InfoNCE}$ for some variables $X,Y$, where $N$ is the sample size. That being said, maximizing the mutual information term is equivalent to minimizing the InfoNCE loss. For any sample $e_i$ in the sampled batch with size $N$, we treat all other samples in the batch as negative samples and measure the cosine similarity between $\hat{z}_{s^\prime}$ and $\tilde{z}_{s^\prime}$. Formally, the InfoNCE loss is computed via:
\begin{equation}
    \label{eq:infonceloss}
    \mathcal{L}_{\rm I}(\hat{z}_{s^\prime},\tilde{z}_{s^\prime}) = - \dfrac{1}{N}\sum_{i=1}^{N}\log\left[ \dfrac{\exp(\cos(\hat{z}_{s^\prime_i}, \tilde{z}_{s^\prime_i})/\tau)}{\sum_{k=1}^N \exp(\cos(\hat{z}_{s^\prime_i}, \tilde{z}_{s^\prime_k})/\tau)} \right],
\end{equation}
where $\cos(X,Y) = \frac{X\cdot Y}{\|X\|\|Y\|}$ is the cosine similarity measure, $\tau$ is the temperature coefficient. 

\textbf{Overall encoder loss.} Following prior works \citep{fujimoto2025towards,hafner2023mastering,hansen2022temporal,hansen2024tdmpc}, we train the encoders by performing a short-horizon rollout of the learned dynamics model, which has been shown to be effective in quickly adapting to the dynamics of the environment. Specifically, we leverage a subsequence of experience $(s_0,a_0,r_1,s_1,a_1,\ldots,r_H,s_H)$ with a horizon $H$ to generate predictions and representations. The process begins by encoding the initial state $s_0$ followed by the iterative application of the state-action encoder, state encoder, and the MDP predictor (following Equation \ref{eq:drqlatentdynamicsmodel}) to produce components like the state-action representation, the predicted reward signal, etc. Consequently, the overall encoder loss of DR.Q is a combination of Equations \ref{eq:drqreward}, \ref{eq:drqlatentconsistency}, and \ref{eq:infonceloss} summed over the unrolled model,
\begin{align}
    \label{eq:drqencoder}
        \mathcal{L}_{\rm enc}^{\rm DR.Q} = \sum_{t=1}^H &\lambda_r \mathcal{L}_{\rm reward}(\hat{r}_t,r_t) + \lambda_d \mathcal{L}_{\rm dynamics}(\hat{z}_{s^\prime,t}, \tilde{z}_{s^\prime,t}) \nonumber \\
            &+ \lambda_m \mathcal{L}_{\rm I}(\hat{z}_{s^\prime,t},\tilde{z}_{s^\prime,t}),
\end{align}
where the subscript $t$ denotes the $t$-th step in the horizon, $\lambda_r,\lambda_d,\lambda_m$ are hyperparameters that balance the influence of each component. The target state encoder network $f_{\omega^\prime}$ is periodically updated every $T$ environmental steps.

\textbf{Sampling strategy.} Another core component in DR.Q is the faded PER, which combines PER and the forget mechanism to reduce the chances of overfitting to old (bad) experiences and sampling less important recent experiences. Empirically, we adopt LAP \citep{fujimoto2020equivalence}, an improved version of PER that removes the unnecessary importance sampling ratio terms and sets the minimum priority to be 1. Note that replacing PER with LAP does not affect our theoretical results in Theorem \ref{theo:fadedexperiencereplay}. Moreover, simply adopting the exponential decay in Equation \ref{eq:fadedprioritizedexperiencereplay} can quickly decrease the sampling weight of the transition, even if the transition has a large TD error. This can incur a risk of neglecting past valuable transitions. We hence clip the forgetting weight with a threshold $\epsilon_{\rm low}$. Eventually, for any transition $e_i$, its sampling probability in DR.Q follows:
\begin{equation}
    \label{eq:drqfadedper}
    P(i) = \dfrac{\max(|\delta(i)|^\alpha, 1) \cdot \max(\epsilon_{\rm low}, (1-\epsilon)^i)}{\sum_j\max(|\delta(j)|^\alpha, 1)\cdot \max(\epsilon_{\rm low}, (1-\epsilon)^j)}.
\end{equation}
\subsubsection{Actor-Critic Training}

For downstream actor-critic training, we train a deterministic policy network $\pi_\phi$ parameterized by $\phi$ along with its target network $\pi_{\phi^\prime}$, and two critic networks $Q_{\theta_i}(z_{sa})$ in conjunction with their target networks $Q_{\theta^\prime_i}(z_{sa}), i\in\{1,2\}$.

\textbf{Actor.} To enhance the exploration ability of the agent, we add a Gaussian noise $\psi\sim\mathcal{N}(0,\sigma^2)$ to the action:
\begin{equation}
    \label{eq:drqactionselection}
    a_\pi = {\rm clip}(a^\prime,-1,1), a^\prime = \pi_{\phi^\prime}(z_s) + {\rm clip}(\psi, -c,c).
\end{equation}
The actor objective function then gives:
\begin{equation}
    \label{eq:drqactorloss}
    \mathcal{L}_{\rm actor} = -\dfrac{1}{2}\sum_{i=1,2} Q_{\theta_i}(z_{sa_\pi}), \quad z_{sa_\pi} = g_\omega(z_s,a_\pi).
\end{equation}
\textbf{Critics.} DR.Q adopts the clipped double Q-learning approach (CDQ) \citep{fujimoto2018addressing} to mitigate the function approximation issue. CDQ constructs the target value $y$ by taking the minimum value between the outputs of the two target critic networks. Following prior works \citep{hessel2018rainbow,hansen2022temporal,fujimoto2025towards}, we estimate the multi-step return of the critics over a horizon $H_Q$. Furthermore, LAP suggests employing Huber loss instead of the MSE loss to counter the bias from prioritized sampling. Finally, the loss function for critics is given by:
\begin{equation}
    \label{eq:drqcriticloss}
    \mathcal{L}_{\rm critics} = {\rm Huber}\left(Q_{\theta_i}, \sum_{t=0}^{H_Q-1} \gamma^t r_t + \gamma^{H_Q} \min_{j=1,2} Q_{\theta_j^\prime} \right),
\end{equation}
where $Q_{\theta_i} = Q_{\theta_i}(z_{s_0 a_0}), i\in\{1,2\}$, and $\min_j Q_{\theta_j^\prime} = \min_j Q_{\theta_j^\prime}(z_{s_{H_Q} a_{H_Q},\pi})$. The target networks of both critics and the actor are periodically copied from their corresponding current networks. Note that DR.Q does not include components such as normalizing target values or input states, parameter reset, regularizing hidden embeddings, etc. We find that the above objective functions are sufficient to achieve strong performance across different benchmarks, making DR.Q simple, clean, and easy to implement.

\begin{figure*}[!ht]
    \centering
    \includegraphics[width=0.96\linewidth]{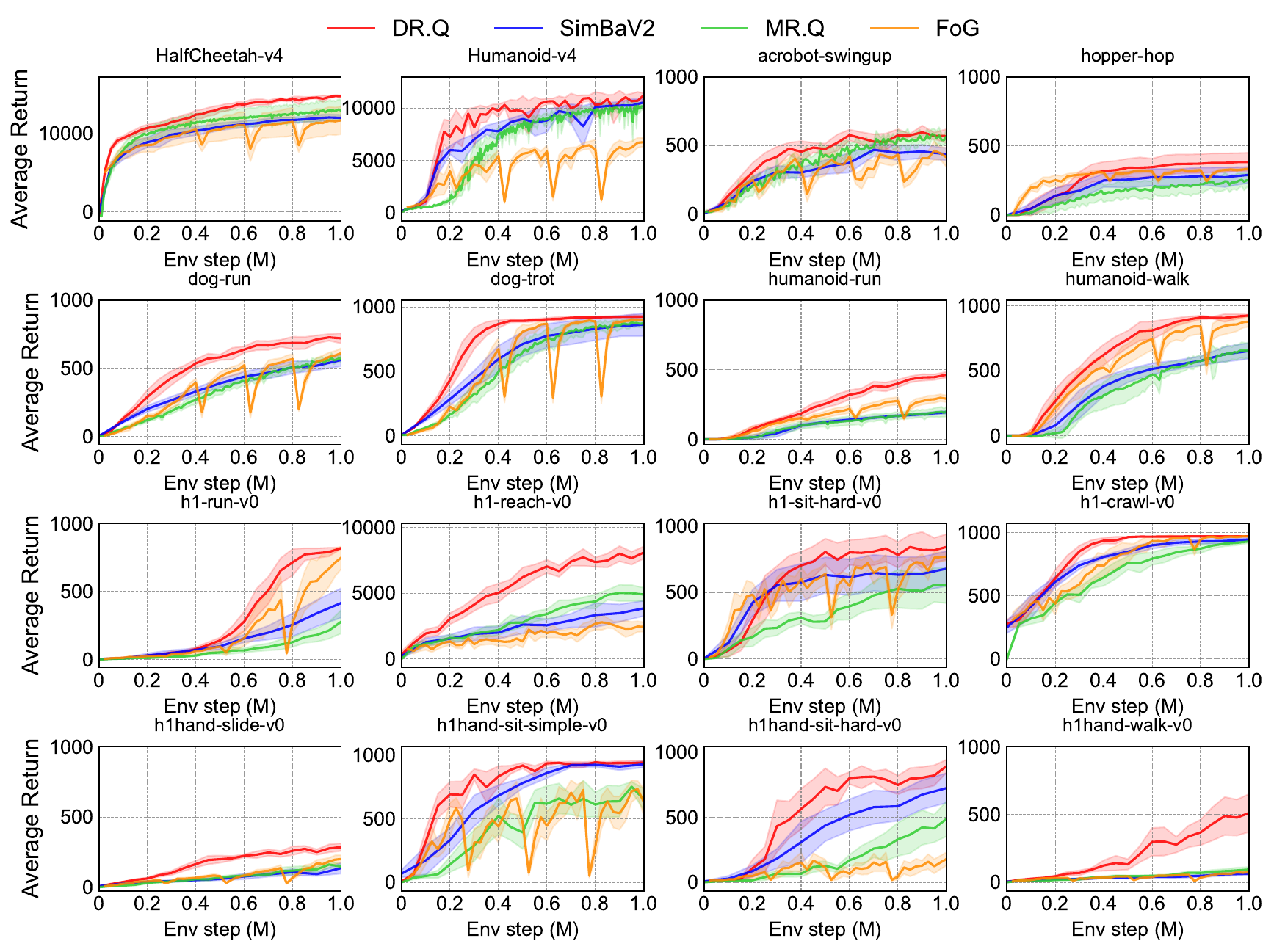}
    \caption{\textbf{Sample efficiency comparison.} We select 16 representative tasks out of 73 tasks. All results are averaged across 10 random seeds. The solid line denotes the average return and the shaded region indicates the 95\% confidence interval.}
    \label{fig:mainexperiments}
\end{figure*}

\vspace{-4pt}
\section{Experiment}
\vspace{-4pt}
\label{sec:experiment}

To comprehensively evaluate the performance of DR.Q, we conduct experiments on three standard continuous control benchmarks, MuJoCo \citep{todorov2012mujoco}, DMC suite \citep{tassa2018deepmind} (including both proprioceptive control and visual control tasks), and HumanoidBench \citep{sferrazza2024humanoidbench}, resulting in a total of 73 tasks. These tasks feature varying complexities, spanning from simple, low-dimensional tasks to complex, high-dimensional locomotion tasks. For DMC tasks with vector inputs, we categorize them into DMC-Hard (7 \texttt{dog} and \texttt{humanoid} tasks) and DMC-Easy (21 tasks). We consider HumanoidBench tasks with or without dexterous hands (14 tasks each).

To better aggregate results across tasks with distinct reward structures, we normalize the performance of agents. MuJoCo results are normalized by TD3 scores \citep{fujimoto2018addressing}, DMC results are normalized by dividing 1000, and HumanoidBench results are normalized by the task success scores. Agents are trained for 1M steps on MuJoCo tasks, and 500K steps on other tasks (equivalent to 1M frames in the raw environment due to an action repeat 2). Across all environments and benchmarks, we use a single set of hyperparameters without any algorithmic changes for DR.Q, to show its generality and effectiveness. The detailed hyperparameter setup is deferred to Appendix \ref{sec:hyperparameters}.

\begin{figure*}[!h]
    \centering
    \includegraphics[width=0.96\linewidth]{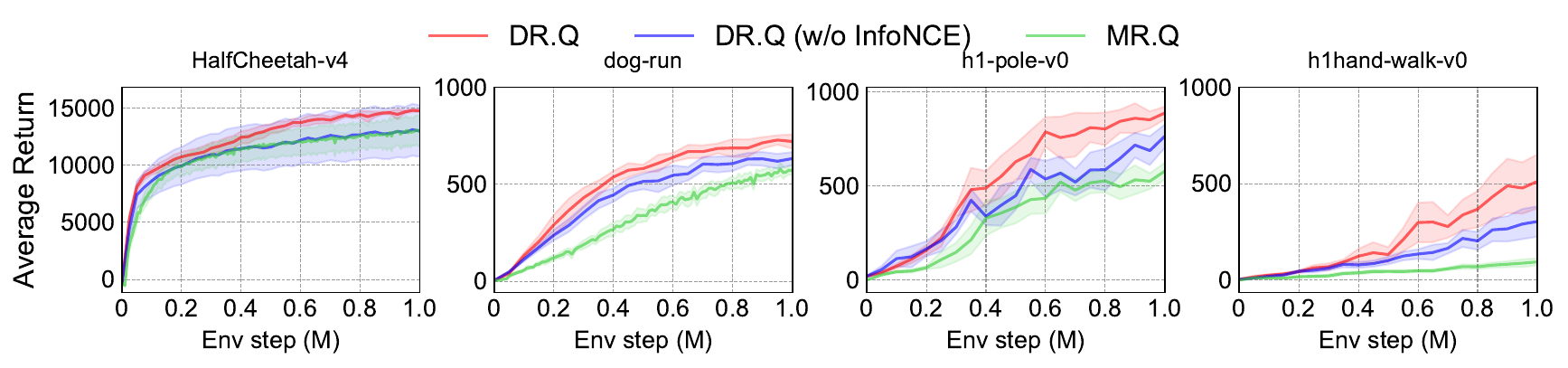}
    \includegraphics[width=0.96\linewidth]{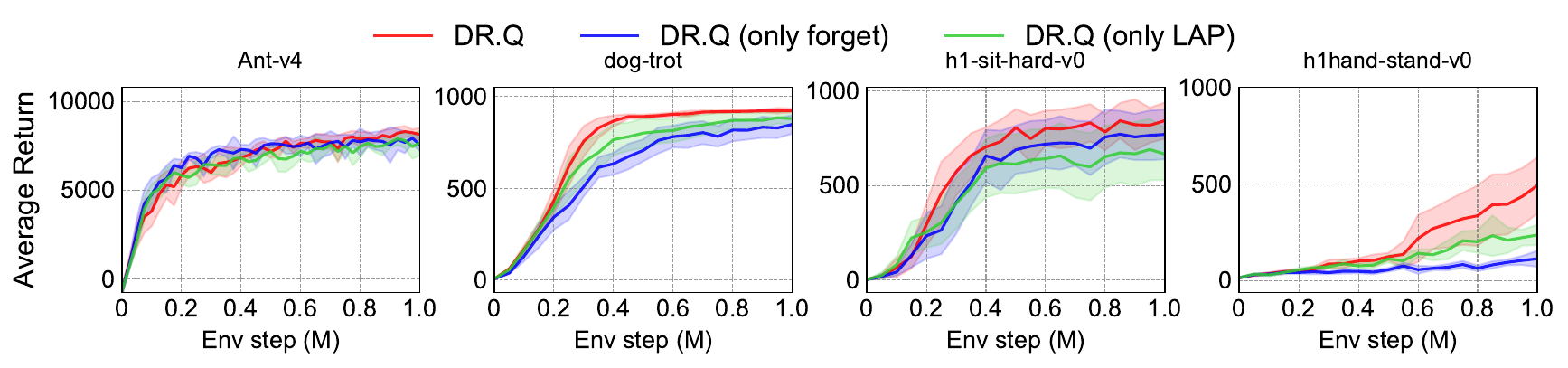}
    \caption{\textbf{Ablation study} on InfoNCE loss (Top) and sampling strategies (bottom). We adopt 4 representative tasks from different domains. The solid line denotes the average return across 10 seeds, and the shaded regions are the 95\% confidence intervals.}
    \label{fig:ablationmain}
\end{figure*}

\subsection{Main Results}

\textbf{Baselines.} For each benchmark, we consider representative or domain-specific strong baselines for comparison. This covers a wide range of model-free and model-based deep RL algorithms, including PPO \citep{schulman2017proximal}, DroQ \citep{Hiraoka2021DropoutQF}, TD7 \citep{fujimoto2023for}, REDQ \citep{chen2021randomized}, DreamerV3 \citep{hafner2023mastering}, TDMPC2 \citep{hansen2024tdmpc}, CrossQ \citep{bhatt2024crossq}, BRO \citep{nauman2024bigger}, MAD-TD \citep{voelcker2025madtd}, SimBa \citep{lee2025simba}, SimBaV2 \citep{Lee2025HypersphericalNF}, MR.Q \citep{fujimoto2025towards}, FoG \citep{kang2025a}, etc. We report the baseline results from previous works or the original paper if available. For a consistent comparison of sample efficiency and asymptotic performance, we further run representative baselines like MR.Q and SimBaV2 on tasks that are not covered in their original papers, using the authors' official code and the suggested default hyperparameters. We evaluate DR.Q across 10 random seeds.

\textbf{Performance summary.} To establish a concise and meaningful comparison, we select some leading domain-specific or general algorithms in each benchmark and summarize their normalized average return against DR.Q in Figure \ref{fig:benchmarksummary}. DR.Q comprehensively matches or outperforms baseline methods in different domains, often by a substantial margin. Key improvements include a \textbf{15.5\%} gain over SimBaV2 on DMC-Hard tasks, a \textbf{58.9\%} improvement against FoG on HumanoidBench (w/ hand) tasks, and a \textbf{26.8\%} lead over MR.Q on DMC-Visual tasks. Notably, DR.Q consistently outperforms MR.Q across all benchmarks. These results highlight the broad applicability and effectiveness of DR.Q across diverse continuous control tasks. Full results and learning curves of DR.Q are available in Appendix \ref{sec:fullmainresults}.

\textbf{Sample efficiency.} We summarize in Figure \ref{fig:mainexperiments} the sample efficiency comparison of DR.Q against strong baselines like MR.Q and SimBaV2. Note that some baselines like BRO are omitted due to a lack of logged results on partial tasks. We deem it acceptable, as the included baselines have been shown to exhibit higher sample efficiency than the omitted ones. As depicted, DR.Q achieves remarkably better sample efficiency on numerous challenging tasks, sometimes surpassing baselines by a large margin. To the best of our knowledge, DR.Q is the first that achieves an average return that exceeds 700 on the challenging \texttt{dog-run} task under 1M environment steps.

\vspace{-4pt}
\subsection{Ablation Study}
\vspace{-4pt}

In this part, we investigate the design choices of DR.Q by conducting experiments on selected tasks. Results in wider environments can be found in Appendix \ref{sec:extendedablationstudy}, where we also ablate the necessity of the latent dynamics consistency loss. We further show the effectiveness of the mutual information loss and provide representation visualization results of DR.Q and MR.Q in Appendix \ref{sec:mutualinformationloss}.

\noindent\textbf{The InfoNCE loss.} One of the key designs in DR.Q is the InfoNCE loss, which ensures that the learned representations can better encode the latent dynamics knowledge. To evaluate its effectiveness, we exclude its contribution by setting $\lambda_m=0$. Results in Figure \ref{fig:ablationmain} (top) show that removing the InfoNCE loss generally incurs inferior performance, especially on some high-dimensional HumanoidBench tasks where the state space may contain redundant information (e.g., dexterous hands), making the agent struggle to extract useful patterns from the proprioceptive input. Since DR.Q enjoys a similar way of learning model-based representations as MR.Q, its performance is at least as competitive as MR.Q, even without the InfoNCE loss term.

\noindent\textbf{Faded PER.} Another important component in DR.Q is the faded PER, which assigns sampling probability based on both the TD error and the sampling weight decay. To demonstrate its importance, we consider the following variants of DR.Q, DR.Q (only forget) that merely adopts the forget mechanism for replaying experiences, and DR.Q (only LAP) which removes the forget mechanism. Since the superiority of the forget mechanism over (corrected) uniform sampling \citep{yenicesu2024cuer} is already established in prior work \citep{kang2025a}, we exclude these baselines from our comparison. Empirical results in Figure \ref{fig:ablationmain} (bottom) show that either excluding PER or the forget mechanism can result in a risk of degrading the performance and sample efficiency of the agent, while combining them can bring maximum performance gains across all evaluated tasks.

\vspace{-4pt}
\section{Conclusion}
\vspace{-4pt}

This paper proposes DR.Q, a general off-policy RL algorithm that trains model-based representations for mastering diverse continuous control tasks with a single set of hyperparameters. DR.Q debiases existing model-based representation methods by (i) introducing the InfoNCE loss to maximize the mutual information between the state-action representation and the next state representation, apart from minimizing their deviations; (ii) ensuring that recent and important transitions are more frequently sampled by combining the prioritized experience replay approach and the forget mechanism. Empirical results across 73 continuous control tasks from three standard benchmarks show that DR.Q achieves competitive or better performance compared to prior strong model-free or model-based baselines. DR.Q provides a concrete step towards building high-performing and general model-free algorithms. For future work, one can evaluate DR.Q on discrete action tasks like Atari, find better paradigms for learning model-based representations, or build a stronger model-free RL algorithm based on DR.Q.

\noindent\textbf{Limitations.} Despite the strong performance of DR.Q on numerous challenging tasks, its performance on tasks like \texttt{Hopper-v4} is inferior, which can be a side effect of adopting unified hyperparameters across all benchmarks. DR.Q also fails on challenging visual DMC tasks like \texttt{humanoid-run}, while we notice that all methods fail to achieve meaningful scores on this task. DrQv2 requires 15M environment steps to achieve meaningful performance on \texttt{visual-humanoid-run}, while we only run DR.Q and baselines for 1M steps. The encoders may not capture good representations for downstream policy and critic learning within such a limited budget. DR.Q introduces InfoNCE loss and Faded PER, which may introduce extra computation overhead, but it remains more efficient than baselines like SimBaV2 or FoG because it maintains a Replay Ratio (UTD) of 1. Faded PER only requires storing a 1D forget weight array. InfoNCE computation is minimal since the batch size and representation dimensions are not large. Akin to MR.Q, DR.Q is not designed for hard exploration tasks or non-Markovian tasks, and it may fail on those tasks. Furthermore, we do not evaluate DR.Q on discrete action tasks like Atari, since we set our focus on continuous control tasks and it is very expensive to run those experiments. 

\section*{Acknowledgments}

This work was supported by NSFC in part under Grant 62450001 and 62476008. The authors would like to thank the anonymous reviewers for their valuable comments and advice.

\section*{Impact Statement}

This paper presents work whose goal is to advance the field of Machine Learning. There are many potential societal consequences of our work, none of which we feel must be specifically highlighted here.

\bibliography{example_paper}

@article{mnih2015human,
  title={Human-level control through deep reinforcement learning},
  author={Mnih, Volodymyr and Kavukcuoglu, Koray and Silver, David and Rusu, Andrei A and Veness, Joel and Bellemare, Marc G and Graves, Alex and Riedmiller, Martin and Fidjeland, Andreas K and Ostrovski, Georg and others},
  journal={nature},
  volume={518},
  number={7540},
  pages={529--533},
  year={2015},
  publisher={Nature Publishing Group}
}

@article{hafner2023mastering,
  title={Mastering diverse domains through world models},
  author={Hafner, Danijar and Pasukonis, Jurgis and Ba, Jimmy and Lillicrap, Timothy},
  journal={arXiv preprint arXiv:2301.04104},
  year={2023}
}

@inproceedings{lyu2022efficient,
  title={Efficient continuous control with double actors and regularized critics},
  author={Lyu, Jiafei and Ma, Xiaoteng and Yan, Jiangpeng and Li, Xiu},
  booktitle={Proceedings of the AAAI Conference on Artificial Intelligence},
  pages={7655--7663},
  year={2022}
}

@article{Lee2025HypersphericalNF,
  title={Hyperspherical Normalization for Scalable Deep Reinforcement Learning},
  author={Hojoon Lee and Youngdo Lee and Takuma Seno and Donghu Kim and Peter Stone and Jaegul Choo},
  journal={ArXiv},
  year={2025},
  volume={abs/2502.15280},
}

@inproceedings{bhatt2024crossq,
  title={CrossQ: Batch Normalization in Deep Reinforcement Learning for Greater Sample Efficiency and Simplicity},
  author={Aditya Bhatt and Daniel Palenicek and Boris Belousov and Max Argus and Artemij Amiranashvili and Thomas Brox and Jan Peters},
  booktitle={The Twelfth International Conference on Learning Representations},
  year={2024},
  url={https://openreview.net/forum?id=PczQtTsTIX}
}

@inproceedings{lee2025simba,
  title={SimBa: Simplicity Bias for Scaling Up Parameters in Deep Reinforcement Learning},
  author={Hojoon Lee and Dongyoon Hwang and Donghu Kim and Hyunseung Kim and Jun Jet Tai and Kaushik Subramanian and Peter R. Wurman and Jaegul Choo and Peter Stone and Takuma Seno},
  booktitle={The Thirteenth International Conference on Learning Representations},
  year={2025},
  url={https://openreview.net/forum?id=jXLiDKsuDo}
}

@article{Hiraoka2021DropoutQF,
  title={Dropout Q-Functions for Doubly Efficient Reinforcement Learning},
  author={Takuya Hiraoka and Takahisa Imagawa and Taisei Hashimoto and Takashi Onishi and Yoshimasa Tsuruoka},
  journal={ArXiv},
  year={2021},
  volume={abs/2110.02034},
}

@inproceedings{nauman2024bigger,
  title={Bigger, Regularized, Optimistic: scaling for compute and sample efficient continuous control},
  author={Michal Nauman and Mateusz Ostaszewski and Krzysztof Jankowski and Piotr Mi{\l}o{\'s} and Marek Cygan},
  booktitle={The Thirty-eighth Annual Conference on Neural Information Processing Systems},
  year={2024},
  url={https://openreview.net/forum?id=fu0xdh4aEJ}
}

@inproceedings{Nikishin2022ThePB,
  title={The Primacy Bias in Deep Reinforcement Learning},
  author={Evgenii Nikishin and Max Schwarzer and Pierluca D'Oro and Pierre-Luc Bacon and Aaron C. Courville},
  booktitle={International Conference on Machine Learning},
  year={2022},
}

@article{ladosz2022exploration,
  title={Exploration in deep reinforcement learning: A survey},
  author={Ladosz, Pawel and Weng, Lilian and Kim, Minwoo and Oh, Hyondong},
  journal={Information Fusion},
  volume={85},
  pages={1--22},
  year={2022},
  publisher={Elsevier}
}

@article{burda2018exploration,
  title={Exploration by random network distillation},
  author={Burda, Yuri and Edwards, Harrison and Storkey, Amos and Klimov, Oleg},
  journal={arXiv preprint arXiv:1810.12894},
  year={2018}
}

@inproceedings{yang2024exploration,
  title={Exploration and Anti-Exploration with Distributional Random Network Distillation},
  author={Kai Yang and Jian Tao and Jiafei Lyu and Xiu Li},
  booktitle={Forty-first International Conference on Machine Learning},
  year={2024},
  url={https://openreview.net/forum?id=rIrpzmqRBk}
}

@inproceedings{jiang2025episodic,
  title={Episodic Novelty Through Temporal Distance},
  author={Yuhua Jiang and Qihan Liu and Yiqin Yang and Xiaoteng Ma and Dianyu Zhong and Hao Hu and Jun Yang and Bin Liang and Bo XU and Chongjie Zhang and Qianchuan Zhao},
  booktitle={The Thirteenth International Conference on Learning Representations},
  year={2025},
  url={https://openreview.net/forum?id=I7DeajDEx7}
}

@inproceedings{fujimoto2018addressing,
  title={Addressing function approximation error in actor-critic methods},
  author={Fujimoto, Scott and Hoof, Herke and Meger, David},
  booktitle={International Conference on Machine Learning},
  pages={1587--1596},
  year={2018},
  organization={PMLR}
}

@inproceedings{kuznetsov2020controlling,
  title={Controlling overestimation bias with truncated mixture of continuous distributional quantile critics},
  author={Kuznetsov, Arsenii and Shvechikov, Pavel and Grishin, Alexander and Vetrov, Dmitry},
  booktitle={International Conference on Machine Learning},
  pages={5556--5566},
  year={2020},
  organization={PMLR}
}

@inproceedings{van2016deep,
  title={Deep reinforcement learning with double q-learning},
  author={Van Hasselt, Hado and Guez, Arthur and Silver, David},
  booktitle={Proceedings of the AAAI conference on artificial intelligence},
  year={2016}
}

@article{haarnoja2018soft,
  title={Soft actor-critic algorithms and applications},
  author={Haarnoja, Tuomas and Zhou, Aurick and Hartikainen, Kristian and Tucker, George and Ha, Sehoon and Tan, Jie and Kumar, Vikash and Zhu, Henry and Gupta, Abhishek and Abbeel, Pieter and others},
  journal={arXiv preprint arXiv:1812.05905},
  year={2018}
}

@article{moskovitz2021tactical,
  title={Tactical optimism and pessimism for deep reinforcement learning},
  author={Moskovitz, Ted and Parker-Holder, Jack and Pacchiano, Aldo and Arbel, Michael and Jordan, Michael},
  journal={Advances in Neural Information Processing Systems},
  volume={34},
  pages={12849--12863},
  year={2021}
}

@article{fujimoto2020equivalence,
  title={An equivalence between loss functions and non-uniform sampling in experience replay},
  author={Fujimoto, Scott and Meger, David and Precup, Doina},
  journal={Advances in neural information processing systems},
  volume={33},
  pages={14219--14230},
  year={2020}
}

@article{schaul2015prioritized,
  title={Prioritized experience replay},
  author={Schaul, Tom and Quan, John and Antonoglou, Ioannis and Silver, David},
  journal={arXiv preprint arXiv:1511.05952},
  year={2015}
}

@article{andrychowicz2017hindsight,
  title={Hindsight experience replay},
  author={Andrychowicz, Marcin and Wolski, Filip and Ray, Alex and Schneider, Jonas and Fong, Rachel and Welinder, Peter and McGrew, Bob and Tobin, Josh and Pieter Abbeel, OpenAI and Zaremba, Wojciech},
  journal={Advances in neural information processing systems},
  volume={30},
  year={2017}
}

@inproceedings{chen2021randomized,
  title={Randomized Ensembled Double Q-Learning: Learning Fast Without a Model},
  author={Xinyue Chen and Che Wang and Zijian Zhou and Keith W. Ross},
  booktitle={International Conference on Learning Representations},
  year={2021},
  url={https://openreview.net/forum?id=AY8zfZm0tDd}
}

@article{romeo2025speq,
  title={SPEQ: Stabilization Phases for Efficient Q-Learning in High Update-To-Data Ratio Reinforcement Learning},
  author={Romeo, Carlo and Macaluso, Girolamo and Sestini, Alessandro and Bagdanov, Andrew D},
  journal={arXiv preprint arXiv:2501.08669},
  year={2025}
}

@inproceedings{doro2023sampleefficient,
  title={Sample-Efficient Reinforcement Learning by Breaking the Replay Ratio Barrier},
  author={Pierluca D'Oro and Max Schwarzer and Evgenii Nikishin and Pierre-Luc Bacon and Marc G Bellemare and Aaron Courville},
  booktitle={The Eleventh International Conference on Learning Representations },
  year={2023},
  url={https://openreview.net/forum?id=OpC-9aBBVJe}
}

@article{lyu2024off,
  title={Off-policy RL algorithms can be sample-efficient for continuous control via sample multiple reuse},
  author={Lyu, Jiafei and Wan, Le and Li, Xiu and Lu, Zongqing},
  journal={Information Sciences},
  volume={666},
  pages={120371},
  year={2024},
  publisher={Elsevier}
}

@article{lyu2023value,
  title={Value activation for bias alleviation: Generalized-activated deep double deterministic policy gradients},
  author={Lyu, Jiafei and Yang, Yu and Yan, Jiangpeng and Li, Xiu},
  journal={Neurocomputing},
  volume={518},
  pages={70--81},
  year={2023},
  publisher={Elsevier}
}

@inproceedings{fujimoto2023for,
  title={For {SALE}: State-Action Representation Learning for Deep Reinforcement Learning},
  author={Scott Fujimoto and Wei-Di Chang and Edward J. Smith and Shixiang Shane Gu and Doina Precup and David Meger},
  booktitle={Thirty-seventh Conference on Neural Information Processing Systems},
  year={2023},
  url={https://openreview.net/forum?id=xZvGrzRq17}
}

@inproceedings{fujimoto2025towards,
  title={Towards General-Purpose Model-Free Reinforcement Learning},
  author={Scott Fujimoto and Pierluca D'Oro and Amy Zhang and Yuandong Tian and Michael Rabbat},
  booktitle={The Thirteenth International Conference on Learning Representations},
  year={2025},
  url={https://openreview.net/forum?id=R1hIXdST22}
}

@inproceedings{ota2020can,
  title={Can increasing input dimensionality improve deep reinforcement learning?},
  author={Ota, Kei and Oiki, Tomoaki and Jha, Devesh and Mariyama, Toshisada and Nikovski, Daniel},
  booktitle={International conference on machine learning},
  pages={7424--7433},
  year={2020},
  organization={PMLR}
}

@inproceedings{laskin2020curl,
  title={Curl: Contrastive unsupervised representations for reinforcement learning},
  author={Laskin, Michael and Srinivas, Aravind and Abbeel, Pieter},
  booktitle={International conference on machine learning},
  pages={5639--5650},
  year={2020},
  organization={PMLR}
}

@article{yan2024enhancing,
  title={Enhancing visual reinforcement learning with State--Action Representation},
  author={Yan, Mengbei and Lyu, Jiafei and Li, Xiu},
  journal={Knowledge-Based Systems},
  volume={304},
  pages={112487},
  year={2024},
  publisher={Elsevier}
}

@inproceedings{hansen2022temporal,
  title={Temporal Difference Learning for Model Predictive Control},
  author={Hansen, Nicklas A and Su, Hao and Wang, Xiaolong},
  booktitle={International Conference on Machine Learning},
  pages={8387--8406},
  year={2022},
  organization={PMLR}
}

@inproceedings{hansen2024tdmpc,
title={{TD}-{MPC}2: Scalable, Robust World Models for Continuous Control},
author={Nicklas Hansen and Hao Su and Xiaolong Wang},
booktitle={The Twelfth International Conference on Learning Representations},
year={2024},
url={https://openreview.net/forum?id=Oxh5CstDJU}
}

@inproceedings{voelcker2025madtd,
title={{MAD}-{TD}: Model-Augmented Data stabilizes High Update Ratio {RL}},
author={Claas A Voelcker and Marcel Hussing and Eric Eaton and Amir-massoud Farahmand and Igor Gilitschenski},
booktitle={The Thirteenth International Conference on Learning Representations},
year={2025},
url={https://openreview.net/forum?id=6RtRsg8ZV1}
}

@article{still2012information,
  title={An information-theoretic approach to curiosity-driven reinforcement learning},
  author={Still, Susanne and Precup, Doina},
  journal={Theory in Biosciences},
  volume={131},
  number={3},
  pages={139--148},
  year={2012},
  publisher={Springer}
}

@article{janner2019trust,
  title={When to trust your model: Model-based policy optimization},
  author={Janner, Michael and Fu, Justin and Zhang, Marvin and Levine, Sergey},
  journal={Advances in neural information processing systems},
  volume={32},
  year={2019}
}

@article{buckman2018sample,
  title={Sample-efficient reinforcement learning with stochastic ensemble value expansion},
  author={Buckman, Jacob and Hafner, Danijar and Tucker, George and Brevdo, Eugene and Lee, Honglak},
  journal={Advances in neural information processing systems},
  volume={31},
  year={2018}
}

@article{tassa2018deepmind,
  title={Deepmind control suite},
  author={Tassa, Yuval and Doron, Yotam and Muldal, Alistair and Erez, Tom and Li, Yazhe and Casas, Diego de Las and Budden, David and Abdolmaleki, Abbas and Merel, Josh and Lefrancq, Andrew and others},
  journal={arXiv preprint arXiv:1801.00690},
  year={2018}
}

@article{sferrazza2024humanoidbench,
  title={Humanoidbench: Simulated humanoid benchmark for whole-body locomotion and manipulation},
  author={Sferrazza, Carmelo and Huang, Dun-Ming and Lin, Xingyu and Lee, Youngwoon and Abbeel, Pieter},
  journal={arXiv preprint arXiv:2403.10506},
  year={2024}
}

@article{agarwal2021deep,
  title={Deep reinforcement learning at the edge of the statistical precipice},
  author={Agarwal, Rishabh and Schwarzer, Max and Castro, Pablo Samuel and Courville, Aaron C and Bellemare, Marc},
  journal={Advances in neural information processing systems},
  volume={34},
  pages={29304--29320},
  year={2021}
}

@inproceedings{todorov2012mujoco,
  title={Mujoco: A physics engine for model-based control},
  author={Todorov, Emanuel and Erez, Tom and Tassa, Yuval},
  booktitle={2012 IEEE/RSJ international conference on intelligent robots and systems},
  pages={5026--5033},
  year={2012},
  organization={IEEE}
}

@article{yarats2021mastering,
  title={Mastering visual continuous control: Improved data-augmented reinforcement learning},
  author={Yarats, Denis and Fergus, Rob and Lazaric, Alessandro and Pinto, Lerrel},
  journal={arXiv preprint arXiv:2107.09645},
  year={2021}
}

@inproceedings{cetin2022stabilizing,
  title={Stabilizing Off-Policy Deep Reinforcement Learning from Pixels},
  author={Cetin, Edoardo and Ball, Philip J and Roberts, Stephen and Celiktutan, Oya},
  booktitle={International Conference on Machine Learning},
  pages={2784--2810},
  year={2022},
  organization={PMLR}
}

@inproceedings{van2016stable,
  title={Stable reinforcement learning with autoencoders for tactile and visual data},
  author={Van Hoof, Herke and Chen, Nutan and Karl, Maximilian and Van Der Smagt, Patrick and Peters, Jan},
  booktitle={2016 IEEE/RSJ international conference on intelligent robots and systems (IROS)},
  pages={3928--3934},
  year={2016},
  organization={IEEE}
}

@inproceedings{gelada2019deepmdp,
  title={Deepmdp: Learning continuous latent space models for representation learning},
  author={Gelada, Carles and Kumar, Saurabh and Buckman, Jacob and Nachum, Ofir and Bellemare, Marc G},
  booktitle={International conference on machine learning},
  pages={2170--2179},
  year={2019},
  organization={PMLR}
}

@article{lee2020stochastic,
  title={Stochastic latent actor-critic: Deep reinforcement learning with a latent variable model},
  author={Lee, Alex X and Nagabandi, Anusha and Abbeel, Pieter and Levine, Sergey},
  journal={Advances in Neural Information Processing Systems},
  volume={33},
  pages={741--752},
  year={2020}
}

@article{watter2015embed,
  title={Embed to control: A locally linear latent dynamics model for control from raw images},
  author={Watter, Manuel and Springenberg, Jost and Boedecker, Joschka and Riedmiller, Martin},
  journal={Advances in neural information processing systems},
  volume={28},
  year={2015}
}

@inproceedings{finn2016deep,
  title={Deep spatial autoencoders for visuomotor learning},
  author={Finn, Chelsea and Tan, Xin Yu and Duan, Yan and Darrell, Trevor and Levine, Sergey and Abbeel, Pieter},
  booktitle={2016 IEEE International Conference on Robotics and Automation (ICRA)},
  pages={512--519},
  year={2016},
  organization={IEEE}
}

@article{karl2016deep,
  title={Deep variational bayes filters: Unsupervised learning of state space models from raw data},
  author={Karl, Maximilian and Soelch, Maximilian and Bayer, Justin and Van der Smagt, Patrick},
  journal={arXiv preprint arXiv:1605.06432},
  year={2016}
}

@inproceedings{hafner2019learning,
  title={Learning latent dynamics for planning from pixels},
  author={Hafner, Danijar and Lillicrap, Timothy and Fischer, Ian and Villegas, Ruben and Ha, David and Lee, Honglak and Davidson, James},
  booktitle={International conference on machine learning},
  pages={2555--2565},
  year={2019},
  organization={PMLR}
}

@inproceedings{wang2024efficientzero,
title={EfficientZero V2: Mastering Discrete and Continuous Control with Limited Data},
author={Shengjie Wang and Shaohuai Liu and Weirui Ye and Jiacheng You and Yang Gao},
booktitle={Forty-first International Conference on Machine Learning},
year={2024},
url={https://openreview.net/forum?id=LHGMXcr6zx}
}

@article{bengio2013representation,
  title={Representation learning: A review and new perspectives},
  author={Bengio, Yoshua and Courville, Aaron and Vincent, Pascal},
  journal={IEEE transactions on pattern analysis and machine intelligence},
  volume={35},
  number={8},
  pages={1798--1828},
  year={2013},
  publisher={IEEE}
}

@article{lesort2018state,
  title={State representation learning for control: An overview},
  author={Lesort, Timoth{\'e}e and D{\'\i}az-Rodr{\'\i}guez, Natalia and Goudou, Jean-Franois and Filliat, David},
  journal={Neural Networks},
  volume={108},
  pages={379--392},
  year={2018},
  publisher={Elsevier}
}

@inproceedings{sun2024learning,
title={Learning Latent Dynamic Robust Representations for World Models},
author={Ruixiang Sun and Hongyu Zang and Xin Li and Riashat Islam},
booktitle={Forty-first International Conference on Machine Learning},
year={2024},
url={https://openreview.net/forum?id=C4jkx6AgWc}
}

@article{schrittwieser2020mastering,
  title={Mastering atari, go, chess and shogi by planning with a learned model},
  author={Schrittwieser, Julian and Antonoglou, Ioannis and Hubert, Thomas and Simonyan, Karen and Sifre, Laurent and Schmitt, Simon and Guez, Arthur and Lockhart, Edward and Hassabis, Demis and Graepel, Thore and others},
  journal={Nature},
  volume={588},
  number={7839},
  pages={604--609},
  year={2020},
  publisher={Nature Publishing Group UK London}
}

@article{schrittwieser2021online,
  title={Online and offline reinforcement learning by planning with a learned model},
  author={Schrittwieser, Julian and Hubert, Thomas and Mandhane, Amol and Barekatain, Mohammadamin and Antonoglou, Ioannis and Silver, David},
  journal={Advances in Neural Information Processing Systems},
  volume={34},
  pages={27580--27591},
  year={2021}
}

@inproceedings{zhang2019solar,
  title={Solar: Deep structured representations for model-based reinforcement learning},
  author={Zhang, Marvin and Vikram, Sharad and Smith, Laura and Abbeel, Pieter and Johnson, Matthew and Levine, Sergey},
  booktitle={International conference on machine learning},
  pages={7444--7453},
  year={2019},
  organization={PMLR}
}

@article{krinner2025accelerating,
  title={Accelerating model-based reinforcement learning with state-space world models},
  author={Krinner, Maria and Aljalbout, Elie and Romero, Angel and Scaramuzza, Davide},
  journal={arXiv preprint arXiv:2502.20168},
  year={2025}
}

@inproceedings{munk2016learning,
  title={Learning state representation for deep actor-critic control},
  author={Munk, Jelle and Kober, Jens and Babu{\v{s}}ka, Robert},
  booktitle={2016 IEEE 55th conference on decision and control (CDC)},
  pages={4667--4673},
  year={2016},
  organization={IEEE}
}

@article{zhang2018decoupling,
  title={Decoupling dynamics and reward for transfer learning},
  author={Zhang, Amy and Satija, Harsh and Pineau, Joelle},
  journal={arXiv preprint arXiv:1804.10689},
  year={2018}
}

@inproceedings{guo2020bootstrap,
  title={Bootstrap latent-predictive representations for multitask reinforcement learning},
  author={Guo, Zhaohan Daniel and Pires, Bernardo Avila and Piot, Bilal and Grill, Jean-Bastien and Altch{\'e}, Florent and Munos, R{\'e}mi and Azar, Mohammad Gheshlaghi},
  booktitle={International Conference on Machine Learning},
  pages={3875--3886},
  year={2020},
  organization={PMLR}
}

@article{guo2022byol,
  title={Byol-explore: Exploration by bootstrapped prediction},
  author={Guo, Zhaohan and Thakoor, Shantanu and P{\^\i}slar, Miruna and Avila Pires, Bernardo and Altch{\'e}, Florent and Tallec, Corentin and Saade, Alaa and Calandriello, Daniele and Grill, Jean-Bastien and Tang, Yunhao and others},
  journal={Advances in neural information processing systems},
  volume={35},
  pages={31855--31870},
  year={2022}
}

@article{schwarzer2020data,
  title={Data-efficient reinforcement learning with self-predictive representations},
  author={Schwarzer, Max and Anand, Ankesh and Goel, Rishab and Hjelm, R Devon and Courville, Aaron and Bachman, Philip},
  journal={arXiv preprint arXiv:2007.05929},
  year={2020}
}

@article{yu2022mask,
  title={Mask-based latent reconstruction for reinforcement learning},
  author={Yu, Tao and Zhang, Zhizheng and Lan, Cuiling and Lu, Yan and Chen, Zhibo},
  journal={Advances in Neural Information Processing Systems},
  volume={35},
  pages={25117--25131},
  year={2022}
}

@article{mcinroe2021learning,
  title={Learning temporally-consistent representations for data-efficient reinforcement learning},
  author={McInroe, Trevor and Sch{\"a}fer, Lukas and Albrecht, Stefano V},
  journal={arXiv preprint arXiv:2110.04935},
  year={2021}
}

@inproceedings{zhao2023simplified,
  title={Simplified temporal consistency reinforcement learning},
  author={Zhao, Yi and Zhao, Wenshuai and Boney, Rinu and Kannala, Juho and Pajarinen, Joni},
  booktitle={International Conference on Machine Learning},
  pages={42227--42246},
  year={2023},
  organization={PMLR}
}

@inproceedings{ni2024bridging,
title={Bridging State and History Representations: Understanding Self-Predictive {RL}},
author={Tianwei Ni and Benjamin Eysenbach and Erfan SeyedSalehi and Michel Ma and Clement Gehring and Aditya Mahajan and Pierre-Luc Bacon},
booktitle={The Twelfth International Conference on Learning Representations},
year={2024},
url={https://openreview.net/forum?id=ms0VgzSGF2}
}

@article{scannell2024iqrl,
  title={iQRL--Implicitly Quantized Representations for Sample-efficient Reinforcement Learning},
  author={Scannell, Aidan and Kujanp{\"a}{\"a}, Kalle and Zhao, Yi and Nakhaei, Mohammadreza and Solin, Arno and Pajarinen, Joni},
  journal={arXiv preprint arXiv:2406.02696},
  year={2024}
}

@inproceedings{scannell2024quantized,
  title={Quantized Representations Prevent Dimensional Collapse in Self-predictive RL},
  author={Scannell, Aidan and Kujanp{\"a}{\"a}, Kalle and Zhao, Yi and Nakhaeinezhadfard, Mohammadreza and Solin, Arno and Pajarinen, Joni},
  booktitle={ICML 2024 Workshop: Aligning Reinforcement Learning Experimentalists and Theorists},
  year={2024}
}

@article{bagatella2025td,
  title={TD-JEPA: Latent-predictive Representations for Zero-Shot Reinforcement Learning},
  author={Bagatella, Marco and Pirotta, Matteo and Touati, Ahmed and Lazaric, Alessandro and Tirinzoni, Andrea},
  journal={arXiv preprint arXiv:2510.00739},
  year={2025}
}

@inproceedings{wang2025offpolicy,
title={Off-policy Reinforcement Learning with Model-based Exploration Augmentation},
author={Likun Wang and Xiangteng Zhang and Yinuo Wang and Guojian Zhan and Wenxuan Wang and Haoyu Gao and Jingliang Duan and Shengbo Eben Li},
booktitle={The Thirty-ninth Annual Conference on Neural Information Processing Systems},
year={2025},
url={https://openreview.net/forum?id=JGkZgEEjiM}
}

@article{wang2025controllable,
  title={Controllable Flow Matching for Online Reinforcement Learning},
  author={Wang, Bin and Tao, Boxiang and Jing, Haifeng and Dou, Hongbo and Wang, Zijian},
  journal={arXiv preprint arXiv:2511.06816},
  year={2025}
}

@article{amigo2025first,
  title={First order model-based rl through decoupled backpropagation},
  author={Amigo, Joseph and Khorrambakht, Rooholla and Chane-Sane, Elliot and Mansard, Nicolas and Righetti, Ludovic},
  journal={arXiv preprint arXiv:2509.00215},
  year={2025}
}

@article{lyle2024normalization,
  title={Normalization and effective learning rates in reinforcement learning},
  author={Lyle, Clare and Zheng, Zeyu and Khetarpal, Khimya and Martens, James and van Hasselt, Hado P and Pascanu, Razvan and Dabney, Will},
  journal={Advances in Neural Information Processing Systems},
  volume={37},
  pages={106440--106473},
  year={2024}
}

@inproceedings{wang2020striving,
  title={Striving for simplicity and performance in off-policy DRL: Output normalization and non-uniform sampling},
  author={Wang, Che and Wu, Yanqiu and Vuong, Quan and Ross, Keith},
  booktitle={International Conference on Machine Learning},
  pages={10070--10080},
  year={2020},
  organization={PMLR}
}

@inproceedings{gogianu2021spectral,
  title={Spectral normalisation for deep reinforcement learning: an optimisation perspective},
  author={Gogianu, Florin and Berariu, Tudor and Rosca, Mihaela C and Clopath, Claudia and Busoniu, Lucian and Pascanu, Razvan},
  booktitle={International Conference on Machine Learning},
  pages={3734--3744},
  year={2021},
  organization={PMLR}
}

@inproceedings{kang2025a,
title={A Forget-and-Grow Strategy for Deep Reinforcement Learning Scaling in Continuous Control},
author={Zilin Kang and Chenyuan Hu and Yu Luo and Zhecheng Yuan and Ruijie Zheng and Huazhe Xu},
booktitle={Forty-second International Conference on Machine Learning},
year={2025},
url={https://openreview.net/forum?id=VhmTXbsdtx}
}

@article{seo2025fasttd3,
  title={FastTD3: Simple, Fast, and Capable Reinforcement Learning for Humanoid Control},
  author={Seo, Younggyo and Sferrazza, Carmelo and Geng, Haoran and Nauman, Michal and Yin, Zhao-Heng and Abbeel, Pieter},
  journal={arXiv preprint arXiv:2505.22642},
  year={2025}
}

@article{obando2025simplicial,
  title={Simplicial Embeddings Improve Sample Efficiency in Actor-Critic Agents},
  author={Obando-Ceron, Johan and Mayor, Walter and Lavoie, Samuel and Fujimoto, Scott and Courville, Aaron and Castro, Pablo Samuel},
  journal={arXiv preprint arXiv:2510.13704},
  year={2025}
}

@inproceedings{horgan2018distributed,
title={Distributed Prioritized Experience Replay},
author={Dan Horgan and John Quan and David Budden and Gabriel Barth-Maron and Matteo Hessel and Hado van Hasselt and David Silver},
booktitle={International Conference on Learning Representations},
year={2018},
url={https://openreview.net/forum?id=H1Dy---0Z},
}

@article{saglam2023actor,
  title={Actor prioritized experience replay},
  author={Saglam, Baturay and Mutlu, Furkan B and Cicek, Dogan C and Kozat, Suleyman S},
  journal={Journal of Artificial Intelligence Research},
  volume={78},
  pages={639--672},
  year={2023}
}

@inproceedings{pan2022understanding,
  title={Understanding and mitigating the limitations of prioritized experience replay},
  author={Pan, Yangchen and Mei, Jincheng and Farahmand, Amir-massoud and White, Martha and Yao, Hengshuai and Rohani, Mohsen and Luo, Jun},
  booktitle={Uncertainty in Artificial Intelligence},
  pages={1561--1571},
  year={2022},
  organization={PMLR}
}

@inproceedings{oh2022modelaugmented,
title={Model-augmented Prioritized Experience Replay},
author={Youngmin Oh and Jinwoo Shin and Eunho Yang and Sung Ju Hwang},
booktitle={International Conference on Learning Representations},
year={2022},
url={https://openreview.net/forum?id=WuEiafqdy9H}
}

@article{fang2019curriculum,
  title={Curriculum-guided hindsight experience replay},
  author={Fang, Meng and Zhou, Tianyi and Du, Yali and Han, Lei and Zhang, Zhengyou},
  journal={Advances in neural information processing systems},
  volume={32},
  year={2019}
}

@article{yang2021bias,
  title={Bias-reduced multi-step hindsight experience replay for efficient multi-goal reinforcement learning},
  author={Yang, Rui and Lyu, Jiafei and Yang, Yu and Yan, Jiangpeng and Luo, Feng and Luo, Dijun and Li, Lanqing and Li, Xiu},
  journal={arXiv preprint arXiv:2102.12962},
  year={2021}
}

@article{yenicesu2024cuer,
  title={CUER: Corrected uniform experience replay for off-policy continuous deep reinforcement learning algorithms},
  author={Yenicesu, Arda Sarp and Mutlu, Furkan B and Kozat, Suleyman S and Oguz, Ozgur S},
  journal={arXiv preprint arXiv:2406.09030},
  year={2024}
}

@inproceedings{novati2019remember,
  title={Remember and forget for experience replay},
  author={Novati, Guido and Koumoutsakos, Petros},
  booktitle={International Conference on Machine Learning},
  pages={4851--4860},
  year={2019},
  organization={PMLR}
}

@inproceedings{li2024roer,
title={{ROER}: Regularized Optimal Experience Replay},
author={Changling Li and Zhang-Wei Hong and Pulkit Agrawal and Divyansh Garg and Joni Pajarinen},
booktitle={Reinforcement Learning Conference},
year={2024},
url={https://openreview.net/forum?id=03FzXd8ZeE}
}

@inproceedings{hong2022topological,
title={Topological Experience Replay},
author={Zhang-Wei Hong and Tao Chen and Yen-Chen Lin and Joni Pajarinen and Pulkit Agrawal},
booktitle={International Conference on Learning Representations},
year={2022},
url={https://openreview.net/forum?id=OXRZeMmOI7a}
}

@article{oord2018representation,
  title={Representation learning with contrastive predictive coding},
  author={Oord, Aaron van den and Li, Yazhe and Vinyals, Oriol},
  journal={arXiv preprint arXiv:1807.03748},
  year={2018}
}

@inproceedings{poole2019variational,
  title={On variational bounds of mutual information},
  author={Poole, Ben and Ozair, Sherjil and Van Den Oord, Aaron and Alemi, Alex and Tucker, George},
  booktitle={International conference on machine learning},
  pages={5171--5180},
  year={2019},
  organization={PMLR}
}

@article{wu2020mutual,
  title={On mutual information in contrastive learning for visual representations},
  author={Wu, Mike and Zhuang, Chengxu and Mosse, Milan and Yamins, Daniel and Goodman, Noah},
  journal={arXiv preprint arXiv:2005.13149},
  year={2020}
}

@article{lu2023fmicl,
title={\$f\$-{MICL}: Understanding and Generalizing Info{NCE}-based Contrastive Learning},
author={Yiwei Lu and Guojun Zhang and Sun Sun and Hongyu Guo and Yaoliang Yu},
journal={Transactions on Machine Learning Research},
issn={2835-8856},
year={2023},
url={https://openreview.net/forum?id=ZD03VUZmRx},
}

@inproceedings{Tschannen2020On,
title={On Mutual Information Maximization for Representation Learning},
author={Michael Tschannen and Josip Djolonga and Paul K. Rubenstein and Sylvain Gelly and Mario Lucic},
booktitle={International Conference on Learning Representations},
year={2020},
url={https://openreview.net/forum?id=rkxoh24FPH}
}

@inproceedings{wang2022sample,
  title={Sample-efficient reinforcement learning via conservative model-based actor-critic},
  author={Wang, Zhihai and Wang, Jie and Zhou, Qi and Li, Bin and Li, Houqiang},
  booktitle={Proceedings of the AAAI Conference on Artificial Intelligence},
  volume={36},
  number={8},
  pages={8612--8620},
  year={2022}
}

@inproceedings{lai2021on,
title={On Effective Scheduling of Model-based Reinforcement Learning},
author={Hang Lai and Jian Shen and Weinan Zhang and Yimin Huang and Xing Zhang and Ruiming Tang and Yong Yu and Zhenguo Li},
booktitle={Advances in Neural Information Processing Systems},
editor={A. Beygelzimer and Y. Dauphin and P. Liang and J. Wortman Vaughan},
year={2021},
url={https://openreview.net/forum?id=z36cUrI0jKJ}
}

@inproceedings{fan2021model,
  title={Model-based reinforcement learning for continuous control with posterior sampling},
  author={Fan, Ying and Ming, Yifei},
  booktitle={international conference on machine learning},
  pages={3078--3087},
  year={2021},
  organization={PMLR}
}

@inproceedings{Hafner2020Dream,
title={Dream to Control: Learning Behaviors by Latent Imagination},
author={Danijar Hafner and Timothy Lillicrap and Jimmy Ba and Mohammad Norouzi},
booktitle={International Conference on Learning Representations},
year={2020},
url={https://openreview.net/forum?id=S1lOTC4tDS}
}

@inproceedings{chen2020simple,
  title={A simple framework for contrastive learning of visual representations},
  author={Chen, Ting and Kornblith, Simon and Norouzi, Mohammad and Hinton, Geoffrey},
  booktitle={International conference on machine learning},
  pages={1597--1607},
  year={2020},
  organization={PmLR}
}

@inproceedings{hessel2018rainbow,
  title={Rainbow: Combining improvements in deep reinforcement learning},
  author={Hessel, Matteo and Modayil, Joseph and Van Hasselt, Hado and Schaul, Tom and Ostrovski, Georg and Dabney, Will and Horgan, Dan and Piot, Bilal and Azar, Mohammad and Silver, David},
  booktitle={Proceedings of the AAAI conference on artificial intelligence},
  volume={32},
  number={1},
  year={2018}
}

@article{schulman2017proximal,
  title={Proximal policy optimization algorithms},
  author={Schulman, John and Wolski, Filip and Dhariwal, Prafulla and Radford, Alec and Klimov, Oleg},
  journal={arXiv preprint arXiv:1707.06347},
  year={2017}
}

@inproceedings{luo2024offlineboosted,
title={Offline-Boosted Actor-Critic: Adaptively Blending Optimal Historical Behaviors in Deep Off-Policy {RL}},
author={Yu Luo and Tianying Ji and Fuchun Sun and Jianwei Zhang and Huazhe Xu and Xianyuan Zhan},
booktitle={Forty-first International Conference on Machine Learning},
year={2024},
url={https://openreview.net/forum?id=7joG3i2pUR}
}

@article{brockman2016openai,
  title={Openai gym},
  author={Brockman, Greg and Cheung, Vicki and Pettersson, Ludwig and Schneider, Jonas and Schulman, John and Tang, Jie and Zaremba, Wojciech},
  journal={arXiv preprint arXiv:1606.01540},
  year={2016}
}

@inproceedings{loshchilov2018decoupled,
title={Decoupled Weight Decay Regularization},
author={Ilya Loshchilov and Frank Hutter},
booktitle={International Conference on Learning Representations},
year={2019},
url={https://openreview.net/forum?id=Bkg6RiCqY7},
}

@article{clevert2015fast,
  title={Fast and accurate deep network learning by exponential linear units (elus)},
  author={Clevert, Djork-Arn{\'e} and Unterthiner, Thomas and Hochreiter, Sepp},
  journal={arXiv preprint arXiv:1511.07289},
  volume={4},
  number={5},
  pages={11},
  year={2015}
}

@inproceedings{glorot2010understanding,
  title={Understanding the difficulty of training deep feedforward neural networks},
  author={Glorot, Xavier and Bengio, Yoshua},
  booktitle={Proceedings of the thirteenth international conference on artificial intelligence and statistics},
  pages={249--256},
  year={2010},
  organization={JMLR Workshop and Conference Proceedings}
}

@inproceedings{yarats2021image,
title={Image Augmentation Is All You Need: Regularizing Deep Reinforcement Learning from Pixels},
author={Denis Yarats and Ilya Kostrikov and Rob Fergus},
booktitle={International Conference on Learning Representations},
year={2021},
url={https://openreview.net/forum?id=GY6-6sTvGaf}
}

@article{lillicrap2015continuous,
  title={Continuous control with deep reinforcement learning},
  author={Lillicrap, Timothy P and Hunt, Jonathan J and Pritzel, Alexander and Heess, Nicolas and Erez, Tom and Tassa, Yuval and Silver, David and Wierstra, Daan},
  journal={arXiv preprint arXiv:1509.02971},
  year={2015}
}

@article{fu2025compute,
  title={Compute-optimal scaling for value-based deep rl},
  author={Fu, Preston and Rybkin, Oleh and Zhou, Zhiyuan and Nauman, Michal and Abbeel, Pieter and Levine, Sergey and Kumar, Aviral},
  journal={arXiv preprint arXiv:2508.14881},
  year={2025}
}

@article{nauman2025bigger,
  title={Bigger, Regularized, Categorical: High-Capacity Value Functions are Efficient Multi-Task Learners},
  author={Nauman, Michal and Cygan, Marek and Sferrazza, Carmelo and Kumar, Aviral and Abbeel, Pieter},
  journal={arXiv preprint arXiv:2505.23150},
  year={2025}
}

@article{wang20251000,
  title={1000 layer networks for self-supervised rl: Scaling depth can enable new goal-reaching capabilities},
  author={Wang, Kevin and Javali, Ishaan and Bortkiewicz, Micha{\'L} and Eysenbach, Benjamin and others},
  journal={arXiv preprint arXiv:2503.14858},
  year={2025}
}

@article{palenicek2025scaling,
  title={Scaling off-policy reinforcement learning with batch and weight normalization},
  author={Palenicek, Daniel and Vogt, Florian and Watson, Joe and Peters, Jan},
  journal={arXiv preprint arXiv:2502.07523},
  year={2025}
}

@article{maaten2008visualizing,
  title={Visualizing data using t-SNE},
  author={Maaten, Laurens van der and Hinton, Geoffrey},
  journal={Journal of machine learning research},
  volume={9},
  number={Nov},
  pages={2579--2605},
  year={2008}
}

@article{lyu2026temporal,
  title={Temporal Difference Learning with Constrained Initial Representations},
  author={Lyu, Jiafei and Yang, Jingwen and Qiao, Zhongjian and Liu, Runze and Liu, Zeyuan and Ye, Deheng and Lu, Zongqing and Li, Xiu},
  journal={arXiv preprint arXiv:2602.11800},
  year={2026}
}

@inproceedings{stooke2021decoupling,
  title={Decoupling representation learning from reinforcement learning},
  author={Stooke, Adam and Lee, Kimin and Abbeel, Pieter and Laskin, Michael},
  booktitle={International conference on machine learning},
  pages={9870--9879},
  year={2021},
  organization={PMLR}
}

@inproceedings{zheng2023texttttaco,
title={\${\textbackslash}texttt\{{TACO}\}\$: Temporal Latent Action-Driven Contrastive Loss for Visual Reinforcement Learning},
author={Ruijie Zheng and Xiyao Wang and Yanchao Sun and Shuang Ma and Jieyu Zhao and Huazhe Xu and Hal Daum{\'e} III and Furong Huang},
booktitle={Thirty-seventh Conference on Neural Information Processing Systems},
year={2023},
url={https://openreview.net/forum?id=ezCsMOy1w9}
}

@article{grill2020bootstrap,
  title={Bootstrap your own latent-a new approach to self-supervised learning},
  author={Grill, Jean-Bastien and Strub, Florian and Altch{\'e}, Florent and Tallec, Corentin and Richemond, Pierre and Buchatskaya, Elena and Doersch, Carl and Avila Pires, Bernardo and Guo, Zhaohan and Gheshlaghi Azar, Mohammad and others},
  journal={Advances in neural information processing systems},
  volume={33},
  pages={21271--21284},
  year={2020}
}

@inproceedings{paster2021blast,
title={{BLAST}: Latent Dynamics Models from Bootstrapping},
author={Keiran Paster and Lev E McKinney and Sheila A. McIlraith and Jimmy Ba},
booktitle={Deep RL Workshop NeurIPS 2021},
year={2021},
url={https://openreview.net/forum?id=VwA_hKnX_kR}
}

@misc{bardes2024vjepa,
title={V-{JEPA}: Latent Video Prediction for Visual Representation Learning},
author={Adrien Bardes and Quentin Garrido and Jean Ponce and Xinlei Chen and Michael Rabbat and Yann LeCun and Mido Assran and Nicolas Ballas},
year={2024},
url={https://openreview.net/forum?id=WFYbBOEOtv}
}

@article{garrido2024learning,
  title={Learning and leveraging world models in visual representation learning},
  author={Garrido, Quentin and Assran, Mahmoud and Ballas, Nicolas and Bardes, Adrien and Najman, Laurent and LeCun, Yann},
  journal={arXiv preprint arXiv:2403.00504},
  year={2024}
}
\bibliographystyle{icml2026}

%%%%%%%%%%%%%%%%%%%%%%%%%%%%%%%%%%%%%%%%%%%%%%%%%%%%%%%%%%%%%%%%%%%%%%%%%%%%%%%
%%%%%%%%%%%%%%%%%%%%%%%%%%%%%%%%%%%%%%%%%%%%%%%%%%%%%%%%%%%%%%%%%%%%%%%%%%%%%%%
% APPENDIX
%%%%%%%%%%%%%%%%%%%%%%%%%%%%%%%%%%%%%%%%%%%%%%%%%%%%%%%%%%%%%%%%%%%%%%%%%%%%%%%
%%%%%%%%%%%%%%%%%%%%%%%%%%%%%%%%%%%%%%%%%%%%%%%%%%%%%%%%%%%%%%%%%%%%%%%%%%%%%%%
\newpage
\appendix
\onecolumn
\section{Missing Proofs}
\label{appsec:missingproofs}

\begin{theorem}
\label{apptheo:nocorrelationmsemutualinformation}
    Minimizing $\mathbb{E}[\|Z_{sa} - Z_{s^\prime}\|_2^2]$ does not necessarily increase the mutual information $I(Z_{sa};Z_{s^\prime})$.
\end{theorem}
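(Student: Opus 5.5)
This is a ``does not necessarily'' statement, so I would prove it by explicit counterexample. The plan is to exhibit two pairs of representations: a pair $(Z_{sa}^{(1)}, Z_{s^\prime}^{(1)})$ with strictly smaller mean-squared deviation $\mathbb{E}[\|Z_{sa}-Z_{s^\prime}\|_2^2]$ than a second pair $(Z_{sa}^{(2)}, Z_{s^\prime}^{(2)})$, but with strictly smaller (in fact zero) mutual information. Both pairs are deterministic functions of the underlying $(S,A)$ and next state $S^\prime$, as the paper's setup requires. Then moving from the second configuration to the first is a step that decreases the MSE objective yet decreases $I(Z_{sa};Z_{s^\prime})$. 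I would also remark that this extends to a continuous path of configurations along which MSE decreases monotonically while MI does not increase.

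\textbf{Construction.} Take scalar representations for simplicity; the vector case follows by padding with constant coordinates. In the first configuration, let $Z_{sa}=X$ and $Z_{s^\prime}=Y$ with $X,Y$ independent, both uniform on $\{-\eta,\eta\}$. Here $I(X;Y)=0$ and $\mathbb{E}[(X-Y)^2]=2\eta^2$. In the second configuration, let $Z_{sa}=U$ and $Z_{s^\prime}=U+c$ with $U$ uniform on $\{-1,1\}$ and $c>0$ a fixed offset. Here $I=H(U)=\log 2>0$ and $\mathbb{E}[(Z_{sa}-Z_{s^\prime})^2]=c^2$. Choosing $\eta$ small, e.g.\ $2\eta^2<c^2$, the first configuration has lower MSE but zero MI. This illustrates the ``falsely aligned'' phenomenon: shrinking scale reduces the distance without encoding dependence. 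To make the construction realizable by encoders of $(S,A)$ and $S^\prime$, I would let $X$ depend only on a coordinate of $(s,a)$ that is independent of the coordinate of $s^\prime$ that $Y$ reads. An example is a nuisance or redundant dimension, matching the dexterous-hand discussion.

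\textbf{Strengthening.} To show even a full minimization need not raise MI, I would note that the trivial constant encoders $Z_{sa}\equiv Z_{s^\prime}\equiv 0$ attain the global minimum $0$ of the MSE, with $I=0$. Hence any configuration with positive MI and positive MSE can be driven to the MSE minimizer while MI drops to zero. Computing the entropies is routine for discrete distributions. For continuous representations, I would replace the distributions with Gaussians, e.g.\ $(U,U+c+\xi)$, using $I=\tfrac12\log(1+\sigma_U^2/\sigma_\xi^2)$.

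\textbf{Main obstacle.} The main obstacle is formalizing the informal phrase ``minimizing does not necessarily increase'' so that the counterexample is convincing rather than vacuous. I would state it precisely as follows: there exist representation pairs with $\mathbb{E}\|Z^{(1)}_{sa}-Z^{(1)}_{s^\prime}\|^2<\mathbb{E}\|Z^{(2)}_{sa}-Z^{(2)}_{s^\prime}\|^2$ and $I(Z^{(1)}_{sa};Z^{(1)}_{s^\prime})<I(Z^{(2)}_{sa};Z^{(2)}_{s^\prime})$. The constant-collapse example then shows the MSE minimizer itself can carry zero information.
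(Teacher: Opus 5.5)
Your argument is correct, but it takes a different route from the paper. The paper uses two continuous Gaussian families. In the first, $Z_{sa}=X$ and $Z_{s^\prime}=X+\epsilon$; there the MSE $\sigma^2 d$ falls while $I=\tfrac d2\log(1+1/\sigma^2)$ rises. In the second, $Z_{sa}=X$ and $Z_{s^\prime}=kX$; the paper claims the MSE $(1-k)^2 d$ and $I=\tfrac d2\log(2\pi e k^2)$ both fall as $k\downarrow 1$. From these opposite trends it concludes there is ``no definite correlation.''

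You instead exhibit a single pair with smaller MSE and strictly smaller MI, using discrete variables. You also note that the global MSE minimizer $Z_{sa}\equiv Z_{s^\prime}\equiv 0$ carries zero information.

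The paper's route has the appeal of displaying both directions in one simple setting. However, its second example, which is exactly the direction the theorem needs, does not hold up. For continuous $X$ and $k\neq 0$, $kX$ is an invertible function of $X$, so $H(kX\mid X)=-\infty$ rather than $0$, and $I(X;kX)=+\infty$ for every $k$. Even taken at face value, the paper's expression tends to $\tfrac d2\log(2\pi e)\neq 0$ as $k\to 1$.

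Your discrete construction avoids differential-entropy pathologies entirely, since $I(U;U+c)=H(U)=\log 2$ is finite. Your Gaussian variant $(U,U+c+\xi)$ keeps independent noise, which is what keeps the MI finite. The paper's second example could be repaired the same way: take $Z_{s^\prime}=kX+\xi$, so that the MSE $d\big((1-k)^2+\sigma^2\big)$ and $I=\tfrac d2\log(1+k^2/\sigma^2)$ both decrease as $k\downarrow 1$.

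Your collapse observation also ties the result more directly to the ``falsely aligned'' motivation. It shows that even fully minimizing the MSE can leave $I(Z_{sa};Z_{s^\prime})=0$.
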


\begin{proof}
    We use the method of contradiction to prove this theorem. We construct two simple toy examples to show that minimizing the MSE term $\mathbb{E}[\|Z_{sa} - Z_{s^\prime}\|_2^2]$ can either minimize the mutual information $I(Z_{sa};Z_{s^\prime})$ or increase it.

    For simplicity, we first consider $Z_{sa} = X, Z_{s^\prime} = X+\epsilon$, where $X\sim \mathcal{N}({\bf 0}, I_d)$, $\epsilon\sim\mathcal{N}({\bf 0}, \sigma^2I_d)$, where the random variable $\epsilon$ is independent of the random variable $X$, $I_d$ is the identity matrix with rank $d$. Then, it is easy to find that
    \begin{equation}
    \label{eq:cond1}
        \mathbb{E}[\|Z_{sa} - Z_{s^\prime}\|_2^2] = \mathbb{E}[\|X - (X+\epsilon)\|_2^2] = \mathbb{E}[\|\epsilon\|_2^2] = \sigma^2 d.
    \end{equation}
    The mutual information gives
    \begin{equation}
        I(Z_{sa};Z_{s^\prime}) = I(X;X+\epsilon) = H(X+\epsilon) - H(X+\epsilon|X) = H(X+\epsilon) - H(\epsilon).
    \end{equation}
    Note that $X$ and $\epsilon$ are independent, we then have $X+\epsilon \sim\mathcal{N}({\bf 0},(1+\sigma^2)I_d)$, and hence,
    \begin{equation}
    \label{eq:cond2}
        I(Z_{sa};Z_{s^\prime}) = H(X+\epsilon) - H(\epsilon) = \dfrac{d}{2}\log(2\pi e(1+\sigma^2)) - \dfrac{d}{2}\log(2\pi e\sigma^2) = \dfrac{d}{2}\log\left(1+\dfrac{1}{\sigma^2}\right).
    \end{equation}
    Observing Equation \ref{eq:cond1} and Equation \ref{eq:cond2}, we conclude that when $\sigma^2$ becomes large, the MSE term becomes large while the mutual information term becomes small, and vise versa.

    Nevertheless, if we let $Z_{sa} = X, Z_{s^\prime} = kX$, where $k>1, k\in\mathbb{R}$, $X\sim\mathcal{N}({\bf 0}, I_d)$, then we have $Z_{s^\prime} = kX\sim\mathcal{N}({\bf 0}, k^2 I_d)$. Following the same procedure, we have the MSE term,
    \begin{equation}
        \label{eq:cond3}
        \mathbb{E}[\|Z_{sa} - Z_{s^\prime}\|_2^2] = \mathbb{E}[\|X - kX\|_2^2] = \mathbb{E}[\|(1-k)X\|_2^2] = (1-k)^2 d.
    \end{equation}
    For the mutual information term, we can derive that
    \begin{equation}
    \label{eq:cond4}
        I(Z_{sa};Z_{s^\prime}) = H(kX) - H(kX|X) = H(kX) = \dfrac{d}{2}\log(2\pi ek^2).
    \end{equation}
    Observing Equation \ref{eq:cond3} and Equation \ref{eq:cond4}, we conclude that when $k$ approaches 1, the MSE term becomes small while the mutual information term also becomes small ($\log(2\pi ek^2)\rightarrow 0$), while when $k\rightarrow\infty$, both the MSE term and the mutual information term become large.

    We can now conclude that there is no definite correlation between the MSE term and the mutual information term. That being said, minimizing $\mathbb{E}[\|Z_{sa} - Z_{s^\prime}\|_2^2]$ does not necessarily ensure that the mutual information $I(Z_{sa};Z_{s^\prime})$ increases.
\end{proof}

\noindent\textbf{Remark:} Note that $X$ does not necessarily have a mean of 0. The above two examples are selected for simplicity and the benefit of theoretical analysis. Since the conclusion does not hold on two simple examples, it does not hold in a general case. Intuitively, minimizing the MSE loss only ensures that the Euclidean distance between the state-action representation $Z_{sa}$ and the state representation $Z_{s^\prime}$ becomes small, while not directly optimizing their distributions. It is possible that their MSE loss is small while their distributions differ.

\begin{lemma}
\label{applemma:reducedentropy}
    $H(Z_{s^\prime}|Z_{sa})$ strictly reduces if $I(Z_{s^\prime};Z_{sa})$ increases.
\end{lemma}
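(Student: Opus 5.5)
The plan is to reduce the lemma to the chain-rule identity relating mutual information, marginal entropy and conditional entropy, and then argue that the marginal term does not move in the relevant setting. Concretely, for any pair of random variables with finite (differential) entropies we have
\begin{equation*}
I(Z_{s^\prime};Z_{sa}) = H(Z_{s^\prime}) - H(Z_{s^\prime}|Z_{sa}),
\end{equation*}
so that $H(Z_{s^\prime}|Z_{sa}) = H(Z_{s^\prime}) - I(Z_{s^\prime};Z_{sa})$. I would first state this identity, valid both for discrete entropies and for differential entropies when the joint density exists. This mirrors the computations already used in the proof of Theorem \ref{apptheo:nocorrelationmsemutualinformation}.

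The second step is to compare two configurations of the representation, say before and after an update, with mutual informations $I_1 < I_2$. Taking the difference of the identity gives
\begin{equation*}
H_2(Z_{s^\prime}|Z_{sa}) - H_1(Z_{s^\prime}|Z_{sa}) = \bigl(H_2(Z_{s^\prime}) - H_1(Z_{s^\prime})\bigr) - (I_2 - I_1).
\end{equation*}
The conclusion then follows whenever the marginal entropy of $Z_{s^\prime}$ does not increase between the two configurations. In that case the right-hand side is at most $-(I_2-I_1) < 0$, which is exactly the strict decrease claimed.

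The main obstacle is justifying that $H(Z_{s^\prime})$ is held fixed, or at least does not grow by more than the gain in mutual information. Without such a condition the lemma is false: one can raise $I$ by inflating $H(Z_{s^\prime})$ while keeping $H(Z_{s^\prime}|Z_{sa})$ unchanged. I would handle this by making the assumption explicit and grounding it in the algorithm. The next-state representation $\tilde{z}_{s^\prime}$ is produced by the target encoder $f_{\omega^\prime}$ under the stop-gradient operator $\texttt{SG}$ in Equation \ref{eq:drqlatentconsistency}. Hence, between target updates, the distribution of $Z_{s^\prime}$, and with it $H(Z_{s^\prime})$, is fixed while only $Z_{sa}$ (through $\omega$ and the linear predictor) changes. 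Under this fixed marginal, the difference above reduces to $-(I_2-I_1)<0$, and the proof closes. I would end with a short remark noting that finiteness of the entropies is needed so that the subtraction is well defined.
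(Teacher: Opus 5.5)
Your proposal is correct and is essentially the paper's argument: both write $H(Z_{s^\prime}|Z_{sa}) = H(Z_{s^\prime}) - I(Z_{s^\prime};Z_{sa})$ and conclude by holding the marginal entropy $H(Z_{s^\prime})$ fixed. The only difference is how that hypothesis is justified. The paper simply asserts that $H(Z_{s^\prime})$ is an inherent property of the environment, whereas you state the assumption explicitly, note that the lemma fails without it, and tie it to the stop-gradient target encoder $f_{\omega^\prime}$. That is the more defensible reading, since $Z_{s^\prime}$ is a learned representation, though it also needs the sampling distribution of $s^\prime$ to stay fixed during an encoder-update block.
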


\begin{proof}
    This conclusion is quite straightforward. By the definition of the mutual information, we have
    \begin{equation}
        H(Z_{s^\prime}|Z_{sa}) = H(Z_{s^\prime}) - I(Z_{s^\prime};Z_{sa}),
    \end{equation}
    We note that the state entropy term $H(Z_{s^\prime})$ is the inherent property of the environment and would not change during algorithmic optimization. It is then clear that the conditional entropy $H(Z_{s^\prime}|Z_{sa})$ gets strictly smaller when the mutual information term $I(Z_{s^\prime};Z_{sa})$ becomes large.
\end{proof}

\begin{theorem}
    \label{apptheo:fasedexperiencereplay}
    Let $\mathcal{D}$ be a replay buffer with the decay rate $\epsilon$, $N$ be the batch size, $P(i)$ be the probability that a transition $e_i$ is sampled using faded prioritized experience replay, $\delta(i)$ is the current TD error of $e_i$. Then, we have:
    
    (i) for $i_1 < i_2$, if $|\delta(i_1)| = |\delta(i_2)|$, then $P(i_1) > P(i_2)$;

    (ii) denote $\widehat{P}(i) = \frac{|\delta(i)|^\alpha}{\sum_j |\delta(j)|^\alpha}$, then there exist $C>0, k\in \mathbb{N}$ such that $\widehat{P}(i)(1-\epsilon)^i \le P(i) \le \frac{1}{1 + C (1-\epsilon)^{k-i}}$;

    (iii) the expected sample times of $e_i$, $\mathbb{E}[n_i]$, satisfies $0 < \mathbb{E}[n_i] \le \frac{N}{1+C(1-\epsilon)^{k-i}}<N, k\in\mathbb{N}, C>0$.
\end{theorem}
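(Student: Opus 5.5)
The plan is to work directly from the definition in Equation \ref{eq:fadedprioritizedexperiencereplay}, writing $w_j = |\delta(j)|^\alpha(1-\epsilon)^j$ and $Z=\sum_j w_j$, so that $P(i)=w_i/Z$. All three parts reduce to elementary manipulations of this ratio. I assume every $|\delta(j)|>0$; this holds in the practical LAP version, where priorities are clipped below at $1$. I also assume the buffer is finite, with $|\mathcal{D}|\ge 2$.

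For (i), the normalizer $Z$ is common to both transitions. If $|\delta(i_1)|=|\delta(i_2)|>0$, then $P(i_1)/P(i_2)=(1-\epsilon)^{i_1-i_2}$. Since $\epsilon\in(0,1)$ and $i_1<i_2$, this ratio exceeds $1$, which gives the claim.

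For (ii), the lower bound follows by comparing normalizers. Since $(1-\epsilon)^j\le 1$, we have $Z\le \sum_j|\delta(j)|^\alpha$, and hence $P(i)\ge |\delta(i)|^\alpha(1-\epsilon)^i/\sum_j|\delta(j)|^\alpha=\widehat{P}(i)(1-\epsilon)^i$.

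For the upper bound, I write
\[
P(i)=\frac{1}{1+\sum_{j\neq i}w_j/w_i}
\]
and lower-bound the sum by a single term. I fix any index $k\neq i$ and drop every other term, which gives
\[
\sum_{j\neq i}\frac{w_j}{w_i}\ \ge\ \frac{|\delta(k)|^\alpha}{|\delta(i)|^\alpha}(1-\epsilon)^{k-i}.
\]
Setting $C=|\delta(k)|^\alpha/|\delta(i)|^\alpha>0$ then yields $P(i)\le 1/(1+C(1-\epsilon)^{k-i})$. If a constant independent of $i$ is desired, I would instead take $C=\min_j|\delta(j)|^\alpha/\max_j|\delta(j)|^\alpha$. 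This is positive because the buffer is finite and all TD errors are nonzero. The statement only asks for existence of $C$ and $k$, so the per-$i$ choice is enough.

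For (iii), I model a batch as $N$ independent draws from $P$, so that $n_i\sim\mathrm{Binomial}(N,P(i))$ and $\mathbb{E}[n_i]=NP(i)$. Positivity follows from $w_i>0$. The middle inequality follows by multiplying the upper bound in (ii) by $N$. The final strict inequality holds because $C(1-\epsilon)^{k-i}>0$.

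The main obstacle is the upper bound in (ii). Its form is loose, and the constants have to be chosen so that $C>0$ genuinely holds. That requires the nondegeneracy assumption on the TD errors and at least two transitions in the buffer. I would state these assumptions explicitly and note that the LAP clipping, $\max(|\delta|^\alpha,1)$, guarantees them in the practical algorithm. The same argument carries over to Equation \ref{eq:drqfadedper} by replacing $(1-\epsilon)^j$ with $\max(\epsilon_{\rm low},(1-\epsilon)^j)$.
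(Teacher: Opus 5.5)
Your proof is correct and follows the paper's argument essentially step for step. It uses the same common-normalizer comparison for (i), the bound $(1-\epsilon)^j\le 1$ for the lower bound in (ii), the same two-term truncation $Z\ge w_i+w_k$ (with $C=|\delta(k)|^\alpha/|\delta(i)|^\alpha$) for the upper bound, and $\mathbb{E}[n_i]=NP(i)$ for (iii). Your explicit assumptions that all $|\delta(j)|>0$ and $|\mathcal{D}|\ge 2$ (with $k\neq i$) are cleaner than the paper's treatment. The paper allows $|\delta(i)|=0$ in (ii), but it silently needs $|\delta(i_1)|>0$ for the strict inequality in (i) and $\widehat{P}(i)>0$ for positivity in (iii).
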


\begin{proof}
    According to the faded prioritized experience replay, the probability that the transition $e_i$ gets sampled is given by:
    \begin{equation}
        P(i) = \dfrac{|\delta(i)|^\alpha (1-\epsilon)^i}{\sum_{j=0}^{|\mathcal{D}|-1}|\delta(j)|^\alpha (1-\epsilon)^j}.
    \end{equation}
    For (i), since $|\delta(i_1)| = |\delta(i_2)|$, we have,
    \begin{equation}
        P(i_1) - P(i_2) = \dfrac{|\delta(i_1)|^\alpha (1-\epsilon)^{i_1} - |\delta(i_2)|^\alpha(1-\epsilon)^{i_2}}{\sum_j |\delta(j)|^\alpha (1-\epsilon)^j} = \dfrac{|\delta(i_1)|^\alpha [(1-\epsilon)^{i_1} - (1-\epsilon)^{i_2}]}{\sum_j |\delta(j)|^\alpha (1-\epsilon)^j} > 0.
    \end{equation}
    For (ii), it is easy to find that,
    \begin{equation}
        P(i) = \dfrac{|\delta(i)|^\alpha (1-\epsilon)^i}{\sum_{j=0}^{|\mathcal{D}|-1}|\delta(j)|^\alpha (1-\epsilon)^j} \ge \dfrac{|\delta(i)|^\alpha (1-\epsilon)^i}{\sum_{j=0}^{|\mathcal{D}|-1}|\delta(j)|^\alpha} = \dfrac{|\delta(i)|^\alpha}{\sum_{j=0}^{|\mathcal{D}|-1} |\delta(j)|^\alpha}(1-\epsilon)^i = \widehat{P}(i) (1-\epsilon)^i.
    \end{equation}
    For the upper bound, we have $\sum_{j=0}^{|\mathcal{D}|}|\delta(j)|^\alpha (1-\epsilon)^j = |\delta(0)|^\alpha (1-\epsilon)^0 + |\delta(1)|^\alpha (1-\epsilon)^1 + \ldots + |\delta(i)|^\alpha (1-\epsilon)^i + \ldots + |\delta(k)|^\alpha(1-\epsilon)^k + \ldots + |\delta(|\mathcal{D}|-1)|^\alpha (1-\epsilon)^{|\mathcal{D}|-1}$, then there exists $k\in\mathbb{N}$ such that $|\delta(k)| > 0$, which incurs,
    \begin{equation}
        \sum_{j=0}^{|\mathcal{D}|}|\delta(j)|^\alpha (1-\epsilon)^j \ge |\delta(i)|^\alpha (1-\epsilon)^i + |\delta(k)|^\alpha(1-\epsilon)^k.
    \end{equation}
    This indicates that
    \begin{equation}
        P(i) = \dfrac{|\delta(i)|^\alpha (1-\epsilon)^i}{\sum_{j=0}^{|\mathcal{D}|-1}|\delta(j)|^\alpha (1-\epsilon)^j} \le \dfrac{|\delta(i)|^\alpha (1-\epsilon)^i}{|\delta(i)|^\alpha (1-\epsilon)^i + |\delta(k)|^\alpha(1-\epsilon)^k}.
    \end{equation}
    If $|\delta(i)|>0$, then $|\delta(i)|^\alpha > 0$, and,
    \begin{equation}
        P(i) \le \dfrac{|\delta(i)|^\alpha (1-\epsilon)^i}{|\delta(i)|^\alpha (1-\epsilon)^i + |\delta(k)|^\alpha(1-\epsilon)^k} = \dfrac{1}{1+ \dfrac{|\delta(k)|^\alpha}{|\delta(i)|^\alpha} (1-\epsilon)^{k-i}}.
    \end{equation}
    Since $|\delta(i)| > 0, |\delta(k)| > 0$, there must exists $C\in\mathbb{R}, C> 0$ such that $\dfrac{|\delta(k)|^\alpha}{|\delta(i)|^\alpha} \ge C$, and hence there exists $C>0$ such that
    \begin{equation}
        P(i) \le \dfrac{|\delta(i)|^\alpha (1-\epsilon)^i}{|\delta(i)|^\alpha (1-\epsilon)^i + |\delta(k)|^\alpha(1-\epsilon)^k} = \dfrac{1}{1+ \dfrac{|\delta(k)|^\alpha}{|\delta(i)|^\alpha} (1-\epsilon)^{k-i}} \le \dfrac{1}{1+ C(1-\epsilon)^{k-i}}.
    \end{equation}
    If $|\delta(i)|=0$, then $P(i) = 0 < \dfrac{1}{1+ C(1-\epsilon)^{k-i}}$ also holds.
    
    For (iii), we have $\widehat{P}(i) = \dfrac{|\delta(i)|^\alpha}{\sum_j |\delta(j)|^\alpha} > 0$, and hence
    \begin{equation}
        \mathbb{E}[n_i] = N\times P(i) \ge N\times  \widehat{P}(i)\times (1-\epsilon)^i > 0.
    \end{equation}
    Furthermore, based on (ii), there exists $k\in\mathbb{N}, C>0$ such that $P(i) \le \dfrac{1}{1+ C(1-\epsilon)^{k-i}}$, hence for any transition $e_i$, its expected sample times can then be bounded,
    \begin{equation}
        \mathbb{E}[n_i] = N\times P(i) \le N \dfrac{1}{1+ C(1-\epsilon)^{k-i}} = \dfrac{N}{1+C(1-\epsilon)^{k-i}} <N.
    \end{equation}
\end{proof}

\clearpage
\section{Pseudo-code of DR.Q}
\label{sec:pseudocode}

\begin{algorithm}[!htb]
\caption{Debiased model-based Representations for Q-learning (DR.Q)}
\label{alg:drqpseudocode}
{\bf Input:} Faded PER forget probability threshold $\epsilon_{\rm low}$, loss weights $\lambda_{d}, \lambda_{r}, \lambda_m$, multi-step return horizon $H_Q$, encoder rollout horizon $H$, LAP smoothing coefficient $\alpha$, temperature coefficient $\tau$, exploration step $T_{\rm explore}$, target update frequency $T_{\rm target}$, noise clip threshold $c$

\begin{algorithmic}[1]
\STATE Initialize policy network $\pi_\phi$, critic networks $Q_{\theta_1}, Q_{\theta_2}$, the linear MDP predictor $M(\cdot)$, the state encoder $f_\omega$ and state-action encoder $g_\omega$ with random parameters
\STATE Initialize target networks $\phi^\prime \leftarrow \phi, \theta^\prime_1 \leftarrow \theta_1, \theta_2^\prime \leftarrow \theta_2, \omega^\prime \leftarrow \omega$ and empty replay buffer $\mathcal{B}=\{\}$
\FOR{$t=1$ to $T$}
\STATE Select action $a$ via Equation \ref{eq:drqactionselection}, i.e., $a = {\rm clip}(a^\prime,-1,1), a^\prime = \pi_{\phi^\prime}(z_s) + {\rm clip}(\psi, -c,c), z_s = f_\omega(s), \psi\sim\mathcal{N}(0,0.2^2).$
\STATE Execute action $a$ and observe reward $r$, new state $s^\prime$ and done flag $d$
\STATE Store transitions in the replay buffer, i.e., $\mathcal{B}\leftarrow\mathcal{B}\bigcup \{(s,a,r,s^\prime,d)\}$
\IF{$t> T_{\rm explore}$}
\STATE {\color{blue!50} // Encoder Training}
\IF{$t$ \% $T_{\rm target}=0$}
\STATE Update target networks: $\theta^\prime_1, \theta_2^\prime, \phi^\prime, \omega^\prime \leftarrow \theta_1, \theta_2, \phi, \omega$
\FOR{$T_{\rm target}$ time steps}
\STATE Sample transitions via Faded PER (Equation \ref{eq:drqfadedper}) and update the encoders via Equation \ref{eq:drqencoder}, i.e., $\mathcal{L}_{\rm enc}^{\rm DR.Q} = \sum_{t=1}^H \lambda_r \mathcal{L}_{\rm reward}(\hat{r}_t,r_t) + \lambda_d \mathcal{L}_{\rm dynamics}(\hat{z}_{s^\prime,t}, \tilde{z}_{s^\prime,t}) + \lambda_m \mathcal{L}_{\rm I}(\hat{z}_{s^\prime,t},\tilde{z}_{s^\prime,t})$, with $\mathcal{L}_{\rm reward}, \mathcal{L}_{\rm dynamics},\mathcal{L}_{\rm I}$ defined in Equations \ref{eq:drqreward}, \ref{eq:drqlatentconsistency}, \ref{eq:infonceloss}, respectively
\ENDFOR
\ENDIF
\STATE {\color{blue!50} // Critic Training}
\STATE Sample transitions via Faded PER (Equation \ref{eq:drqfadedper}) and update critic networks via Equation \ref{eq:drqcriticloss}, i.e., $\mathcal{L}_{\rm critics} = {\rm Huber}\left(Q_{\theta_i}, \sum_{t=0}^{H_Q-1} \gamma^t r_t + \gamma^{H_Q} \min_{j=1,2} Q_{\theta_j^\prime} \right)$
\STATE {\color{blue!50} // Actor Training}
\STATE Update the actor network via Equation \ref{eq:drqactorloss}, i.e., $\mathcal{L}_{\rm actor} = -\dfrac{1}{2}\sum_{i=1,2} Q_{\theta_i}(z_{sa_\pi}), z_{sa_\pi} = g_\omega(z_s,a_\pi), a_\pi\sim\pi_\phi(z_s).$
\STATE {\color{blue!50} // Update Priority}
\STATE Update the LAP priority with the TD errors
\ENDIF
\ENDFOR
\end{algorithmic}
\end{algorithm}

% \clearpage
\section{Experiment Setup}
\label{sec:experimentsetup}

In this section, we present the benchmark details and experiment setup for running DR.Q and baseline methods.

\subsection{Environment Details}
\label{sec:environmentdetails}

\textbf{Gym MuJoCo} \citep{brockman2016openai} is a suite of locomotion tasks that are widely used in the context of RL research. We choose the five most commonly adopted environments by prior works \citep{fujimoto2023for,fujimoto2025towards,Lee2025HypersphericalNF} and employ the v4 version of these tasks. To better aggregate scores, we follow MR.Q \citep{fujimoto2025towards} and normalize the returns with the random scores and TD3 \citep{fujimoto2018addressing} scores:
\begin{equation}
    \text{TD3-Normalized}(x) := \frac{x - \text{random score}}{\text{TD3 score} - \text{random score}}.
\end{equation}
We run all algorithms on the MuJoCo tasks for 1M environment steps with no action repeat and summarize the random scores and TD3 scores on each environment in Table \ref{tab:gymmujocotasks}.
\begin{table}[ht!]
\centering
\parbox{\textwidth}{
\caption{\textbf{The complete list of Gym MuJoCo tasks.} Obs denotes observation.}
\label{tab:gymmujocotasks}
\centering
\vspace{0.05in}
\begin{tabular}{lcccc}
\toprule
\textbf{Task} & \textbf{Random} & \textbf{TD3} & \textbf{Obs dim $|\mathcal{O}|$} & \textbf{Action dim $|\mathcal{A}|$} \\ \midrule
Ant-v4 & $-70.288$ & $3942$ & $27$ & $8$ \\
HalfCheetah-v4 & $-289.415$ & $10574$ & $17$ & $6$ \\
Hopper-v4 & $18.791$ & $3226$ & $11$ & $3$ \\
Humanoid-v4 & $120.423$ & $5165$ & $376$ & $17$ \\
Walker2d-v4 & $2.791$ & $3946$ & $17$ & $6$ \\ \bottomrule
\end{tabular}}
\end{table}

\textbf{DMC suite} \citep{tassa2018deepmind} is a standard continuous control benchmark, which collects numerous locomotion and manipulation tasks with varying complexities, spanning from low-dimensional tasks to complex, high-dimensional locomotion tasks. For proprioceptive DMC tasks (i.e., vector inputs), we consider 28 tasks and divide them into two categories: DMC-Easy, which involves 21 tasks, and DMC-Hard, which involves 4 dog tasks and 3 humanoid tasks. For visual control, the state is made up of the previous 3 observations which are resized to $84\times 84$ pixels in RGB format, as MR.Q \citep{fujimoto2025towards} does. For either proprioceptive or visual control, we use an action repeat of 2. We run 500K steps for all algorithms (1M environment steps due to action repeat) and report the average return in each environment directly. We provide the full list of evaluated DMC tasks (with vector inputs) in Table \ref{tab:dmctasks}.

\begin{table}[ht!]
\centering
\parbox{0.8\textwidth}{
\caption{\textbf{The full list of DMC suite tasks.} Obs represents observation.}
\label{tab:dmctasks}
\centering
\begin{tabular}{lcc}
\toprule
\textbf{Task} & \textbf{Obs dim} $|\mathcal{O}|$ & \textbf{Action dim} $|\mathcal{A}|$ \\ \midrule
acrobot-swingup & $6$ & $1$ \\
ball-in-cup-catch & $6$ & $1$ \\
cartpole-balance & $5$ & $1$ \\
cartpole-balance-sparse & $5$ & $1$ \\
cartpole-swingup & $5$ & $1$ \\
cartpole-swingup-sparse & $5$ & $1$ \\
cheetah-run & $17$ & $6$ \\
dog-run & $223$ & $38$ \\
dog-trot & $223$ & $38$ \\
dog-stand & $223$ & $38$ \\
dog-walk & $223$ & $38$ \\
finger-spin & $9$ & $2$ \\
finger-turn-easy & $12$ & $2$ \\
finger-turn-hard & $12$ & $2$ \\
fish-swim & $24$ & $5$ \\
hopper-hop & $15$ & $4$ \\
hopper-stand & $15$ & $4$ \\
humanoid-run & $67$ & $24$ \\
humanoid-stand & $67$ & $24$ \\
humanoid-walk & $67$ & $24$ \\
pendulum-swingup & $3$ & $1$ \\
quadruped-run & $78$ & $12$ \\
quadruped-walk & $78$ & $12$ \\
reacher-easy & $6$ & $2$ \\
reacher-hard & $6$ & $2$ \\
walker-run & $24$ & $6$ \\
walker-stand & $24$ & $6$ \\
walker-walk & $24$ & $6$ \\ \bottomrule
\end{tabular}}
\end{table}

\textbf{HumanoidBench} \citep{sferrazza2024humanoidbench} is a high-dimensional simulated robot learning benchmark. It employs the Unitree H1 humanoid robot and is designed to evaluate the performance of the humanoid robot across a variety of challenging whole-body manipulation and locomotion tasks. The benchmark provides tasks that either contain dexterous hands or do not. If one chooses to involve the dexterous hands, the observation space and the action space of the environment will become much larger. Previous methods like SimBaV2 \citep{Lee2025HypersphericalNF} mainly focus on tasks without dexterous hands, while we include tasks with or without dexterous hands simultaneously to comprehensively evaluate the performance and sample efficiency of the agent under high-dimensional tasks. We adopt 14 default locomotion tasks without dexterous hands in the SimBa paper \citep{lee2025simba} and an additional 14 locomotion tasks with hands. For all tasks, we run baselines and DR.Q for 500 steps with an action repeat 2, which is equivalent to 1M environment steps. Since HumanoidBench tasks also have varied reward scales across different environments, we normalize the undiscounted return results using the random scores and the task success scores provided in the SimBaV2 paper \citep{Lee2025HypersphericalNF}:
\begin{equation}
  \text{Success-Normalized}(x) := \frac{x - \text{random score}}{\text{Task success score} - \text{random score}}.
\end{equation}
We summarize the evaluated tasks on HumanoidBench and other necessary information in Table \ref{tab:humanoidbenchtasks}.

\begin{table}[ht!]
\centering
\parbox{0.97\textwidth}{
\caption{\textbf{A full list of HumanoidBench tasks.} Obs means observation. Random means the return of the random policy, and Success means the task success return. We use the same task success scores for tasks with or without dexterous hands.}
\label{tab:humanoidbenchtasks}
\centering
\begin{tabular}{lcccc}
\toprule
\textbf{Task} & \textbf{Random} & \textbf{Success} &  \textbf{Obs dim} $|\mathcal{O}|$ & \textbf{Action dim} $|\mathcal{A}|$ \\ \midrule
h1-balance-hard & $9.044$ & $800$ & $77$ & $19$ \\
h1-balance-simple & $9.391$ & $800$ & $64$ & $19$ \\
h1-crawl & $272.658$ & $700$ & $51$ & $19$ \\
h1-hurdle & $2.214$ & $700$ &  $51$ & $19$ \\
h1-maze &  $106.441$ & $1200$  & $51$ & $19$ \\
h1-pole & $20.09$ & $700$ & $51$ & $19$ \\
h1-reach & $260.302$ & $12000$ & $57$ & $19$ \\
h1-run & $2.02$ & $700$ & $51$ & $19$ \\
h1-sit-simple & $9.393$ & $750$ & $51$ & $19$ \\
h1-sit-hard & $2.448$ & $750$ & $64$ & $19$ \\
h1-slide & $3.191$ & $700$ & $51$ & $19$ \\ 
h1-stair & $3.112$ & $700$ & $51$ & $19$ \\ 
h1-stand & $10.545$ & $800$ & $51$ & $19$ \\ 
h1-walk & $2.377$ & $700$ & $51$ & $19$ \\
h1hand-basketball & $8.979$ & $1200$ & $164$ & $61$  \\
h1hand-bookshelf-simple & $16.777$ & $2000$ & $308$ & $61$  \\
h1hand-bookshelf-hard  & $14.848$ & $2000$ & $308$ & $61$  \\
h1hand-crawl & $278.868$ & $700$ & $151$ & $61$ \\
h1hand-door & $2.771$ & $600$ & $155$ & $61$ \\
h1hand-pole & $19.721$ & $700$ & $151$ & $61$  \\
h1hand-reach & $-50.024$ & $12000$ & $157$ & $61$ \\
h1hand-run & $1.927$ & $700$ & $151$ & $61$ \\
h1hand-slide & $3.142$ & $700$ & $151$ & $61$ \\
h1hand-sit-simple & $10.768$ & $750$ & $151$ & $61$ \\
h1hand-sit-hard & $2.477$ & $750$ & $164$ & $61$ \\
h1hand-stair & $3.161$ & $700$ & $151$ & $61$ \\
h1hand-stand & $11.973$ & $800$ & $151$ & $61$ \\
h1hand-walk & $2.505$ & $700$ & $151$ & $61$ \\
\bottomrule
\end{tabular}
}
\end{table}

\clearpage
\subsection{Baselines}
\label{sec:baselines}

\noindent\textbf{PPO} \citep{schulman2017proximal}. Proximal Policy Optimization (PPO) is a classical and widely used general-purpose on-policy model-free RL algorithm. Results of PPO on Gym MuJoCo tasks, DMC-Hard tasks, and DMC-Visual tasks are obtained from the MR.Q paper \citep{fujimoto2025towards}, which are averaged over 10 seeds.

\noindent\textbf{TD3+OFE} \citep{ota2020can}. It proposes an online feature extractor network (OFENet) that trains the state-action representation and the next state representation by enforcing the latent dynamics consistency. Eventually, TD3+OFE achieves strong performance over TD3 \citep{fujimoto2018addressing}. Results of TD3+OFE on Gym MuJoCo tasks are obtained from the TD7 paper \citep{fujimoto2023for}, which are averaged across 10 seeds.

\noindent\textbf{TQC} \citep{kuznetsov2020controlling}. Truncated Quantile Critic (TQC) controls the overestimation bias by using distributional critics and truncating the atoms with highest value estimates. Results of TQC on MuJoCo tasks are obtained from the TD7 paper \citep{fujimoto2023for}. These scores are averaged over 10 seeds.

\noindent\textbf{REDQ} \citep{chen2021randomized}. Randomized Ensembled Double Q-Learning (REDQ) boots sample efficiency by using a large update-to-data (UTD) ratio and an ensemble of Q-functions. For MuJoCo tasks, the results of REDQ are obtained from the CrossQ paper \citep{bhatt2024crossq} with 10 seeds and an UTD ratio 20.

\noindent\textbf{DroQ} \citep{Hiraoka2021DropoutQF}. Dropout Q-Function (DroQ) reduces the computational overhead of REDQ by using dropout and layer normalization. For MuJoCo tasks, the results of DroQ are averaged across 10 seeds and an UTD ratio 20. The results are obtained from the CrossQ paper \citep{bhatt2024crossq}.

\noindent\textbf{DreamerV3} \citep{hafner2023mastering}. DreamerV3 is a general purpose model-based RL algorithm that learns a world model in a compact latent space. For MuJoCo tasks and DMC tasks, we use results from MR.Q \citep{fujimoto2025towards}, which are averaged over 10 seeds. For HumanoidBench (w/o hand) tasks, we use results from SimBa \citep{lee2025simba}, which are averaged over 3 seeds. The results of HumanoidBench (w/ hand) tasks are obtained from the HumanoidBench authors (\href{https://github.com/carlosferrazza/humanoid-bench/tree/main/logs}{https://github.com/carlosferrazza/humanoid-bench/tree/main/logs}) across 3 seeds.

\noindent\textbf{DrQ-v2} \citep{yarats2021mastering}. DrQ-v2 is built upon DrQ \citep{yarats2021image} with several refinements: (i) replacing the base algorithm from SAC \citep{haarnoja2018soft} to DDPG \citep{lillicrap2015continuous}; (ii) incorporating multi-step returns; (iii) adding bilinear interpolation to the random shift image augmentation; (iv) introducing an
exploration schedule; (v) finding better hyperparameters. Its results in DMC-Visual tasks are taken directly from the MR.Q paper \citep{fujimoto2025towards}, which are averaged across 10 seeds.

\noindent\textbf{TD7} \citep{fujimoto2023for}. TD7 leverages latent consistency loss for training state-action representation and state-representation, and feed them forward along with raw states and actions to learn value functions and the policy. It further adopts policy checkpoints and prioritized experience replay. Results of MuJoCo and DMC tasks are taken directly from MR.Q \citep{fujimoto2025towards}, which are averaged across 10 seeds. Results on HumanoidBench (w/o hand) tasks are obtained from SimBaV2 \citep{Lee2025HypersphericalNF}, which are averaged over 5 seeds.

\noindent\textbf{TDMPC2} \citep{hansen2024tdmpc}. TDMPC2 is a model-based RL algorithm that learns a latent dynamics model and performs model predictive control by using the learned model. Its average performance across 10 seeds on MuJoCo and DMC tasks are obtained from MR.Q \citep{fujimoto2025towards}. The results on HumanoidBench (w/o hand) tasks are obtained from SimBa paper \citep{lee2025simba}, and its results on HumanoidBench (w/ hand) tasks are obtained from the HumanoidBench authors (\href{https://github.com/carlosferrazza/humanoid-bench/tree/main/logs}{https://github.com/carlosferrazza/humanoid-bench/tree/main/logs}), all using 3 random seeds.

\noindent\textbf{CrossQ} \citep{bhatt2024crossq}. CrossQ is an algorithm that achieves high sample efficiency with low replay ratio by removing target networks and using careful batch normalization. The results on Gym MuJoCo tasks are obtained from the SimBaV2 paper \citep{Lee2025HypersphericalNF}, which are averaged across 10 seeds.

\noindent\textbf{iQRL} \citep{scannell2024iqrl}. Implicitly Quantized Reinforcement Learning (iQRL) improves the sample efficiency of the agent via latent quantization. The results on DMC-Hard tasks are obtained from its original paper, averaged across 3 seeds.

\noindent\textbf{BRO} \citep{nauman2024bigger}. BRO improves the sample efficiency of the agent by scaling the critic networks in conjunction with some regularization techniques. It adopts distributional Q-learning, optimistic exploration with an additional exploration policy, and periodic resets. For Gym MuJoCo tasks, DMC-Easy tasks, and HumanoidBench (w/o hand) tasks, the results are averaged across 5 seeds. For DMC-Hard tasks, the results are averaged across 10 seeds. All results are obtained from the SimBa paper \citep{lee2025simba} with an UTD ratio 2.

\noindent\textbf{MAD-TD} \citep{voelcker2025madtd}. Model-Augmented Data for Temporal Difference learning (MAD-TD) stabilizes the high UTD training by adding a small fraction of $\alpha$ model-generated synthetic data. Its results averaged across 10 seeds on DMC-Hard tasks are taken directly from its original paper with an UTD = 8 and $\alpha=0.05$.

\noindent\textbf{MR.Q} \citep{fujimoto2025towards}. Model-based Representations for Q-learning (MR.Q) is a general-purpose model-free RL algorithm that leverages model-based objectives (e.g., latent dynamics consistency, reward predictions) for learning state-action representations and state representations. It achieves strong performance across numerous benchmarks with a single set of hyperparameters. Results on Gym MuJoCo, DMC-Easy, DMC-Hard and DMC-Visual tasks are obtained directly from the MR.Q paper, which are averaged across 10 seeds. For HumanoidBench tasks with or without dexterous hands (28 tasks), we run MR.Q on them with the official codebase (\href{https://github.com/facebookresearch/MRQ}{https://github.com/facebookresearch/MRQ}) and summarize the results across 10 random seeds. Note that MR.Q sets the replay ratio to be 1.

\noindent\textbf{SimBa} \citep{lee2025simba}. SimBa is a new network architecture designed for scaling the agent. It involves several tricks and design choices like state normalization, residual blocks, and layer normalization. For DMC-Hard tasks, the results are averaged across 15 seeds. On Gym MuJoCo, DMC-Easy and HumanoidBench (w/o hand) tasks, we use 10 seeds. These scores are obtained directly from the SimBaV2 paper \citep{Lee2025HypersphericalNF}. For HumanoidBench (w/ hand) tasks, we obtain results across 10 random seeds using the official codebase (\href{https://github.com/SonyResearch/simba}{https://github.com/SonyResearch/simba}) with an UTD ratio 2.

\noindent\textbf{SimBaV2} \citep{Lee2025HypersphericalNF}. SimBaV2 is an improved network architecture built upon the SimBa architecture. It stabilizes optimization through two key mechanisms: first, constraining the growth of weight and feature norms via hyperspherical normalization; and second, employing reward-scaled distributional value estimation to maintain stable gradients across varying reward magnitudes. Its results on Gym MuJoCo tasks, DMC-Easy tasks, DMC-Hard tasks, and HumanoidBench (w/o hand) tasks are taken directly from its original paper, which are averaged across 10 seeds. For HumanoidBench (w/ hand) tasks, we run the official codebase (\href{https://github.com/DAVIAN-Robotics/SimbaV2}{https://github.com/DAVIAN-Robotics/SimbaV2}) with an UTD ratio 2 and 10 random seeds.

\noindent\textbf{FoG} \citep{kang2025a}. Forget and Grow (FoG) employs two key components, forgetting early experiences to balance memory, and dynamically expanding the network to better exploit the patterns of existing data. It utilizes the Offline
Boosted Actor-Critic \citep{luo2024offlineboosted} as the backbone algorithm, and adopts tricks like periodic resets. Results on all benchmarks are acquired by running the authors' official codebase (\href{https://github.com/nothingbutbut/FoG}{https://github.com/nothingbutbut/FoG}) with an UTD ratio 10. We summarize the results across 10 seeds.

Despite some papers claiming that they adopt a single set of hyperparameters for different tasks, it turns out that many methods are not actually general-purpose, e.g., FoG, SimBaV2. Some of them set different algorithmic configurations for different tasks or adopt different hyperparameters. We summarize the comparison of DR.Q against some representative and strong baselines below.

\begin{table}[htb]
\caption{\textbf{Comparison of DR.Q against baseline methods.} CDQ denotes the clipped double Q-learning, and AvgQ is the average Q-learning that take mean of two critics.}
\label{tab:generalpurpose}
\centering
\begin{tabular}{m{1.7cm}|m{3cm}|m{4.3cm}|m{6.5cm}}
\toprule
\textbf{Algorithm}  & \textbf{Hyperparameters} & \textbf{Algorithmic Configurations} & \textbf{Details} \\
\midrule
DR.Q & Fixed & Fixed & N/A \\
PPO & Fixed & Fixed & N/A \\
MR.Q & Fixed & Fixed & N/A \\
TDMPC2 & Fixed & Fixed & N/A \\
SimBa & Fixed & Not Fixed & CDQ on HumanoidBench, single Q otherwise \\
SimBaV2 & Fixed & Not Fixed & CDQ on HumanoidBench, single Q otherwise \\
FoG & Not Fixed & Not Fixed & different batch size and reset steps on HumanoidBench; CDQ or AvgQ on different tasks  \\
\bottomrule
\end{tabular}
\end{table}

\clearpage
\subsection{Hyperparameters}
\label{sec:hyperparameters}

We summarize the hyperparameters adopted in DR.Q in Table \ref{tab:hyperparameters}, which are kept fixed across all benchmarks. Note that DR.Q also keeps its algorithmic configurations unchanged across all tasks.

\begin{table}[!h]
\centering
\caption{\textbf{DR.Q hyperparameters.} We keep all these hyperparameters fixed across all benchmark tasks.}
\label{tab:hyperparameters}
\begin{tabular}{llr}
\toprule
\qquad \textbf{Hyperparameter}  & \qquad \textbf{Value} \quad \\
\midrule
\qquad Target policy noise $\sigma$ & \qquad $\mathcal{N}(0, 0.2^2)$ \\
\qquad Target policy noise clipping $c$ & \qquad $(-0.3, 0.3)$ \\
\qquad LAP probability smoothing $\alpha$ & \qquad $0.4$ \\
\qquad LAP minimum priority & \qquad $1$ \\
\qquad Exploration steps & \qquad $10^4$ \\
\qquad Exploration noise & \qquad $\mathcal{N}(0, 0.2^2)$ \\
\qquad Discount factor $\gamma$ & \qquad 0.99 \\
\qquad Replay buffer size & \qquad $10^6$ \\
\qquad Batch size & \qquad $256$ \\
\qquad Target update frequency $T_{\rm target}$ & \qquad $250$ \\
\qquad Replay ratio (UTD) & \qquad $1$ \\
\qquad Optimizer (all networks) & \qquad AdamW \citep{loshchilov2018decoupled} \\
\qquad Weight initialization (all networks) & \qquad Xavier uniform \citep{glorot2010understanding} \\
\qquad Bias initialization (all networks) & \qquad $0$ \\
\qquad Encoder learning rate & \qquad $3\times 10^{-4}$ \\
\qquad Encoder weight decay & \qquad $0.01$ \\
\qquad Encoder $z_s$ dim & \qquad $512$ \\
\qquad Encoder $z_{sa}$ dim & \qquad $512$ \\
\qquad Encoder $z_a$ dim (used only in the architecture) & \qquad $256$ \\
\qquad Encoder hidden dimension & \qquad $750$ \\
\qquad Encoder activation function & \qquad ELU \citep{clevert2015fast} \\
\qquad Encoder reward bins & \qquad $65$ \\
\qquad Encoder reward range & \qquad $[-10, 10]$ \\
\qquad Encoder horizon $H$ & \qquad $5$ \\
\qquad Actor learning rate & \qquad $3\times 10^{-4}$ \\
\qquad Actor hidden dim & \qquad $512$ \\
\qquad Actor activation function & \qquad ReLU \\
\qquad Critic learning rate & \qquad $3\times 10^{-4}$ \\
\qquad Critic hidden dim & \qquad $512$ \\
\qquad Critic activation function & \qquad ELU \citep{clevert2015fast} \\
\qquad Critic multi-step return horizon $H_Q$ & \qquad $3$ \\
\qquad Latent consistency loss weight $\lambda_d$ & \qquad $1$ \\
\qquad Reward loss weight $\lambda_r$ & \qquad $0.1$ \\
\qquad InfoNCE loss weight $\lambda_m$ & \qquad $0.1$ \\
\qquad Faded PER decay rate $\epsilon$ & \qquad $0.0001$ \\
\qquad Faded PER forget weight threshold $\epsilon_{\rm low}$ & \qquad $0.1$ \\
\bottomrule
\end{tabular}
\end{table}

\clearpage
\section{Full Main Results}
\label{sec:fullmainresults}

In this section, we provide complete and thorough comparison of DR.Q against extended prior domain-specific or general algorithms. We also provide learning curves of DR.Q against some typical algorithms on each domain. We focus on MR.Q \citep{fujimoto2025towards}, SimBaV2 \citep{Lee2025HypersphericalNF}, TDMPC2 \citep{hansen2024tdmpc}, and FoG \citep{kang2025a} as main baselines. These methods are selected due to their strong performance across numerous challenging benchmarks. The learning curves of some algorithms are omitted since the authors do not provide raw logs or result files.

The shaded areas in the figures and the gray bracketed terms in the tables denote 95\% bootstrapped confidence interval. Following SimBaV2 \citep{Lee2025HypersphericalNF}, we compute the 95\% bootstrapped confidence interval for each task across $n$ random seeds via:
\begin{equation*}
    {\rm CI} = \left[ \mu - 1.96\times \dfrac{\sigma}{\sqrt{n}}, \mu+ 1.96\times \dfrac{\sigma}{\sqrt{n}} \right],
\end{equation*}
where $\mu,\sigma$ are the sample mean and the sample standard deviation, respectively. For aggregated mean, median, interquartile mean (IQM), confidence intervals are computed over $n\times T$ samples using rliable\footnote{\href{https://github.com/google-research/rliable}{https://github.com/google-research/rliable}} \citep{agarwal2021deep}, where $T$ is the number of evaluated tasks in the benchmark.

\clearpage
\subsection{Gym MuJoCo Results}
\label{sec:gymmujocoresults}

As shown in Table \ref{table:appendix_full_mujoco}, DR.Q outperforms SimBaV2 in 3 out of 5 environments, though the average normalized score of DR.Q slightly lags behind that of SimBaV2. This is mainly due to the poor performance on Hopper-v4, where other recent strong methods like MR.Q and FoG also fail to achieve strong performance.

\begin{table}[h]
\centering
\parbox{\textwidth}
{
\caption{\textbf{Full comparison results on Gym MuJoCo tasks.} We include a wide range of domain-specific or general model-free and model-based RL algorithms for comparison. We report the final average performance at 1M environment step for each task. The \textcolor{gray}{[brackets]} denote a 95\% bootstrap confidence interval. The aggregate mean, median and interquartile mean (IQM) are computed over the TD3-normalized score as described in Appendix~\ref{sec:environmentdetails}.}
\small
\centering
\vspace{0.05in} 
\label{table:appendix_full_mujoco}
\resizebox{\textwidth}{!}{
\begin{tabular}{llllllll}
\\[0.3ex]
\toprule
\textbf{Task}  & \textbf{TD7} & \textbf{TDMPC2} & \textbf{MR.Q} & \textbf{FoG} & \textbf{SimbaV2} & \textbf{DR.Q} \\ \midrule
Ant-v4 &  8509 \textcolor{gray}{[8168, 8844]} & 4751 \textcolor{gray}{[2988, 6145]} & 6901 \textcolor{gray}{[6261, 7482]} &  6761 \textcolor{gray}{[6161, 7360]}  & 7429 \textcolor{gray}{[7209, 7649]} & 8138 \textcolor{gray}{[7764, 8511]}  \\
HalfCheetah-v4 &  17433 \textcolor{gray}{[17301, 17559]} & 15078 \textcolor{gray}{[14065, 15932]} & 12939 \textcolor{gray}{[11663, 13762]} &  11709 \textcolor{gray}{[9928, 13491]}  & 12022 \textcolor{gray}{[11640, 12404]}  & 14775 \textcolor{gray}{[14638, 14912]}   \\
Hopper-v4 &  3511 \textcolor{gray}{[3236, 3736]} & 2081 \textcolor{gray}{[1197, 2921]}  & 2692 \textcolor{gray}{[2131, 3309]} & 1822 \textcolor{gray}{[1316, 2327]} & 4054 \textcolor{gray}{[3929, 4179]} & 2504 \textcolor{gray}{[1931, 3077]} \\
Humanoid-v4 & 7428 \textcolor{gray}{[7304, 7553]} & 6071 \textcolor{gray}{[5770, 6333]} & 10223 \textcolor{gray}{[9929, 10498]} & 6737 \textcolor{gray}{[6319, 7155]} & 10546 \textcolor{gray}{[10195, 10897]} & 11239 \textcolor{gray}{[11052, 11426]}  \\
Walker2d-v4 & 6096 \textcolor{gray}{[5621, 6547]} & 3008 \textcolor{gray}{[1706, 4321]} & 6039 \textcolor{gray}{[5644, 6386]} & 5124 \textcolor{gray}{[4719, 5529]} & 6938 \textcolor{gray}{[6691, 7185]} &  6422 \textcolor{gray}{[5123, 7721]}  \\ \midrule
IQM &  1.540 \textcolor{gray}{[1.500, 1.580]} & 1.050 \textcolor{gray}{[0.890, 1.190]} & 1.499 \textcolor{gray}{[1.361, 1.650]} & 1.242 \textcolor{gray}{[1.117, 1.349]} & 1.637 \textcolor{gray}{[1.470, 1.791]} & 1.691 \textcolor{gray}{[1.473, 1.879]} \\
Median & 1.550 \textcolor{gray}{[1.450, 1.630]} & 1.180 \textcolor{gray}{[0.830, 1.220]} & 1.488 \textcolor{gray}{[1.340, 1.623]} & 1.261 \textcolor{gray}{[1.080, 1.344]} & 1.616 \textcolor{gray}{[1.49, 1.744]} & 1.564 \textcolor{gray}{[1.416, 1.806]} \\
Mean & 1.570 \textcolor{gray}{[1.540, 1.600]} & 1.040 \textcolor{gray}{[0.920, 1.150]} & 1.465 \textcolor{gray}{[1.346, 1.585]} & 1.196 \textcolor{gray}{[1.082, 1.307]} & 1.617 \textcolor{gray}{[1.513, 1.718]} & 1.608 \textcolor{gray}{[1.449, 1.759]} \\
\bottomrule
\end{tabular}
}
\resizebox{\textwidth}{!}{
\begin{tabular}{llllllllll}
\\[0.3ex]
\toprule
\textbf{Task} & \textbf{PPO} & \textbf{DreamerV3} & \textbf{SimBa} & \textbf{TD3+OFE} & \textbf{TQC} & \textbf{REDQ} & \textbf{DroQ} & \textbf{CrossQ} & \textbf{BRO} \\ \midrule
Ant-v4 & 1584 \textcolor{gray}{[1360, 1815]} & 1947 \textcolor{gray}{[1076, 2813]} & 5882 \textcolor{gray}{[5354, 6411]} & 7398 \textcolor{gray}{[7280, 7516]} & 3582 \textcolor{gray}{[2489, 4675]} & 5314 \textcolor{gray}{[4539, 6090]} & 5965 \textcolor{gray}{[5560, 6370]} & 6980 \textcolor{gray}{[6834, 7126]} & 7027 \textcolor{gray}{[6710, 7343]} \\
HalfCheetah-v4 & 1744 \textcolor{gray}{[1523, 2118]} & 5502 \textcolor{gray}{[3717, 7123]} & 9422 \textcolor{gray}{[8745, 10100]} & 13758 \textcolor{gray}{[13214, 14302]} & 12349 \textcolor{gray}{[11471, 13227]} & 11505 \textcolor{gray}{[10213, 12798]} & 11070 \textcolor{gray}{[10272, 11867]} & 12893 \textcolor{gray}{[11771, 14015]} & 13747 \textcolor{gray}{[12621, 14873]} \\
Hopper-v4 & 3022 \textcolor{gray}{[2633, 3339]} & 2666 \textcolor{gray}{[2106, 3210]} & 3231 \textcolor{gray}{[3004, 3458]} & 3121 \textcolor{gray}{[2615, 3627]} & 3526 \textcolor{gray}{[3302, 3750]} & 3299 \textcolor{gray}{[2730, 3869]} & 2797 \textcolor{gray}{[2387, 3208]} & 2467 \textcolor{gray}{[1855, 3079]} & 2122 \textcolor{gray}{[1655, 2588]} \\
Humanoid-v4 & 477 \textcolor{gray}{[436, 518]} & 4217 \textcolor{gray}{[2785, 5523]} & 6513 \textcolor{gray}{[5634, 7392]} & 6032 \textcolor{gray}{[5698, 6366]} & 6029 \textcolor{gray}{[5498, 6560]} & 5278 \textcolor{gray}{[5127, 5430]} & 5380 \textcolor{gray}{[5353, 5407]} & 10480 \textcolor{gray}{[10307, 10653]} & 4757 \textcolor{gray}{[3139, 6376]} \\
Walker2d-v4 & 2487 \textcolor{gray}{[1907, 3022]} & 4519 \textcolor{gray}{[3692, 5244]} & 4290 \textcolor{gray}{[3864, 4716]} & 5195 \textcolor{gray}{[4683, 5707]} & 5321 \textcolor{gray}{[4999, 5643]} & 5228 \textcolor{gray}{[4836, 5620]} & 4781 \textcolor{gray}{[4539, 5024]} & 6257 \textcolor{gray}{[5277, 7237]} & 3432 \textcolor{gray}{[2064, 4801]} \\ \midrule
IQM & 0.410 \textcolor{gray}{[0.110, 0.834]} & 0.720 \textcolor{gray}{[0.620, 0.850]} & 1.114 \textcolor{gray}{[1.043, 1.200]} & 1.261 \textcolor{gray}{[1.035, 1.680]} & 1.143 \textcolor{gray}{[0.971, 1.290]} & 1.135 \textcolor{gray}{[1.086, 1.194]} & 1.108 \textcolor{gray}{[1.055, 1.170]} & 1.565 \textcolor{gray}{[1.394, 1.710]} & 1.071 \textcolor{gray}{[0.828, 1.333]} \\
Median & 0.412 \textcolor{gray}{[0.071, 0.936]} & 0.810 \textcolor{gray}{[0.580, 0.930]} & 1.143 \textcolor{gray}{[1.063, 1.227]} & 1.293 \textcolor{gray}{[0.967, 1.861]} & 1.163 \textcolor{gray}{[0.910, 1.349]} & 1.188 \textcolor{gray}{[1.086, 1.241]} & 1.133 \textcolor{gray}{[1.068, 1.199]} & 1.489 \textcolor{gray}{[1.317, 1.643]} & 1.071 \textcolor{gray}{[0.884, 1.322]} \\
Mean & 0.447 \textcolor{gray}{[0.186, 0.725]} & 0.760 \textcolor{gray}{[0.670, 0.860]} & 1.147 \textcolor{gray}{[1.075, 1.223]} & 1.322 \textcolor{gray}{[1.090, 1.615]} & 1.137 \textcolor{gray}{[1.012, 1.261]} & 1.160 \textcolor{gray}{[1.096, 1.224]} & 1.134 \textcolor{gray}{[1.067, 1.205]} & 1.475 \textcolor{gray}{[1.330, 1.608]} & 1.101 \textcolor{gray}{[0.927, 1.278]} \\
\bottomrule
\end{tabular}
}
}
\end{table}

\begin{figure}[!h]
    \centering
    \includegraphics[width=0.97\linewidth]{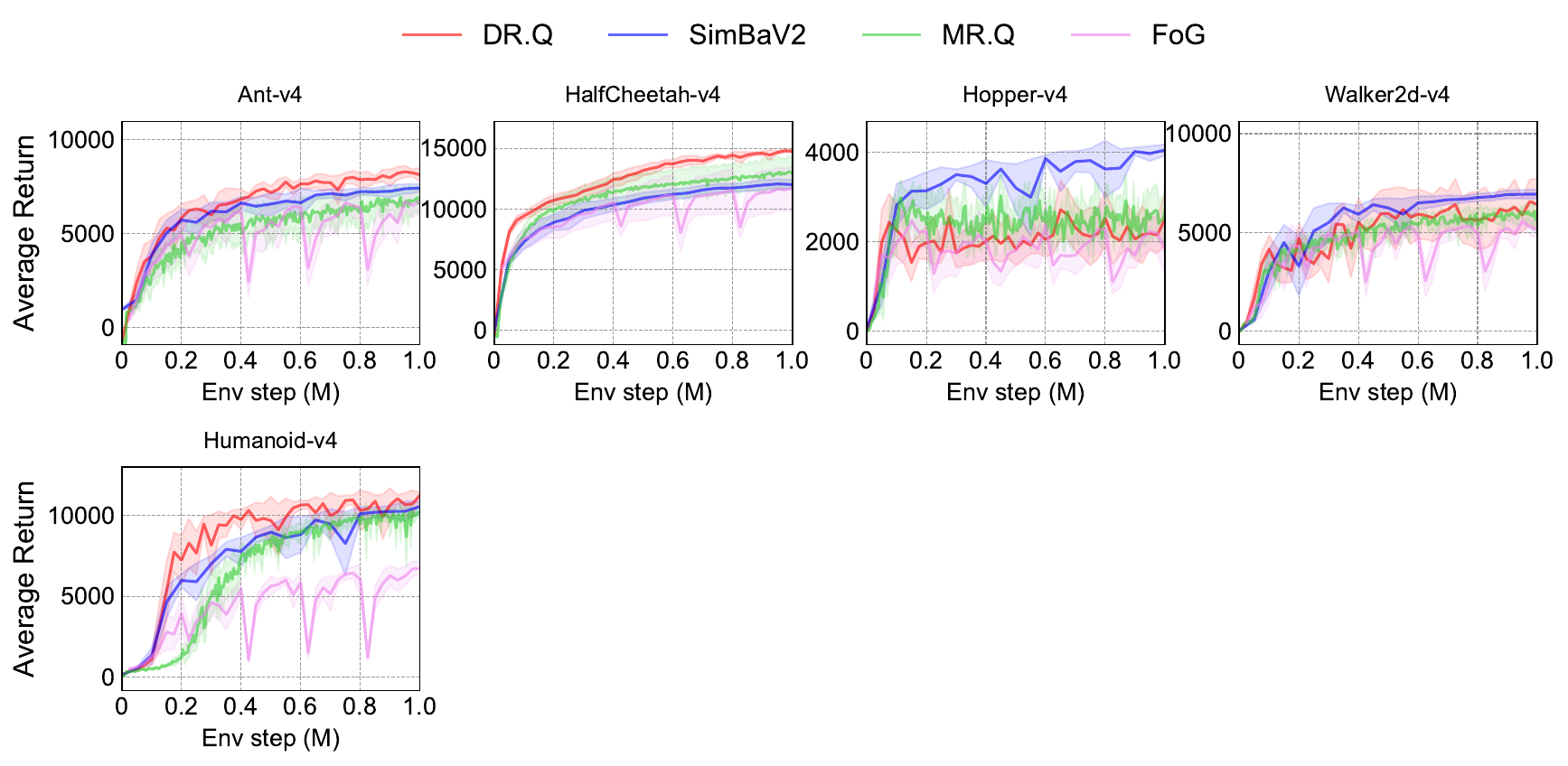}
    \caption{\textbf{Full learning curves on Gym MuJoCo tasks.} We report the average episode return results across 10 random seeds. The shaded region denotes the 95\% bootstrap confidence intervals.}
    \label{fig:gymmujocoresults}
\end{figure}

\clearpage
\subsection{DMC Suite Easy Results}
\label{sec:dmceasyresults}

\begin{table}[h]
\centering
\parbox{\textwidth}{
\caption{\textbf{Full performance comparison results in DMC-Easy tasks.} We report the final average performance at 500K steps (1M environment steps due to the action repeat 2). The values in \textcolor{gray}{[brackets]} is a 95\% bootstrap confidence interval. The aggregate mean, median and interquartile mean (IQM) are reported in units of 1k.}
\small
\centering
\vspace{0.05in}
\label{table:appendix_full_dmc_easy}
\begin{tabular}{llllllll}
\toprule
\textbf{Task} & \textbf{MR.Q} & \textbf{Simba} & \textbf{SimbaV2} & \textbf{FoG} & \textbf{DR.Q} \\ \midrule
acrobot-swingup & 567 \textcolor{gray}{[523, 616]} & 431 \textcolor{gray}{[379, 482]} & 436 \textcolor{gray}{[391, 482]} & 414 \textcolor{gray}{[344, 485]} & 569 \textcolor{gray}{[519, 619]} \\
ball-in-cup-catch  & 981 \textcolor{gray}{[979, 984]} & 981 \textcolor{gray}{[978, 983]} & 982 \textcolor{gray}{[980, 984]} & 983 \textcolor{gray}{[981, 985]} & 980 \textcolor{gray}{[979, 982]} \\
cartpole-balance  & 999 \textcolor{gray}{[999, 1000]} & 998 \textcolor{gray}{[998, 999]} & 999 \textcolor{gray}{[999, 999]} & 997 \textcolor{gray}{[996, 999]} & 999 \textcolor{gray}{[999, 1000]} \\
cartpole-balance-sparse & 1000 \textcolor{gray}{[1000, 1000]} & 991 \textcolor{gray}{[973, 1008]} & 967 \textcolor{gray}{[904, 1030]} & 1000 \textcolor{gray}{[1000, 1000]} & 987 \textcolor{gray}{[963, 1012]} \\
cartpole-swingup & 866 \textcolor{gray}{[866, 866]} & 876 \textcolor{gray}{[871, 881]} & 880 \textcolor{gray}{[876, 883]} & 881 \textcolor{gray}{[880, 882]} & 867 \textcolor{gray}{[866, 867]}\\
cartpole-swingup-sparse & 798 \textcolor{gray}{[780, 818]} & 825 \textcolor{gray}{[795, 854]} & 848 \textcolor{gray}{[848, 849]} & 840 \textcolor{gray}{[829, 850]} & 805 \textcolor{gray}{[791, 818]} \\
cheetah-run & 877 \textcolor{gray}{[849, 905]} & 920 \textcolor{gray}{[918, 922]} & 821 \textcolor{gray}{[642, 913]} & 838 \textcolor{gray}{[732, 944]} & 911 \textcolor{gray}{[905, 918]} \\
finger-spin & 937 \textcolor{gray}{[917, 956]} & 849 \textcolor{gray}{[758, 939]} & 891 \textcolor{gray}{[810, 972]} & 987 \textcolor{gray}{[986, 989]} & 949 \textcolor{gray}{[917, 980]}\\
finger-turn-easy & 953 \textcolor{gray}{[931, 974]} & 935 \textcolor{gray}{[903, 968]} & 953 \textcolor{gray}{[925, 980]} & 949 \textcolor{gray}{[920, 977]} & 956 \textcolor{gray}{[932, 980]}\\
finger-turn-hard & 950 \textcolor{gray}{[910, 974]} & 915 \textcolor{gray}{[859, 972]} & 951 \textcolor{gray}{[925, 977]} & 921 \textcolor{gray}{[863, 978]} & 949 \textcolor{gray}{[923, 975]}\\
fish-swim & 792 \textcolor{gray}{[773, 810]} & 823 \textcolor{gray}{[799, 846]} & 826 \textcolor{gray}{[806, 846]} & 744 \textcolor{gray}{[701, 786]} & 808 \textcolor{gray}{[788, 828]} \\
hopper-hop & 251 \textcolor{gray}{[195, 301]} & 385 \textcolor{gray}{[322, 449]} & 290 \textcolor{gray}{[233, 348]} & 335 \textcolor{gray}{[326, 345]} & 384 \textcolor{gray}{[317, 451]}\\
hopper-stand & 951 \textcolor{gray}{[948, 955]} & 929 \textcolor{gray}{[900, 957]} & 944 \textcolor{gray}{[926, 962]} & 956 \textcolor{gray}{[953, 959]} & 954 \textcolor{gray}{[949, 959]}\\
pendulum-swingup & 748 \textcolor{gray}{[597, 829]} & 737 \textcolor{gray}{[575, 899]} & 827 \textcolor{gray}{[805, 849]} & 838 \textcolor{gray}{[810, 866]} & 835 \textcolor{gray}{[819, 852]} \\
quadruped-run & 947 \textcolor{gray}{[940, 954]} & 928 \textcolor{gray}{[916, 939]} & 935 \textcolor{gray}{[928, 943]} & 918 \textcolor{gray}{[906, 929]} & 953 \textcolor{gray}{[949, 957]}\\
quadruped-walk & 963 \textcolor{gray}{[959, 967]} & 957 \textcolor{gray}{[951, 963]} & 962 \textcolor{gray}{[955, 969]} & 963 \textcolor{gray}{[960, 966]} & 969 \textcolor{gray}{[964, 973]} \\
reacher-easy & 983 \textcolor{gray}{[983, 985]} & 983 \textcolor{gray}{[981, 986]} & 983 \textcolor{gray}{[979, 986]} & 980 \textcolor{gray}{[971, 990]} & 975 \textcolor{gray}{[958, 993]} \\
reacher-hard & 977 \textcolor{gray}{[975, 980]} & 966 \textcolor{gray}{[947, 984]} & 967 \textcolor{gray}{[946, 987]} & 965 \textcolor{gray}{[944, 986]} & 976 \textcolor{gray}{[973, 979]} \\
walker-run & 793 \textcolor{gray}{[765, 815]} & 796 \textcolor{gray}{[792, 801]} & 817 \textcolor{gray}{[812, 821]} & 851 \textcolor{gray}{[848, 853]} & 809 \textcolor{gray}{[775, 844]} \\
walker-stand  & 988 \textcolor{gray}{[987, 990]} & 985 \textcolor{gray}{[982, 989]} & 987 \textcolor{gray}{[984, 990]} & 987 \textcolor{gray}{[985, 989]} & 991 \textcolor{gray}{[989, 992]}\\
walker-walk & 978 \textcolor{gray}{[978, 980]} & 975 \textcolor{gray}{[972, 978]} & 976 \textcolor{gray}{[974, 978]} & 978 \textcolor{gray}{[977, 980]} & 979 \textcolor{gray}{[976, 982]} \\ \midrule
IQM & 0.936 \textcolor{gray}{[0.917, 0.952]} & 0.922 \textcolor{gray}{[0.905, 0.938]} & 0.933 \textcolor{gray}{[0.918, 0.948]} & 0.935 \textcolor{gray}{[0.919, 0.951]} & 0.937 \textcolor{gray}{[0.920, 0.951]} \\
Median & 0.876 \textcolor{gray}{[0.847, 0.905]} & 0.870 \textcolor{gray}{[0.841, 0.896]} & 0.875 \textcolor{gray}{[0.847, 0.905]} & 0.874 \textcolor{gray}{[0.845, 0.904]} & 0.885 \textcolor{gray}{[0.863, 0.912]} \\
Mean & 0.874 \textcolor{gray}{[0.848, 0.898]} & 0.864 \textcolor{gray}{[0.84, 0.887]} & 0.874 \textcolor{gray}{[0.849, 0.897]} & 0.873 \textcolor{gray}{[0.847, 0.897]} & 0.886 \textcolor{gray}{[0.865, 0.906]} \\
\bottomrule
\end{tabular}

\begin{tabular}{llllllll}
\\[0.3ex]
\toprule
\textbf{Task} & \textbf{DreamerV3} & \textbf{TD7} & \textbf{TDMPC2} & \textbf{BRO} & \textbf{DR.Q} \\ \midrule
acrobot-swingup & 230 \textcolor{gray}{[193, 266]} & 58 \textcolor{gray}{[38, 75]} & 584 \textcolor{gray}{[551, 615]} & 529 \textcolor{gray}{[504, 555]} & 569 \textcolor{gray}{[519, 619]}  \\
ball-in-cup-catch & 968 \textcolor{gray}{[965, 973]} & 984 \textcolor{gray}{[982, 986]} & 983 \textcolor{gray}{[981, 985]} &  982 \textcolor{gray}{[981, 984]} & 980 \textcolor{gray}{[979, 982]}\\
cartpole-balance & 998 \textcolor{gray}{[997, 1000]} & 999 \textcolor{gray}{[998, 1000]} & 996 \textcolor{gray}{[995, 998]} & 999 \textcolor{gray}{[998,999]} & 999 \textcolor{gray}{[999, 1000]} \\
cartpole-balance-sparse & 1000 \textcolor{gray}{[1000, 1000]} & 999 \textcolor{gray}{[1000, 1000]} & 1000 \textcolor{gray}{[1000, 1000]} & 852 \textcolor{gray}{[563, 1141]} & 987 \textcolor{gray}{[963, 1012]} \\
cartpole-swingup & 736 \textcolor{gray}{[591, 838]} & 869 \textcolor{gray}{[866, 873]} & 875 \textcolor{gray}{[870, 880]} & 879 \textcolor{gray}{[877, 882]} & 867 \textcolor{gray}{[866, 867]}  \\
cartpole-swingup-sparse & 702 \textcolor{gray}{[560, 792]} & 573 \textcolor{gray}{[333, 806]} & 845 \textcolor{gray}{[839, 849]} & 840 \textcolor{gray}{[827, 852]} & 805 \textcolor{gray}{[791, 818]} \\
cheetah-run & 917 \textcolor{gray}{[915, 920]} & 699 \textcolor{gray}{[655, 744]} & 914 \textcolor{gray}{[911, 917]} & 863 \textcolor{gray}{[822, 904]} & 911 \textcolor{gray}{[905, 918]} \\
finger-spin & 666 \textcolor{gray}{[577, 763]} & 335 \textcolor{gray}{[99, 596]} & 986 \textcolor{gray}{[986, 988]} & 988 \textcolor{gray}{[987, 989]} & 949 \textcolor{gray}{[917, 980]} \\
finger-turn-easy & 906 \textcolor{gray}{[883, 927]} & 912 \textcolor{gray}{[774, 983]} & 979 \textcolor{gray}{[975, 983]} & 957 \textcolor{gray}{[923, 992]} & 956 \textcolor{gray}{[932, 980]} \\
finger-turn-hard & 864 \textcolor{gray}{[812, 900]} & 470 \textcolor{gray}{[199, 727]} & 947 \textcolor{gray}{[916, 977]} & 957 \textcolor{gray}{[920, 993]} & 949 \textcolor{gray}{[923, 975]} \\
fish-swim & 813 \textcolor{gray}{[808, 819]} & 86 \textcolor{gray}{[64, 120]} & 659 \textcolor{gray}{[615, 706]} & 618 \textcolor{gray}{[523, 713]} & 808 \textcolor{gray}{[788, 828]} \\
hopper-hop & 116 \textcolor{gray}{[66, 165]} & 87 \textcolor{gray}{[25, 160]} & 425 \textcolor{gray}{[368, 500]} & 295 \textcolor{gray}{[273, 316]} & 384 \textcolor{gray}{[317, 451]} \\
hopper-stand & 747 \textcolor{gray}{[669, 806]} & 670 \textcolor{gray}{[466, 829]} & 952 \textcolor{gray}{[944, 958]} & 949 \textcolor{gray}{[941, 957]} & 954 \textcolor{gray}{[949, 959]} \\
pendulum-swingup & 774 \textcolor{gray}{[740, 802]} & 500 \textcolor{gray}{[251, 743]} & 846 \textcolor{gray}{[830, 862]} &  829 \textcolor{gray}{[795, 864]} & 835 \textcolor{gray}{[819, 852]} \\
quadruped-run & 130 \textcolor{gray}{[92, 169]} & 645 \textcolor{gray}{[567, 713]} & 942 \textcolor{gray}{[938, 947]} &  859 \textcolor{gray}{[824, 895]} & 953 \textcolor{gray}{[949, 957]} \\
quadruped-walk & 193 \textcolor{gray}{[137, 243]} & 949 \textcolor{gray}{[939, 957]} & 963 \textcolor{gray}{[959, 967]} &  958 \textcolor{gray}{[949, 967]} & 969 \textcolor{gray}{[964, 973]} \\
reacher-easy & 966 \textcolor{gray}{[964, 970]} & 970 \textcolor{gray}{[951, 982]} & 983 \textcolor{gray}{[980, 986]} & 983 \textcolor{gray}{[983, 984]} & 975 \textcolor{gray}{[958, 993]} \\
reacher-hard & 919 \textcolor{gray}{[864, 955]} & 898 \textcolor{gray}{[861, 936]} & 960 \textcolor{gray}{[936, 979]} &  974 \textcolor{gray}{[970, 978]} & 976 \textcolor{gray}{[973, 979]} \\
walker-run & 510 \textcolor{gray}{[430, 588]} & 804 \textcolor{gray}{[783, 825]} & 854 \textcolor{gray}{[851, 859]} & 790 \textcolor{gray}{[776, 805]} & 809 \textcolor{gray}{[775, 844]} \\
walker-stand & 941 \textcolor{gray}{[934, 948]} & 983 \textcolor{gray}{[974, 989]} & 991 \textcolor{gray}{[990, 994]} & 990 \textcolor{gray}{[986, 994]} & 991 \textcolor{gray}{[989, 992]} \\
walker-walk & 898 \textcolor{gray}{[875, 919]} & 977 \textcolor{gray}{[975, 980]} & 981 \textcolor{gray}{[979, 984]} & 979 \textcolor{gray}{[975, 983]} & 979 \textcolor{gray}{[976, 982]} \\ \midrule
IQM & 0.813 \textcolor{gray}{[0.621, 0.899]} & 0.771 \textcolor{gray}{[0.570, 0.907]} & 0.941 \textcolor{gray}{[0.880, 0.973]} & 0.928 \textcolor{gray}{[0.899, 0.952]} & 0.937 \textcolor{gray}{[0.920, 0.951]}  \\
Median & 0.813 \textcolor{gray}{[0.702, 0.917]}
 & 0.804 \textcolor{gray}{[0.573, 0.949]} & 0.952 \textcolor{gray}{[0.875, 0.981]} & 0.872 \textcolor{gray}{[0.819, 0.906]} & 0.885 \textcolor{gray}{[0.863, 0.912]} \\
Mean & 0.714 \textcolor{gray}{[0.584, 0.832]} & 0.689 \textcolor{gray}{[0.548, 0.816]} & 0.889 \textcolor{gray}{[0.819, 0.946]} & 0.861 \textcolor{gray}{[0.823, 0.896]} & 0.886 \textcolor{gray}{[0.865, 0.906]} \\
\bottomrule
\end{tabular}}

\end{table}

\begin{figure}[!h]
    \centering
    \includegraphics[width=0.97\linewidth]{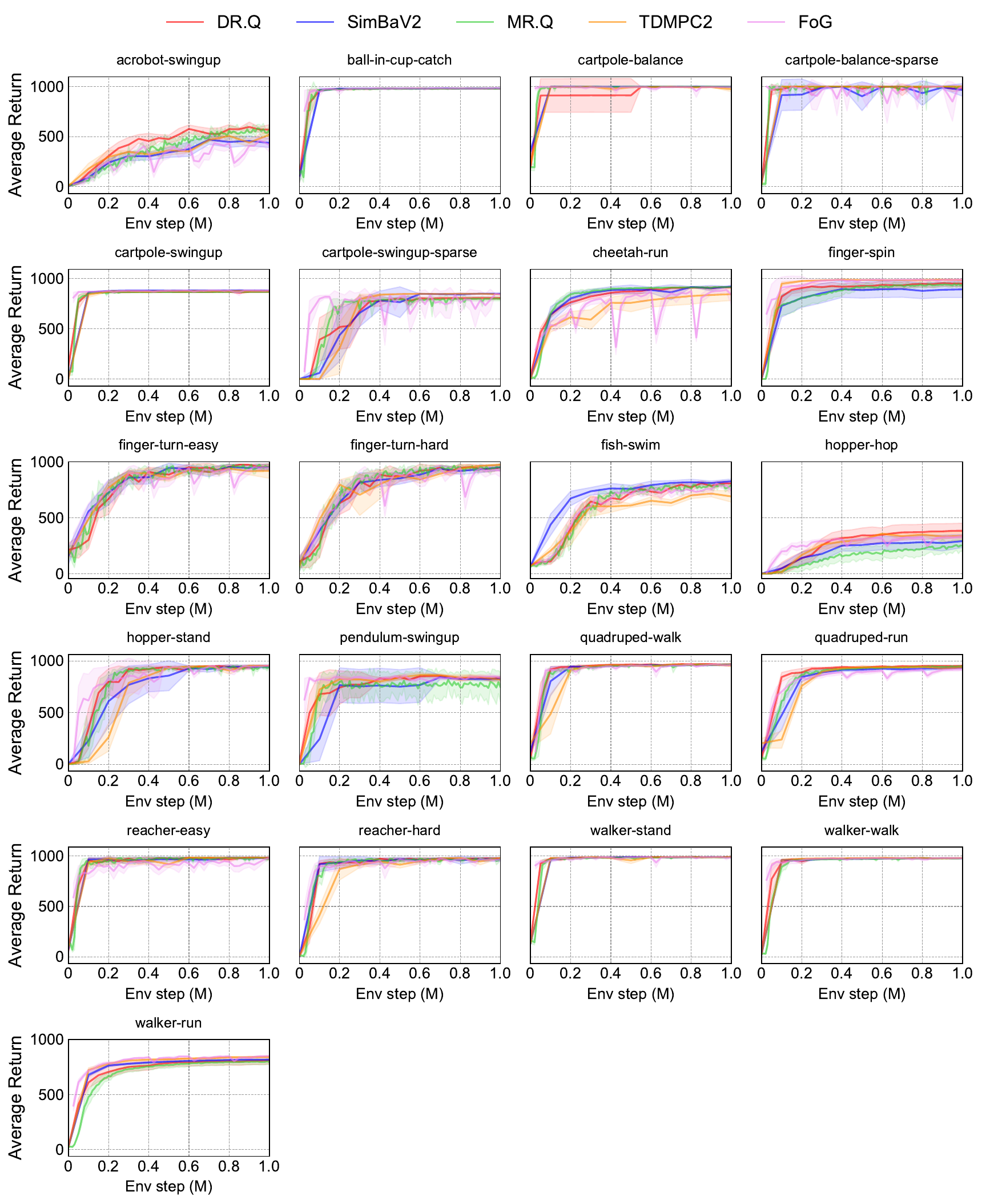}
    \caption{\textbf{Full learning curves on DMC suite easy tasks.} The solid lines denote the average return in each environment and the shaded regions denote the 95\% bootstrap confidence intervals.}
    \label{fig:dmceasyresults}
\end{figure}

\clearpage
\subsection{DMC Suite Hard Results}
\label{sec:dmchardresults}

\begin{table}[h]
\centering
\parbox{\textwidth}{
\caption{\textbf{Full performance comparison on DMC-Hard tasks.} We report the final average return results at 500K steps (1M environment step due to action repeat 2). The \textcolor{gray}{[bracketed values]} represent a 95\% bootstrap confidence interval. The aggregate mean, median and interquartile mean (IQM) are reported in units of 1k.}
\small
\centering
\vspace{0.05in}
\label{table:appendix_full_dmc_hard}
\resizebox{\textwidth}{!}{
\begin{tabular}{lllllll}
\toprule
\textbf{Task} & \textbf{TDMPC2} & \textbf{MR.Q} & \textbf{Simba} & \textbf{SimbaV2} & \textbf{FoG}  &  \textbf{DR.Q}  \\ \midrule
dog-run & 265 \textcolor{gray}{[166, 342]} & 569 \textcolor{gray}{[547, 595]}  & 544 \textcolor{gray}{[525, 564]} & 562 \textcolor{gray}{[516, 608]} & 613 \textcolor{gray}{[577, 648]} & 721 \textcolor{gray}{[684, 758]} \\
dog-stand  & 506 \textcolor{gray}{[266, 715]} & 967 \textcolor{gray}{[960, 975]} & 960 \textcolor{gray}{[951, 969]} & 981 \textcolor{gray}{[977, 985]} & 976 \textcolor{gray}{[969, 982]} & 972 \textcolor{gray}{[963, 982]}  \\
dog-trot & 407 \textcolor{gray}{[265, 530]} & 877 \textcolor{gray}{[845, 898]} & 824 \textcolor{gray}{[773, 876]} & 861 \textcolor{gray}{[772, 950]} & 901 \textcolor{gray}{[892, 911]} & 925 \textcolor{gray}{[914, 936]} \\
dog-walk & 486 \textcolor{gray}{[240, 704]} & 916 \textcolor{gray}{[908, 924]} & 916 \textcolor{gray}{[905, 928]} & 935 \textcolor{gray}{[927, 944]} &  921 \textcolor{gray}{[909, 933]} & 950 \textcolor{gray}{[942, 958]}  \\
humanoid-run & 181 \textcolor{gray}{[121, 231]} & 200 \textcolor{gray}{[170, 236]} & 181 \textcolor{gray}{[171, 191]} & 194 \textcolor{gray}{[182, 207]} &  292 \textcolor{gray}{[268, 317]} & 465 \textcolor{gray}{[444, 485]}  \\
humanoid-stand & 658 \textcolor{gray}{[506, 745]} & 868 \textcolor{gray}{[822, 903]} & 846 \textcolor{gray}{[801, 890]} & 916 \textcolor{gray}{[886, 945]} & 931 \textcolor{gray}{[921, 941]} & 938 \textcolor{gray}{[932, 944]} \\
humanoid-walk & 754 \textcolor{gray}{[725, 791]} & 662 \textcolor{gray}{[610, 724]} & 668 \textcolor{gray}{[608, 728]} & 651 \textcolor{gray}{[590, 713]} & 878 \textcolor{gray}{[839, 917]} & 925 \textcolor{gray}{[918, 932]}  \\ \midrule
IQM & 0.464 \textcolor{gray}{[0.305, 0.632]} & 0.796 \textcolor{gray}{[0.724, 0.860]} & 0.773 \textcolor{gray}{[0.713, 0.83]} & 0.808 \textcolor{gray}{[0.726, 0.879]} & 0.880 \textcolor{gray}{[0.818, 0.914]} & 0.917 \textcolor{gray}{[0.871, 0.936]} \\
Median & 0.486 \textcolor{gray}{[0.265, 0.658]} & 0.722 \textcolor{gray}{[0.654, 0.797]} & 0.706 \textcolor{gray}{[0.647, 0.772]} & 0.729 \textcolor{gray}{[0.655, 0.808]} & 0.788 \textcolor{gray}{[0.724, 0.855]} & 0.844 \textcolor{gray}{[0.796, 0.893]} \\
Mean & 0.465 \textcolor{gray}{[0.329, 0.606]} & 0.723 \textcolor{gray}{[0.660, 0.781]} & 0.706 \textcolor{gray}{[0.656, 0.755]} & 0.729 \textcolor{gray}{[0.664, 0.791]} & 0.787 \textcolor{gray}{[0.730, 0.840]} & 0.842 \textcolor{gray}{[0.800, 0.881]} \\
\bottomrule
\end{tabular}
} 
\resizebox{\textwidth}{!}{
\begin{tabular}{llllllll}
\\[0.3ex]
\toprule
\textbf{Task} & \textbf{DreamerV3} & \textbf{TD7} & \textbf{PPO} & \textbf{iQRL} & \textbf{BRO} & \textbf{MAD-TD} \\
\midrule
dog-run & 4 \textcolor{gray}{[4, 5]} & 69 \textcolor{gray}{[36, 101]} & 26 \textcolor{gray}{[26, 28]} & 380 \textcolor{gray}{[336, 424]} & 374 \textcolor{gray}{[338, 411]} & 437 \textcolor{gray}{[396, 478]} \\
dog-stand & 22 \textcolor{gray}{[20, 27]} & 582 \textcolor{gray}{[432, 741]} & 129 \textcolor{gray}{[122, 139]} & 926 \textcolor{gray}{[897, 955]} & 966 \textcolor{gray}{[956, 977]} & 967 \textcolor{gray}{[952, 982]} \\
dog-trot & 10 \textcolor{gray}{[6, 17]} & 21 \textcolor{gray}{[13, 30]} & 31 \textcolor{gray}{[30, 34]} & 713 \textcolor{gray}{[516, 909]} & 783 \textcolor{gray}{[717, 848]} & 867 \textcolor{gray}{[805, 929]} \\
dog-walk & 17 \textcolor{gray}{[15, 21]} & 52 \textcolor{gray}{[19, 116]} & 40 \textcolor{gray}{[37, 43]} & 866 \textcolor{gray}{[827, 905]} & 931 \textcolor{gray}{[920, 942]} & 924 \textcolor{gray}{[906, 943]} \\
humanoid-run & 0 \textcolor{gray}{[1, 1]} & 57 \textcolor{gray}{[23, 92]} & 0 \textcolor{gray}{[1, 1]} & 188 \textcolor{gray}{[167, 210]} & 204 \textcolor{gray}{[186, 223]} & 200 \textcolor{gray}{[180, 220]} \\
humanoid-stand & 5 \textcolor{gray}{[5, 6]} & 317 \textcolor{gray}{[117, 516]} & 5 \textcolor{gray}{[5, 6]} & 727 \textcolor{gray}{[655, 799]} & 920 \textcolor{gray}{[909, 931]} & 870 \textcolor{gray}{[840, 901]} \\
humanoid-walk & 1 \textcolor{gray}{[1, 2]} & 176 \textcolor{gray}{[42, 320]} & 1 \textcolor{gray}{[1, 2]} & 688 \textcolor{gray}{[642, 735]} & 672 \textcolor{gray}{[619, 725]} & 684 \textcolor{gray}{[609, 759]} \\ \midrule
IQM & 0.008 \textcolor{gray}{[0.002, 0.016]} & 0.134 \textcolor{gray}{[0.047, 0.343]} & 0.021 \textcolor{gray}{[0.003, 0.069]} & 0.694 \textcolor{gray}{[0.528, 0.805]} & 0.772 \textcolor{gray}{[0.662, 0.854]} & 0.787 \textcolor{gray}{[0.691, 0.865]} \\
Median & 0.005 \textcolor{gray}{[0.001, 0.018]} & 0.069 \textcolor{gray}{[0.052, 0.317]} & 0.026 \textcolor{gray}{[0.001, 0.040]} & 0.640 \textcolor{gray}{[0.516, 0.766]} & 0.694 \textcolor{gray}{[0.615, 0.774]} & 0.707 \textcolor{gray}{[0.634, 0.786]} \\
Mean & 0.009 \textcolor{gray}{[0.003, 0.015]} & 0.182 \textcolor{gray}{[0.062, 0.336]} & 0.033 \textcolor{gray}{[0.009, 0.068]} & 0.642 \textcolor{gray}{[0.531, 0.747]} & 0.693 \textcolor{gray}{[0.625, 0.757]} & 0.708 \textcolor{gray}{[0.642, 0.771]} \\
\bottomrule
\end{tabular}
}}
\end{table}

\begin{figure}[!h]
    \centering
    \includegraphics[width=0.97\linewidth]{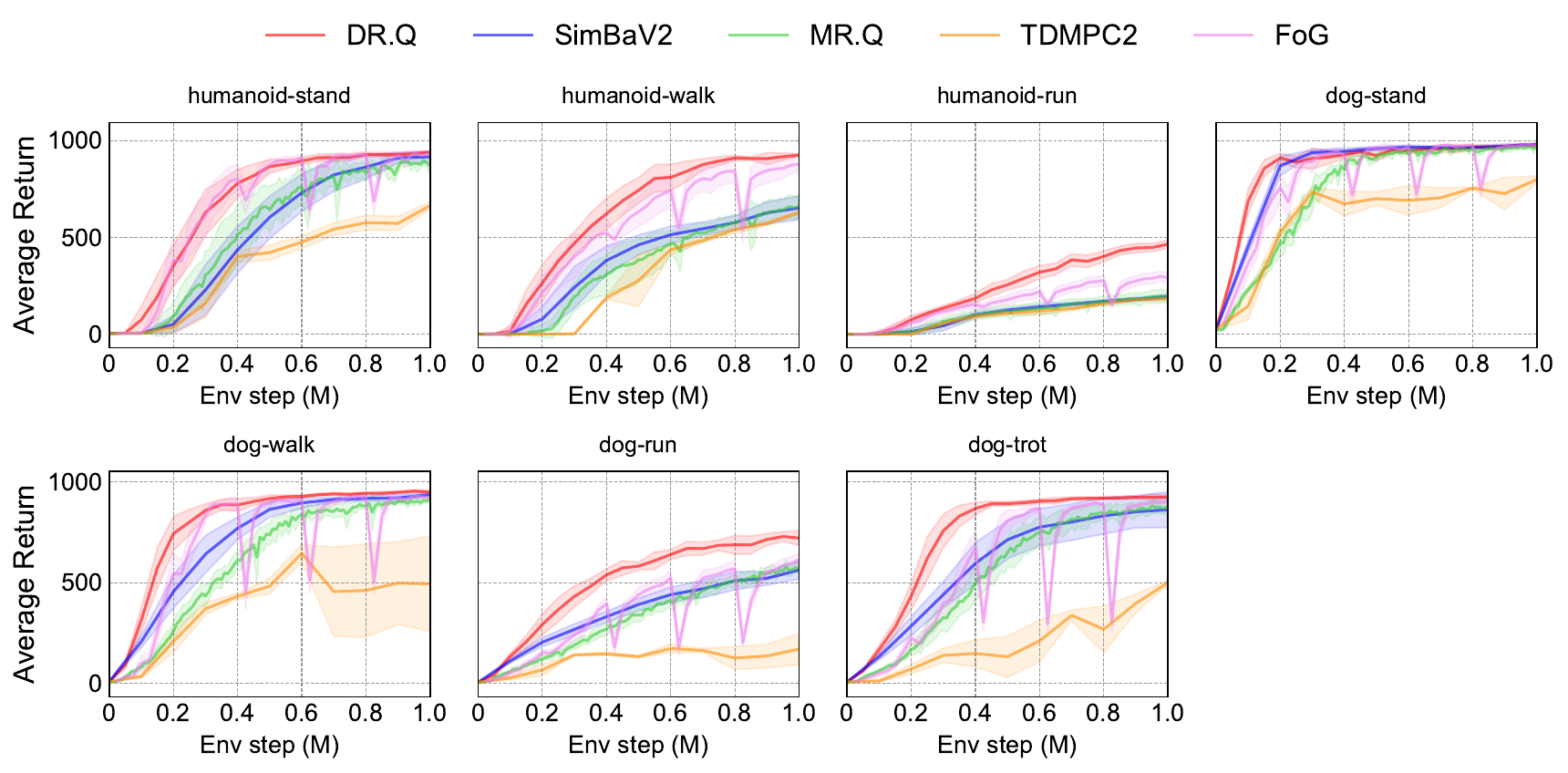}
    \caption{\textbf{Full learning curves on DMC suite hard tasks.} The solid lines denote the average return in each environment and the shaded regions denote the 95\% bootstrap confidence intervals.}
    \label{fig:dmchardresults}
\end{figure}

\clearpage
\subsection{HumanoidBench (w/o Hand) Results}
\label{sec:humanoidbenchnohandresults}

\begin{table}[h]
\centering
\parbox{\textwidth}{
\caption{\textbf{Full comparison results on HumanoidBench without dexterous hands.} We report the final average return results at 1M environment steps (equivalent to 500K steps due to an action repeat 2). The \textcolor{gray}{[bracketed values]} denote a 95\% bootstrap confidence interval. The aggregate mean, median and interquartile mean (IQM) are computed over the success normalized score as described in Appendix~\ref{sec:environmentdetails}.}
\small
\centering
\vspace{0.05in}
\label{table:appendix_full_hbench_nohand}
\resizebox{\textwidth}{!}{
\begin{tabular}{llllll}
\toprule
\textbf{Task} &  \textbf{Simba} & \textbf{SimbaV2} & \textbf{MR.Q} & \textbf{FoG} & \textbf{DR.Q} \\
\midrule
h1-pole-v0 & 716 \textcolor{gray}{[667, 765]} & 791 \textcolor{gray}{[785, 797]} & 578 \textcolor{gray}{[534, 623]} & 893 \textcolor{gray}{[846, 940]} & 887 \textcolor{gray}{[853, 921]} \\
h1-slide-v0 & 277 \textcolor{gray}{[252, 303]} & 487 \textcolor{gray}{[404, 571]} & 303 \textcolor{gray}{[270, 337]} & 674 \textcolor{gray}{[562, 785]} & 355 \textcolor{gray}{[324, 386]} \\
h1-stair-v0 & 269 \textcolor{gray}{[153, 385]} & 493 \textcolor{gray}{[467, 518]} & 235 \textcolor{gray}{[213, 257]} & 466 \textcolor{gray}{[383, 548]} & 401 \textcolor{gray}{[328, 475]} \\
h1-balance-hard-v0 & 75 \textcolor{gray}{[71, 80]} & 143 \textcolor{gray}{[128, 157]} & 69 \textcolor{gray}{[67, 72]} & 81 \textcolor{gray}{[71, 91]} & 92 \textcolor{gray}{[87, 97]} \\
h1-balance-simple-v0  & 337 \textcolor{gray}{[193, 482]} & 723 \textcolor{gray}{[651, 795]} & 135 \textcolor{gray}{[110, 160]} & 616 \textcolor{gray}{[536, 696]} & 205 \textcolor{gray}{[166, 244]} \\
h1-sit-hard-v0  & 512 \textcolor{gray}{[354, 670]} & 679 \textcolor{gray}{[548, 811]} & 553 \textcolor{gray}{[421, 686]} & 770 \textcolor{gray}{[738, 802]} & 843 \textcolor{gray}{[747, 939]} \\
h1-sit-simple-v0  & 833 \textcolor{gray}{[814, 853]} & 875 \textcolor{gray}{[870, 880]} & 850 \textcolor{gray}{[819, 882]} & 828 \textcolor{gray}{[800, 856]} & 931 \textcolor{gray}{[924, 938]} \\
h1-maze-v0  & 354 \textcolor{gray}{[342, 366]} & 313 \textcolor{gray}{[287, 340]} & 344 \textcolor{gray}{[340, 347]} & 331 \textcolor{gray}{[310, 353]} & 354 \textcolor{gray}{[349, 359]} \\
h1-crawl-v0  & 923 \textcolor{gray}{[904, 942]} & 946 \textcolor{gray}{[933, 959]} & 932 \textcolor{gray}{[919, 945]} & 971 \textcolor{gray}{[969, 973]} & 973 \textcolor{gray}{[972, 974]} \\
h1-hurdle-v0  & 175 \textcolor{gray}{[150, 201]} & 202 \textcolor{gray}{[167, 236]} & 131 \textcolor{gray}{[108, 155]} & 114 \textcolor{gray}{[100, 129]} & 344 \textcolor{gray}{[245, 443]} \\
h1-reach-v0  & 3874 \textcolor{gray}{[3220, 4527]} & 3850 \textcolor{gray}{[3272, 4427]} & 4902 \textcolor{gray}{[4390, 5414]} & 2434 \textcolor{gray}{[2083, 2785]} & 8101 \textcolor{gray}{[7640, 8563]} \\
h1-run-v0 & 232 \textcolor{gray}{[185, 279]} & 415 \textcolor{gray}{[307, 524]} & 278 \textcolor{gray}{[192, 364]} & 749 \textcolor{gray}{[666, 832]} & 820 \textcolor{gray}{[815, 824]} \\
h1-stand-v0 & 772 \textcolor{gray}{[701, 843]} & 814 \textcolor{gray}{[770, 857]} & 800 \textcolor{gray}{[754, 846]} & 671 \textcolor{gray}{[516, 825]} & 856 \textcolor{gray}{[815, 897]} \\
h1-walk-v0 & 550 \textcolor{gray}{[391, 709]} & 845 \textcolor{gray}{[840, 850]} & 716 \textcolor{gray}{[657, 775]} & 866 \textcolor{gray}{[859, 872]} & 850 \textcolor{gray}{[830, 869]} \\ \midrule
IQM & 0.521 \textcolor{gray}{[0.413, 0.633]} & 0.799 \textcolor{gray}{[0.686, 0.908]} & 0.519 \textcolor{gray}{[0.417, 0.630]} & 0.846 \textcolor{gray}{[0.713, 0.969]} & 0.864 \textcolor{gray}{[0.735, 0.976]} \\
Median & 0.598 \textcolor{gray}{[0.514, 0.692]} & 0.781 \textcolor{gray}{[0.693, 0.865]} & 0.602 \textcolor{gray}{[0.516, 0.687]} & 0.794 \textcolor{gray}{[0.705, 0.899]} & 0.823 \textcolor{gray}{[0.733, 0.920]} \\
Mean & 0.606 \textcolor{gray}{[0.536, 0.678]} & 0.776 \textcolor{gray}{[0.705, 0.849]} & 0.604 \textcolor{gray}{[0.531, 0.677]} & 0.802 \textcolor{gray}{[0.721, 0.883]} & 0.825 \textcolor{gray}{[0.748, 0.902]} \\
\bottomrule
\end{tabular}
}

\begin{tabular}{llllll}
\\[0.3ex]
\toprule
\textbf{Task} & \textbf{DreamerV3} & \textbf{TD7} & \textbf{TDMPC2} & \textbf{DR.Q} \\
\midrule
h1-pole-v0 & 41 \textcolor{gray}{[28, 54]} & 441 \textcolor{gray}{[320, 563]} & 744 \textcolor{gray}{[609, 879]} & 887 \textcolor{gray}{[853, 921]} \\
h1-slide-v0 & 11 \textcolor{gray}{[7, 15]} & 39 \textcolor{gray}{[26, 53]} & 334 \textcolor{gray}{[304, 364]} & 355 \textcolor{gray}{[324, 386]}  \\
h1-stair-v0 & 7 \textcolor{gray}{[2, 12]} & 52 \textcolor{gray}{[31, 74]} & 378 \textcolor{gray}{[108, 648]} & 401 \textcolor{gray}{[328, 475]}  \\
h1-balance-hard-v0 & 11 \textcolor{gray}{[7, 15]} & 79 \textcolor{gray}{[51, 107]} & 31 \textcolor{gray}{[5, 56]} & 92 \textcolor{gray}{[87, 97]}  \\
h1-balance-simple-v0 & 9 \textcolor{gray}{[6, 12]} & 69 \textcolor{gray}{[58, 80]} & 42 \textcolor{gray}{[14, 70]} & 205 \textcolor{gray}{[166, 244]}  \\
h1-sit-hard-v0 & 15 \textcolor{gray}{[-4, 35]} & 235 \textcolor{gray}{[154, 315]} & 723 \textcolor{gray}{[660, 786]} & 843 \textcolor{gray}{[747, 939]} \\
h1-sit-simple-v0 & 19 \textcolor{gray}{[9, 28]} & 874 \textcolor{gray}{[869, 879]} & 790 \textcolor{gray}{[772, 809]} & 931 \textcolor{gray}{[924, 938]}  \\
h1-maze-v0 & 113 \textcolor{gray}{[107, 118]} & 147 \textcolor{gray}{[137, 156]} & 244 \textcolor{gray}{[106, 383]} &  354 \textcolor{gray}{[349, 359]}  \\
h1-crawl-v0 & 248 \textcolor{gray}{[176, 319]} & 582 \textcolor{gray}{[563, 600]} & 962 \textcolor{gray}{[959, 965]} & 973 \textcolor{gray}{[972, 974]}  \\
h1-hurdle-v0 & 4 \textcolor{gray}{[3, 5]} & 60 \textcolor{gray}{[18, 102]} & 387 \textcolor{gray}{[254, 519]} & 344 \textcolor{gray}{[245, 443]}  \\
h1-reach-v0 & 3203 \textcolor{gray}{[2824, 3581]} & 1409 \textcolor{gray}{[998, 1821]} & 2654 \textcolor{gray}{[1951, 3357]} & 8101 \textcolor{gray}{[7640, 8563]} \\
h1-run-v0 & 4 \textcolor{gray}{[2, 6]} & 91 \textcolor{gray}{[54, 128]} & 778 \textcolor{gray}{[763, 793]} & 820 \textcolor{gray}{[815, 824]} \\
h1-stand-v0 & 15 \textcolor{gray}{[7, 22]} & 433 \textcolor{gray}{[138, 727]} & 798 \textcolor{gray}{[779, 817]} & 856 \textcolor{gray}{[815, 897]} \\
h1-walk-v0 & 8 \textcolor{gray}{[1, 16]} & 33 \textcolor{gray}{[22, 45]} & 814 \textcolor{gray}{[813, 815]} & 850 \textcolor{gray}{[830, 869]}  \\ \midrule
IQM & 0.007 \textcolor{gray}{[0.004, 0.012]} & 0.134 \textcolor{gray}{[0.088, 0.245]} & 0.734 \textcolor{gray}{[0.510, 0.936]} & 0.864 \textcolor{gray}{[0.735, 0.976]}  \\
Median & 0.021 \textcolor{gray}{[0.000, 0.047]} & 0.284 \textcolor{gray}{[0.183, 0.392]} & 0.696 \textcolor{gray}{[0.536, 0.881]} & 0.823 \textcolor{gray}{[0.733, 0.920]} \\
Mean & 0.022 \textcolor{gray}{[0.000, 0.046]} & 0.289 \textcolor{gray}{[0.207, 0.375]} & 0.710 \textcolor{gray}{[0.562, 0.858]} & 0.825 \textcolor{gray}{[0.748, 0.902]} \\
\bottomrule
\end{tabular}
}
\vspace{-0.1in}
\end{table}

\begin{figure}
    \centering
    \includegraphics[width=0.97\linewidth]{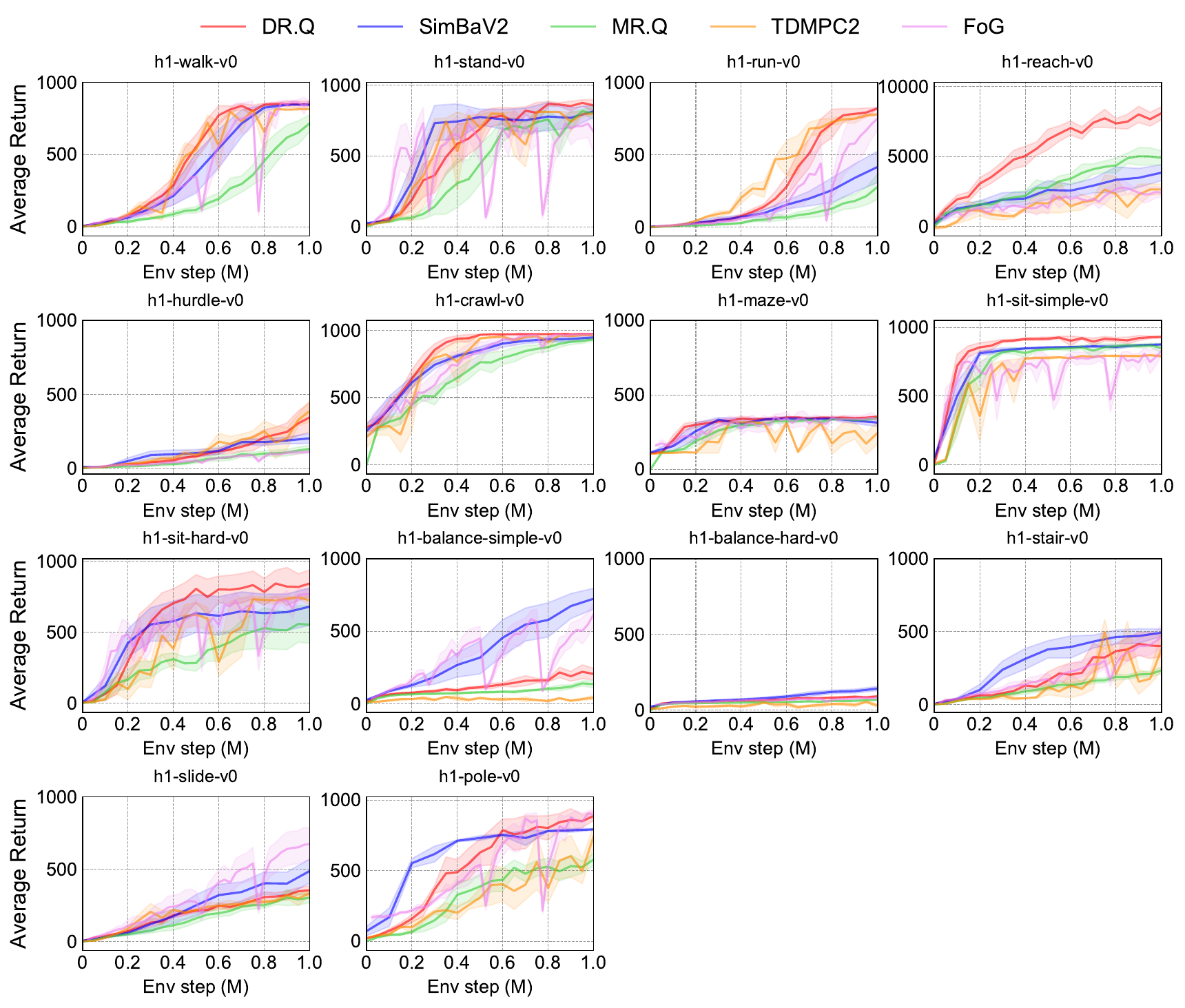}
    \caption{\textbf{Full learning curves on HumanoidBench (w/o hand) tasks.} We report the average returns (the solid lines) in each task. The light-colored regions denote the 95\% bootstrap confidence intervals.}
    \label{fig:humanoidbenchnohandresults}
\end{figure}

\clearpage
\subsection{HumanoidBench (w/ Hand) Results}
\label{sec:humanoidbenchwithhandresults}

\begin{table}[h]
\centering
\parbox{\textwidth}{
\caption{\textbf{Full comparison results on HumanoidBench with dexterous hands.} We report the final average return results at 1M environment steps (equivalent to 500K steps due to an action repeat 2). The \textcolor{gray}{[bracketed values]} denote a 95\% bootstrap confidence interval. The aggregate mean, median and interquartile mean (IQM) are computed over the success normalized score.}
\small
\centering
\label{table:appendix_full_hbench_withhand}
\resizebox{\textwidth}{!}{
\begin{tabular}{llllllll}
\toprule
\textbf{Task} &  \textbf{DreamerV3} & \textbf{TDMPC2} & \textbf{SimBa} & \textbf{SimbaV2} & \textbf{MR.Q} & \textbf{FoG} & \textbf{DR.Q} \\
\midrule
h1hand-door-v0 & 10 \textcolor{gray}{[7, 13]} & 134 \textcolor{gray}{[23, 246]} & 206 \textcolor{gray}{[169, 244]} & 310 \textcolor{gray}{[302, 318]} & 293 \textcolor{gray}{[280, 305]} & 244 \textcolor{gray}{[227, 261]} & 320 \textcolor{gray}{[308, 333]} \\
h1hand-slide-v0 & 21 \textcolor{gray}{[19, 23]} & 79 \textcolor{gray}{[68, 90]} & 67 \textcolor{gray}{[55, 79]} & 136 \textcolor{gray}{[97, 175]} & 146 \textcolor{gray}{[131, 161]} & 201 \textcolor{gray}{[173, 228]} & 285 \textcolor{gray}{[258, 312]} \\
h1hand-stair-v0 & 16 \textcolor{gray}{[8, 25]} & 43 \textcolor{gray}{[35, 51]} & 61 \textcolor{gray}{[44, 78]} & 120 \textcolor{gray}{[89, 151]} & 127 \textcolor{gray}{[104, 150]} & 135 \textcolor{gray}{[126, 144]} & 288 \textcolor{gray}{[193, 382]}  \\
h1hand-bookshelf-simple-v0 & 45 \textcolor{gray}{[41, 50]} & 97 \textcolor{gray}{[59, 134]} & 487 \textcolor{gray}{[315, 660]} & 838 \textcolor{gray}{[834, 843]} & 691 \textcolor{gray}{[599, 783]} & 610 \textcolor{gray}{[523, 697]} & 709 \textcolor{gray}{[572, 846]}  \\
h1hand-bookshelf-hard-v0  &  27 \textcolor{gray}{[24, 30]} & 34 \textcolor{gray}{[19, 50]} & 490 \textcolor{gray}{[447, 533]} &  496 \textcolor{gray}{[417, 575]} & 332 \textcolor{gray}{[240, 425]} & 577 \textcolor{gray}{[548, 605]} & 349 \textcolor{gray}{[262, 435]} \\
h1hand-sit-simple-v0  & 48 \textcolor{gray}{[42, 54]} & 607 \textcolor{gray}{[268, 947]} & 643 \textcolor{gray}{[580, 705]} & 927 \textcolor{gray}{[904, 951]} & 653 \textcolor{gray}{[568, 737]} & 631 \textcolor{gray}{[528, 735]} & 942 \textcolor{gray}{[926, 958]} \\
h1hand-sit-hard-v0  & 15 \textcolor{gray}{[11, 20]} & 139 \textcolor{gray}{[86, 193]} & 649 \textcolor{gray}{[500, 797]} & 724 \textcolor{gray}{[609, 838]} & 487 \textcolor{gray}{[353, 621]} & 179 \textcolor{gray}{[128, 229]} & 891 \textcolor{gray}{[841, 941]}   \\
h1hand-basketball-v0  & 13 \textcolor{gray}{[12, 13]} & 47 \textcolor{gray}{[21, 73]} & 54 \textcolor{gray}{[25, 83]} & 56 \textcolor{gray}{[34, 78]} & 53 \textcolor{gray}{[34, 72]} & 182 \textcolor{gray}{[131, 232]} & 75 \textcolor{gray}{[45, 105]}  \\
h1hand-pole-v0  & 48 \textcolor{gray}{[36, 60]} & 99 \textcolor{gray}{[87, 111]} & 224 \textcolor{gray}{[195, 254]} & 493 \textcolor{gray}{[426, 559]} & 237 \textcolor{gray}{[202, 273]} & 257 \textcolor{gray}{[237, 277]} & 424 \textcolor{gray}{[299, 549]}  \\
h1hand-crawl-v0  & 256 \textcolor{gray}{[244, 268]} & 897 \textcolor{gray}{[858, 935]} & 779 \textcolor{gray}{[748, 809]} & 640 \textcolor{gray}{[549, 732]} & 807 \textcolor{gray}{[783, 831]} & 794 \textcolor{gray}{[721, 866]} & 526 \textcolor{gray}{[477, 574]}  \\
h1hand-reach-v0  & 864 \textcolor{gray}{[578, 1150]} & 3610 \textcolor{gray}{[2912, 4309]} & 3185 \textcolor{gray}{[2664, 3707]} & 3223 \textcolor{gray}{[2703, 3744]} & 4101 \textcolor{gray}{[3540, 4662]} & 2877 \textcolor{gray}{[2487, 3267]} & 4950 \textcolor{gray}{[4280, 5619]}  \\
h1hand-run-v0 & 6 \textcolor{gray}{[4, 8]} & 29 \textcolor{gray}{[27, 30]} & 31 \textcolor{gray}{[24, 37]} & 30 \textcolor{gray}{[22, 38]} & 35 \textcolor{gray}{[29, 41]} & 22 \textcolor{gray}{[19, 25]} & 129 \textcolor{gray}{[77, 181]}  \\
h1hand-stand-v0 & 41 \textcolor{gray}{[38, 44]} & 193 \textcolor{gray}{[147, 238]} & 127 \textcolor{gray}{[72, 181]} & 103 \textcolor{gray}{[81, 126]} & 300 \textcolor{gray}{[194, 405]} & 79 \textcolor{gray}{[66, 91]} & 491 \textcolor{gray}{[344, 638]} \\
h1hand-walk-v0 & 19 \textcolor{gray}{[12, 27]} & 234 \textcolor{gray}{[125, 343]} & 94 \textcolor{gray}{[79, 109]} & 64 \textcolor{gray}{[52, 76]} & 95 \textcolor{gray}{[77, 112]} & 75 \textcolor{gray}{[63, 87]} & 512 \textcolor{gray}{[371, 652]} \\ \midrule
IQM & 0.019 \textcolor{gray}{[0.013, 0.026]} & 0.150 \textcolor{gray}{[0.091, 0.224]} & 0.219 \textcolor{gray}{[0.179, 0.267]} & 0.298 \textcolor{gray}{[0.241, 0.374]} & 0.286 \textcolor{gray}{[0.245, 0.333]} & 0.254 \textcolor{gray}{[0.222, 0.285]} & 0.452 \textcolor{gray}{[0.400, 0.512]} \\
Median & 0.021 \textcolor{gray}{[0.010, 0.030]} & 0.298 \textcolor{gray}{[0.147, 0.433]} & 0.356 \textcolor{gray}{[0.269, 0.413]} & 0.420 \textcolor{gray}{[0.338, 0.491]} & 0.388 \textcolor{gray}{[0.313, 0.449]} & 0.342 \textcolor{gray}{[0.268, 0.395]} & 0.529 \textcolor{gray}{[0.455, 0.607]} \\
Mean & 0.020 \textcolor{gray}{[0.011, 0.028]} & 0.282 \textcolor{gray}{[0.169, 0.413]} & 0.345 \textcolor{gray}{[0.286, 0.406]} & 0.417 \textcolor{gray}{[0.356, 0.482]} & 0.385 \textcolor{gray}{[0.329, 0.443]} & 0.336 \textcolor{gray}{[0.285, 0.393]} & 0.534 \textcolor{gray}{[0.473, 0.595]} \\
\bottomrule
\end{tabular}
}}
\end{table}

\begin{figure}
    \centering
    \includegraphics[width=0.97\linewidth]{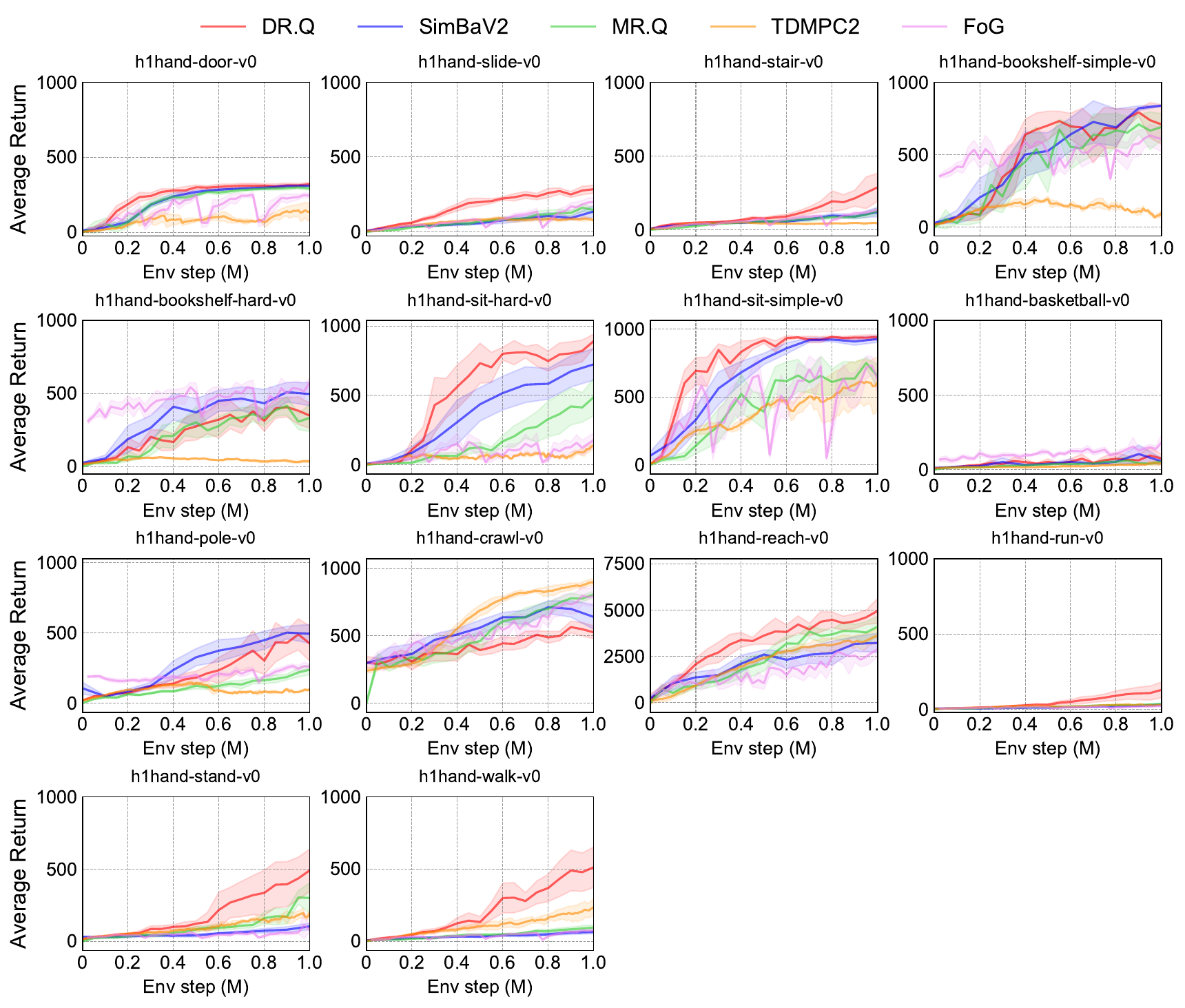}
    \caption{\textbf{Full learning curves on HumanoidBench (w/ hand) tasks.} We report the average returns (the solid lines) in each task. The light-colored regions denote the 95\% bootstrap confidence intervals.}
    \label{fig:humanoidbenchwithhandresults}
\end{figure}

\clearpage
\subsection{Visual DMC Suite Results}
\label{sec:visualdmc}

Since it is very expensive to run DMC suite tasks with visual inputs, we omit simple tasks where the performance of MR.Q or the baseline methods has already saturated, e.g., \texttt{visual-cartpole-balance}, \texttt{visual-walker-stand}, etc. We eventually select 12 tasks for experiments, and the overall results across 10 seeds are shown below. DR.Q generally outperforms strong baselines like MR.Q and TDMPC2 in terms of sample efficiency and the final performance, especially on tasks like \texttt{visual-dog-stand}, \texttt{visual-hopper-stand}, etc.

\begin{table}[h]
\centering
\parbox{\textwidth}{
\caption{\textbf{Final performance comparison on DMC suite tasks with visual inputs.} We report the average return results at 1M environment steps (equivalent to 500K steps due to an action repeat 2). The \textcolor{gray}{[bracketed values]} mean a 95\% bootstrap confidence interval. The aggregate mean, median and interquartile mean (IQM) are computed over the success normalized score.}
\small
\centering
\label{table:appendix_full_dmcvisual}
\resizebox{\textwidth}{!}{
\begin{tabular}{llllllll}
\toprule
\textbf{Task} &  \textbf{DrQ-v2} & \textbf{PPO} & \textbf{TDMPC2} & \textbf{DreamerV3} & \textbf{MR.Q} & \textbf{DR.Q} \\
\midrule
acrobot-swingup & 168 \textcolor{gray}{[127, 219]} & 2 \textcolor{gray}{[1, 4]} & 197 \textcolor{gray}{[179, 217]} & 121 \textcolor{gray}{[106, 145]} & 287 \textcolor{gray}{[254, 316]} & 324 \textcolor{gray}{[283, 365]} \\
dog-run & 10  \textcolor{gray}{[9, 12]} & 11  \textcolor{gray}{[9, 14]} & 14  \textcolor{gray}{[10, 18]} & 9  \textcolor{gray}{[6, 14]} & 60  \textcolor{gray}{[44, 80]} & 118 \textcolor{gray}{[104, 132]} \\
dog-stand & 43  \textcolor{gray}{[37, 49]} & 51  \textcolor{gray}{[48, 56]} & 117  \textcolor{gray}{[72, 148]} & 61  \textcolor{gray}{[30, 92]} & 216  \textcolor{gray}{[201, 232]} & 700 \textcolor{gray}{[660, 740]} \\
dog-trot & 14  \textcolor{gray}{[11, 18]} & 13  \textcolor{gray}{[12, 15]} & 20  \textcolor{gray}{[14, 25]} & 14  \textcolor{gray}{[13, 16]} & 65  \textcolor{gray}{[55, 79]} & 113 \textcolor{gray}{[98, 128]} \\
dog-walk & 22  \textcolor{gray}{[18, 29]} & 16  \textcolor{gray}{[14, 18]} & 22  \textcolor{gray}{[17, 28]} & 11  \textcolor{gray}{[11, 12]} & 77  \textcolor{gray}{[71, 83]} & 201 \textcolor{gray}{[146, 256]} \\
hopper-hop & 224  \textcolor{gray}{[170, 278]} & 0  \textcolor{gray}{[0, 0]} & 187  \textcolor{gray}{[119, 238]} & 205  \textcolor{gray}{[125, 287]} & 270  \textcolor{gray}{[230, 315]} & 330 \textcolor{gray}{[283, 377]} \\
hopper-stand & 917  \textcolor{gray}{[903, 931]} & 1  \textcolor{gray}{[0, 2]} & 582  \textcolor{gray}{[321, 794]} & 888  \textcolor{gray}{[875, 900]} & 852  \textcolor{gray}{[703, 930]} & 937 \textcolor{gray}{[930, 944]} \\
humanoid-run & 1  \textcolor{gray}{[1, 1]} & 1  \textcolor{gray}{[1, 1]} & 0  \textcolor{gray}{[1, 1]} & 1  \textcolor{gray}{[1, 1]} & 1  \textcolor{gray}{[1, 2]} & 1 \textcolor{gray}{[1,1]} \\
quadruped-run & 459  \textcolor{gray}{[412, 507]} & 118  \textcolor{gray}{[98, 139]} & 262  \textcolor{gray}{[184, 330]} & 328  \textcolor{gray}{[255, 397]} & 498  \textcolor{gray}{[476, 522]} & 655 \textcolor{gray}{[573, 737]} \\
quadruped-walk & 750  \textcolor{gray}{[699, 796]} & 149  \textcolor{gray}{[113, 184]} & 246  \textcolor{gray}{[179, 310]} & 316  \textcolor{gray}{[260, 379]} & 833  \textcolor{gray}{[797, 867]} & 927 \textcolor{gray}{[914, 941]} \\
reacher-hard & 705  \textcolor{gray}{[580, 831]} & 10  \textcolor{gray}{[0, 30]} & 911  \textcolor{gray}{[867, 946]} & 338  \textcolor{gray}{[227, 461]} & 965  \textcolor{gray}{[945, 977]} & 954 \textcolor{gray}{[930, 979]} \\
walker-run & 546  \textcolor{gray}{[475, 612]} & 39  \textcolor{gray}{[35, 44]} & 665  \textcolor{gray}{[566, 719]} & 669  \textcolor{gray}{[615, 708]} & 615  \textcolor{gray}{[571, 655]} & 746 \textcolor{gray}{[713, 778]} \\
\midrule
IQM & 0.241 \textcolor{gray}{[0.214, 0.271]}  &  0.016 \textcolor{gray}{[0.013, 0.018]}  & 0.154 \textcolor{gray}{[0.113, 0.224]} & 0.168 \textcolor{gray}{[0.152, 0.184]} & 0.322 \textcolor{gray}{[0.239, 0.423]} & 0.494 \textcolor{gray}{[0.395, 0.604]} \\
Median & 0.191 \textcolor{gray}{[0.172, 0.211]} & 0.013 \textcolor{gray}{[0.012, 0.013]} & 0.295 \textcolor{gray}{[0.198, 0.339]} & 0.134 \textcolor{gray}{[0.124, 0.198]} & 0.398 \textcolor{gray}{[0.320, 0.466]} & 0.500 \textcolor{gray}{[0.427, 0.576]}  \\
Mean & 0.321 \textcolor{gray}{[0.303, 0.340]} & 0.034 \textcolor{gray}{[0.031, 0.037]} & 0.269 \textcolor{gray}{[0.214, 0.326]} & 0.247 \textcolor{gray}{[0.231, 0.262]} & 0.395 \textcolor{gray}{[0.335, 0.457]} & 0.501 \textcolor{gray}{[0.439, 0.564]}  \\
\bottomrule
\end{tabular}
}}
\end{table}

\begin{figure}[!h]
    \centering
    \includegraphics[width=0.97\linewidth]{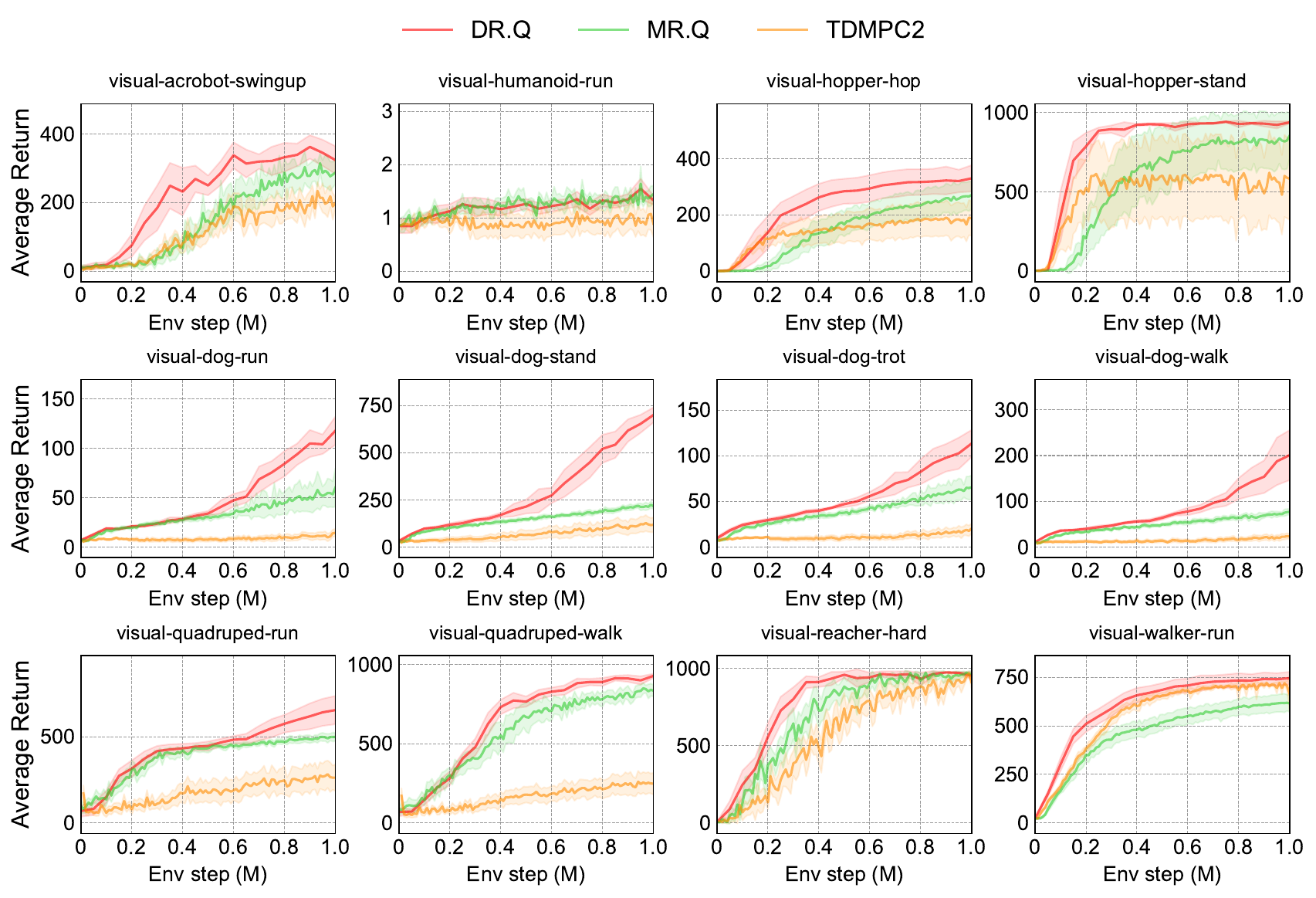}
    \caption{\textbf{Extended experiments with visual inputs.} We consider selected tasks from DMC suite (with visual inputs) and summarize the results across 10 random seeds. The solid line denotes the average return and the light-colored region is the 95\% confidence interval.}
    \label{fig:drqvisualresults}
\end{figure}

\clearpage
\section{Additional Experiments}
\label{sec:additionalexperiments}

In this section, we present additional experiments that were omitted from the main paper due to space constraints. For all experiments, we follow the experimental setup described in Section \ref{sec:experimentsetup} and run all variants or baselines for 10 seeds and 1M environment steps.

\begin{figure}[!h]
    \centering
    \includegraphics[width=0.97\linewidth]{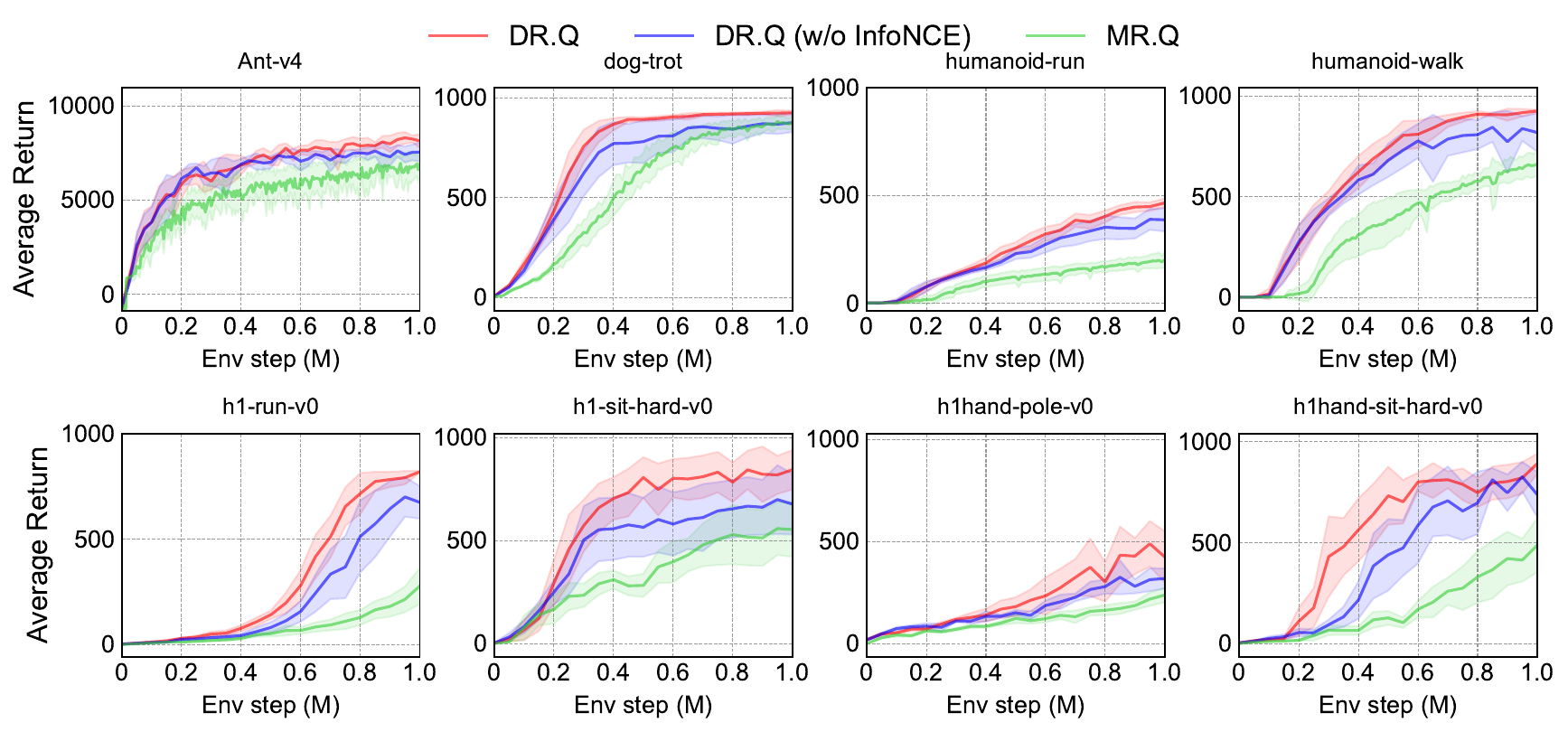}
    \caption{\textbf{Extended ablation study on InfoNCE loss.} The results are averaged across 10 seeds and the shaded region represents the 95\% confidence interval.}
    \label{fig:ablationinfonceappendix}
\end{figure}

\begin{figure}[!h]
    \centering
    \includegraphics[width=0.97\linewidth]{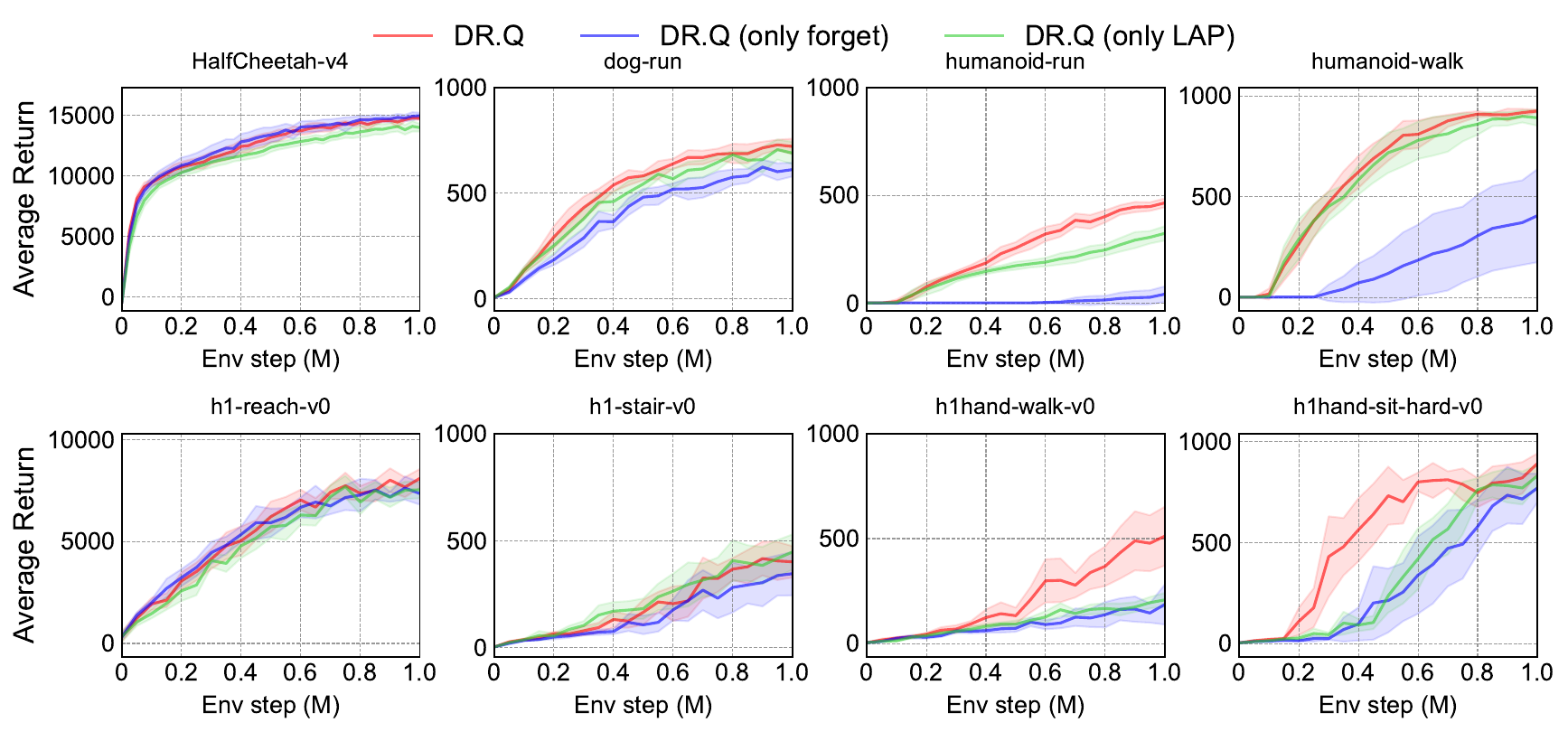}
    \caption{\textbf{Extended ablation study on sampling strategies.} We report the average return results across 10 seeds and the shaded region is the 95\% confidence interval.}
    \label{fig:ablationlapappendix}
\end{figure}

\subsection{Extended Ablation Study}
\label{sec:extendedablationstudy}

We first present the extended ablation study on the InfoNCE loss term and the sampling strategy in Figure \ref{fig:ablationinfonceappendix} and Figure \ref{fig:ablationlapappendix}. It is easy to find that excluding the InfoNCE loss incurs inferior performance on challenging tasks like \texttt{h1-sit-hard-v0} and \texttt{h1hand-pole-v0}. In almost all evaluated tasks, DR.Q outperforms DR.Q (w/o InfoNCE) in terms of sample efficiency and final performance, demonstrating the necessity and importance of the InfoNCE loss term. 

Furthermore, we clearly observe in Figure \ref{fig:ablationlapappendix} that excluding either LAP or the forget mechanism can incur unsatisfactory performance, especially on HumanoidBench tasks with dexterous hands, \texttt{h1hand-walk-v0} and \texttt{h1hand-sit-hard-v0}. Notably, on tasks like \texttt{humanoid-run} and \texttt{humanoid-walk}, removing LAP (DR.Q (only LAP)) leads to a severe performance drop, which highlights the importance of LAP. 

\textbf{The latent dynamics loss term.} In Equation \ref{eq:drqencoder}, DR.Q introduces the latent dynamics loss term $\mathcal{L}_{\rm dynamics}(\hat{z}_{s^\prime,t}, \tilde{z}_{s^\prime,t})$ and the information loss term $\mathcal{L}_{\rm I}(\hat{z}_{s^\prime,t},\tilde{z}_{s^\prime,t})$. MR.Q mainly leverages the latent dynamics loss term, while DR.Q adds the InfoNCE loss term for better and more informative representations. To examine the role of the latent dynamics loss term in DR.Q, we remove it from DR.Q, giving rise to DR.Q (w/o dyn loss). We summarize the comparison results on selected tasks from DMC suite and HumanoidBench in Figure \ref{fig:nodynloss}. The results show that removing the latent dynamics loss term can sometimes have a minor influence on DR.Q (e.g., in \texttt{acrobot-swingup}). Nevertheless, DR.Q (w/o dyn loss) can significantly underperform the vanilla DR.Q on tasks like \texttt{h1hand-stair-v0}, \texttt{h1hand-pole-v0}, etc. Generally, we observe inferior sample efficiency and the final performance without the latent dynamics loss term. It is hence recommended to include it in the objective of DR.Q.

\begin{figure}[!h]
    \centering
    \includegraphics[width=0.97\linewidth]{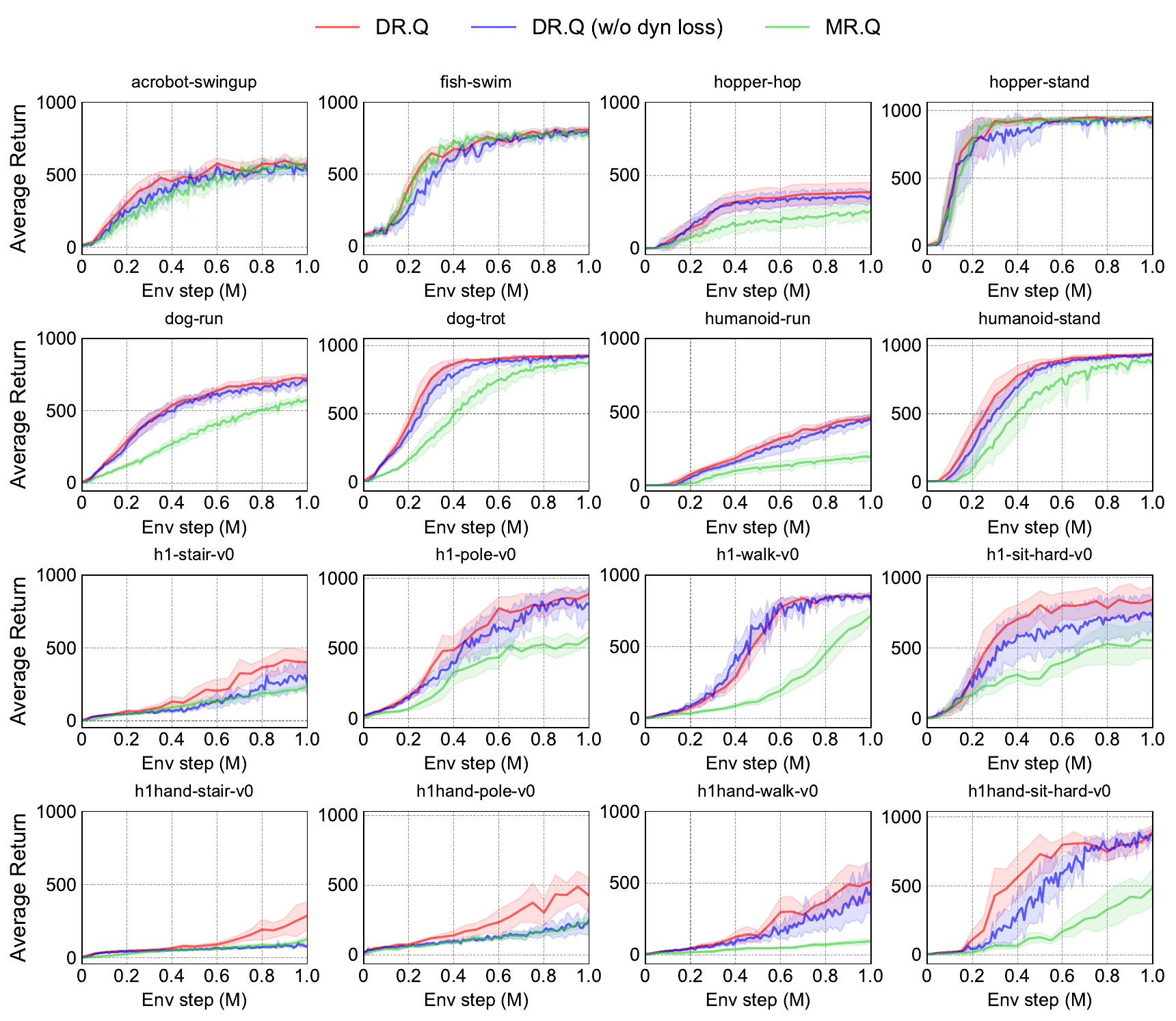}
    \caption{\textbf{Ablation study on the latent dynamics loss term.} The results are averaged across 10 seeds and the shaded region denotes the 95\% confidence interval. Dyn loss denotes the latent dynamics loss.}
    \label{fig:nodynloss}
\end{figure}

\subsection{Comparison of DR.Q against MR.Q with Modified Hyperparameters}
\label{sec:mrqmodifiedhyperparameters}

One may notice that DR.Q adopts slightly different hyperparameters compared to MR.Q, e.g., encoder hidden dimension, weight decay, etc. We adopt a larger encoder learning rate and encoder hidden dimension to scale the network \citep{nauman2024bigger,nauman2025bigger,lee2025simba,Lee2025HypersphericalNF,fu2025compute,wang20251000,palenicek2025scaling}. Though these values are design choices, we deem it necessary to conduct some empirical experiments to see how these modified hyperparameters can affect the performance. To that end, we set the hyperparameters of MR.Q to be identical as DR.Q (Table \ref{tab:hyperparameters}) and run experiments on selected tasks from DMC suite and HumanoidBench. Empirical results in Figure \ref{fig:mrqmodifyhyperparameters} show that scaling the encoder networks generally helps improve the performance of MR.Q. However, the performance of MR.Q with modified hyperparameters still lags behind that of DR.Q in numerous environments, which clearly shows the effects of the InfoNCE loss and the faded PER sampling strategy.

\vspace{-2pt}
\begin{figure}[!h]
    \centering
    \includegraphics[width=0.96\linewidth]{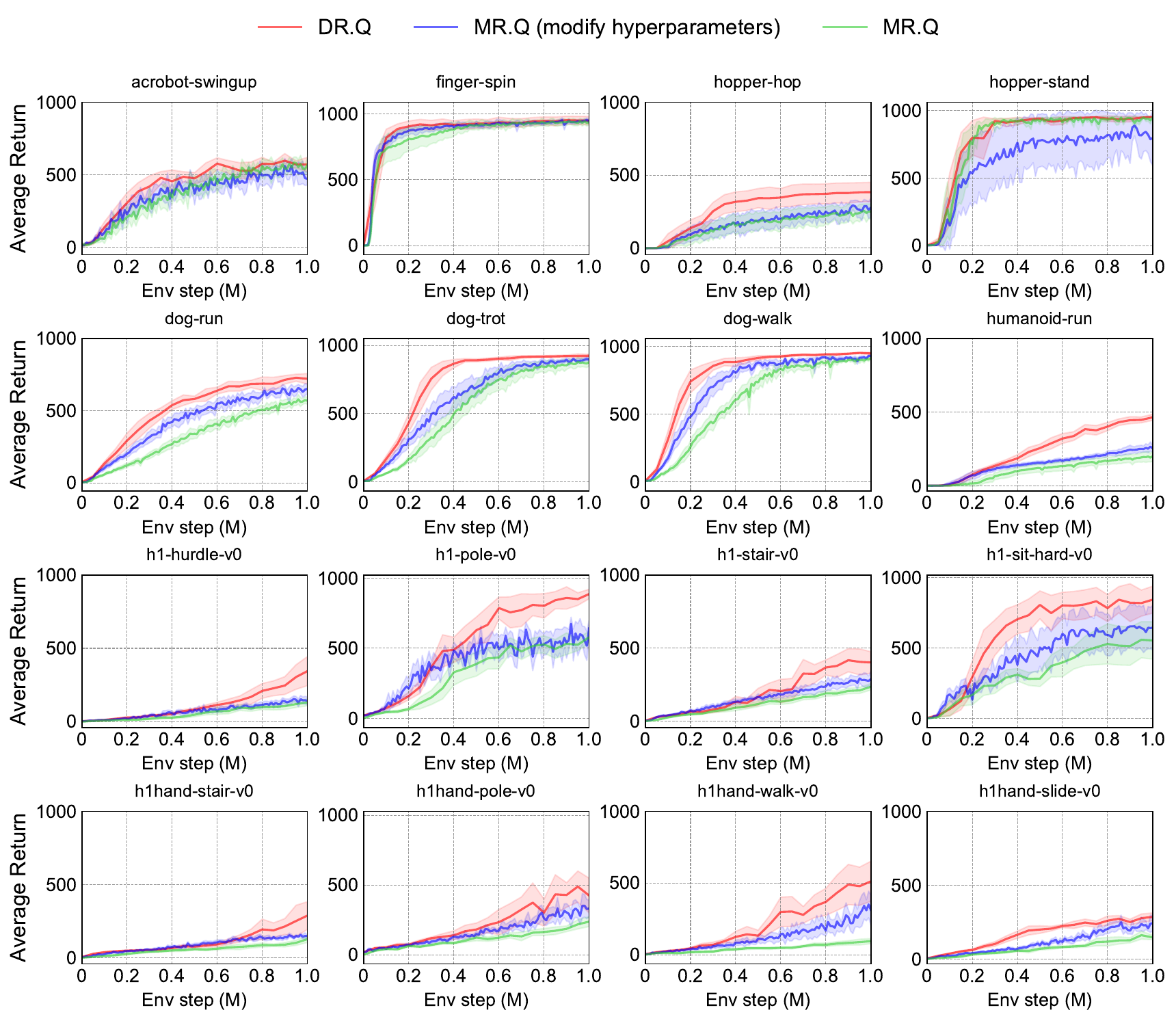}
    \caption{\textbf{Performance comparison of DR.Q against MR.Q with modified hyperparameters.} The solid line denotes the average return across 10 seeds, and the light-colored region means the 95\% confidence interval.}
    \label{fig:mrqmodifyhyperparameters}
\end{figure}

\subsection{On the Mutual Information Loss}
\label{sec:mutualinformationloss}

\textbf{Representation learning with noise.} We demonstrate the necessity and effectiveness of maximizing the mutual information between the state-action representation and the next state representation by additionally considering HumanoidBench (w/o hand) tasks. In this part, we further verify its necessity by expanding the dimensions of the state vectors using noise that is not related to solving the task. To be specific, suppose the original state gives $s$ with shape $d_s$, we construct a $50$-dim Gaussian noise vector $\Psi = [\psi_1, \ldots,\psi_{50}], \psi_i\sim\mathcal{N}(0, 0.2), i=1,\ldots,50$, and concatenate it with the vanilla state vector, resulting in a new state vector with shape $d_s+50$. In this way, the state input for the agent can contain redundant information. We compare DR.Q against MR.Q on selected challenging tasks from the DMC suite and HumanoidBench and summarize the results in Figure \ref{fig:extendstate}. As depicted, extending the state space with noise generally incurs inferior performance and sample efficiency on MR.Q while DR.Q is less affected (e.g., \texttt{humanoid-stand}, \texttt{hopper-stand}). The performance gap between DR.Q and MR.Q remains significant (e.g., \texttt{h1hand-walk-v0}). Note that the performance discrepancy on HumanoidBench tasks may not be large due to the fact that we only inject 50-dimensional noises (dexterous hands introduce 100-dim redundant elements). Still, we observe that the performance of DR.Q does not degrade with additional 50-dimensional noises on HumanoidBench tasks, while MR.Q struggles to achieve strong performance.

\begin{figure}
    \centering
    \includegraphics[width=0.96\linewidth]{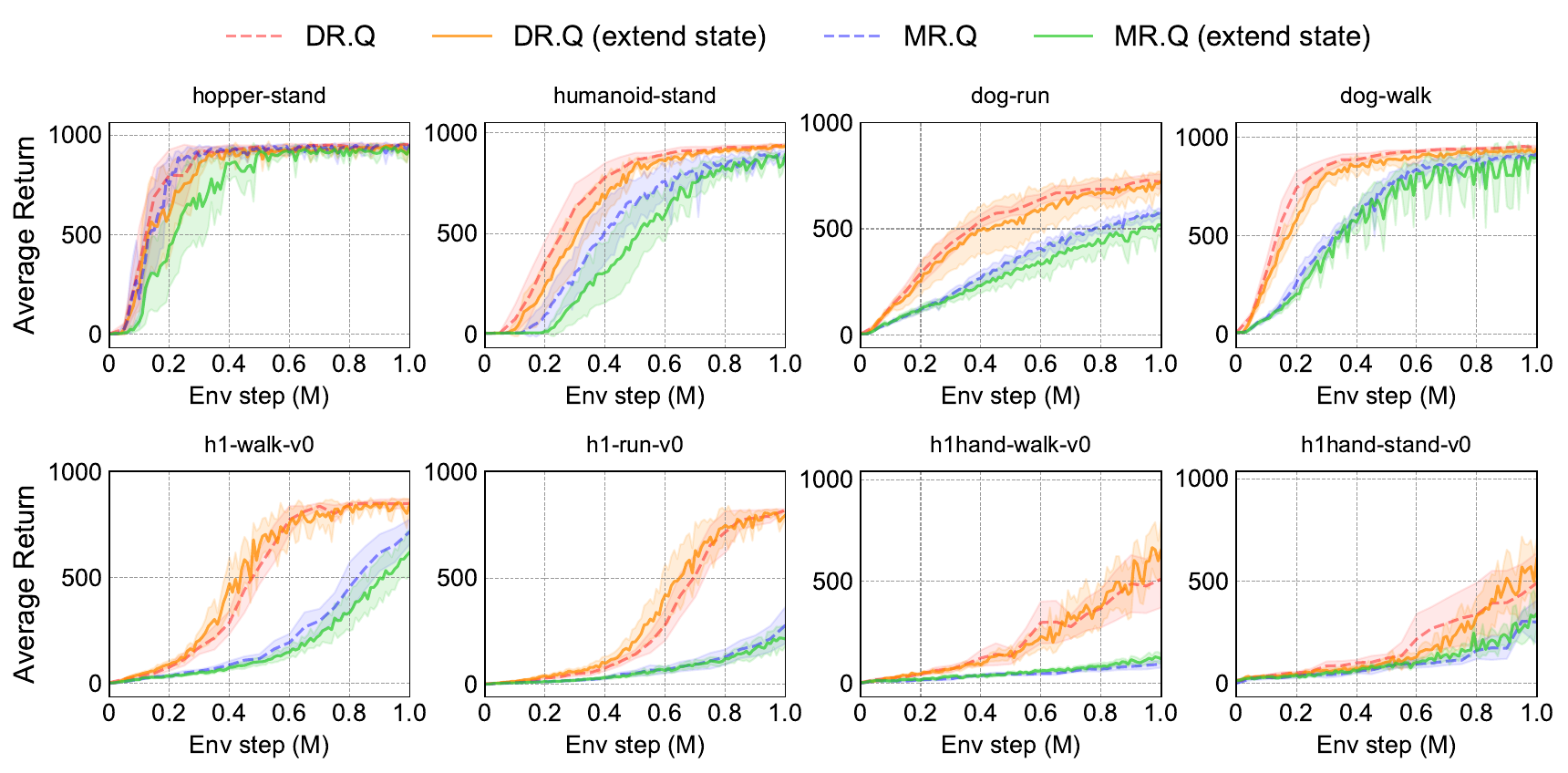}
    \caption{\textbf{Comparison between DR.Q and MR.Q under extended state inputs.} The dashed lines are the vanilla learning curves of DR.Q and MR.Q using unmodified state inputs and the solid lines denote the learning curves under extended state inputs. The light-colored region captures the 95\% confidence intervals. Results are averaged across 10 seeds.}
    \label{fig:extendstate}
\end{figure}

\textbf{Visualization results.} To further understand how mutual information loss helps improve the learned representations, we conduct a representation analysis using 4 tasks from MuJoCo, DMC suite, and HumanoidBench. To be specific, we visualize the state-action representations $z_{sa}$ via t-SNE \citep{maaten2008visualizing} after training 200K steps on \texttt{HalfCheetah-v4}, \texttt{humanoid-walk}, \texttt{h1-sit-hard-v0}, \texttt{h1hand-stand-v0}. We sample 5000 samples from the replay buffer and feed them to the encoder network for t-SNE visualization in a 2D space. The results are demonstrated in Figure \ref{fig:tsneresults}. Overall, we observe that MR.Q often incurs separate, discontinuous clusters (e.g., \texttt{HalfCheetah-v4}, \texttt{humanoid-walk}), while DR.Q exhibits continuous and concentrated clusters, indicating that DR.Q learns better representations than MR.Q. On \texttt{h1hand-stand-v0}, MR.Q has two void areas, while DR.Q does not. These observations suggest that DR.Q enables the learning of more structured, more informative and general representations, which contribute to the performance gains.

\begin{figure}[!h]
    \centering
    \includegraphics[width=0.24\linewidth]{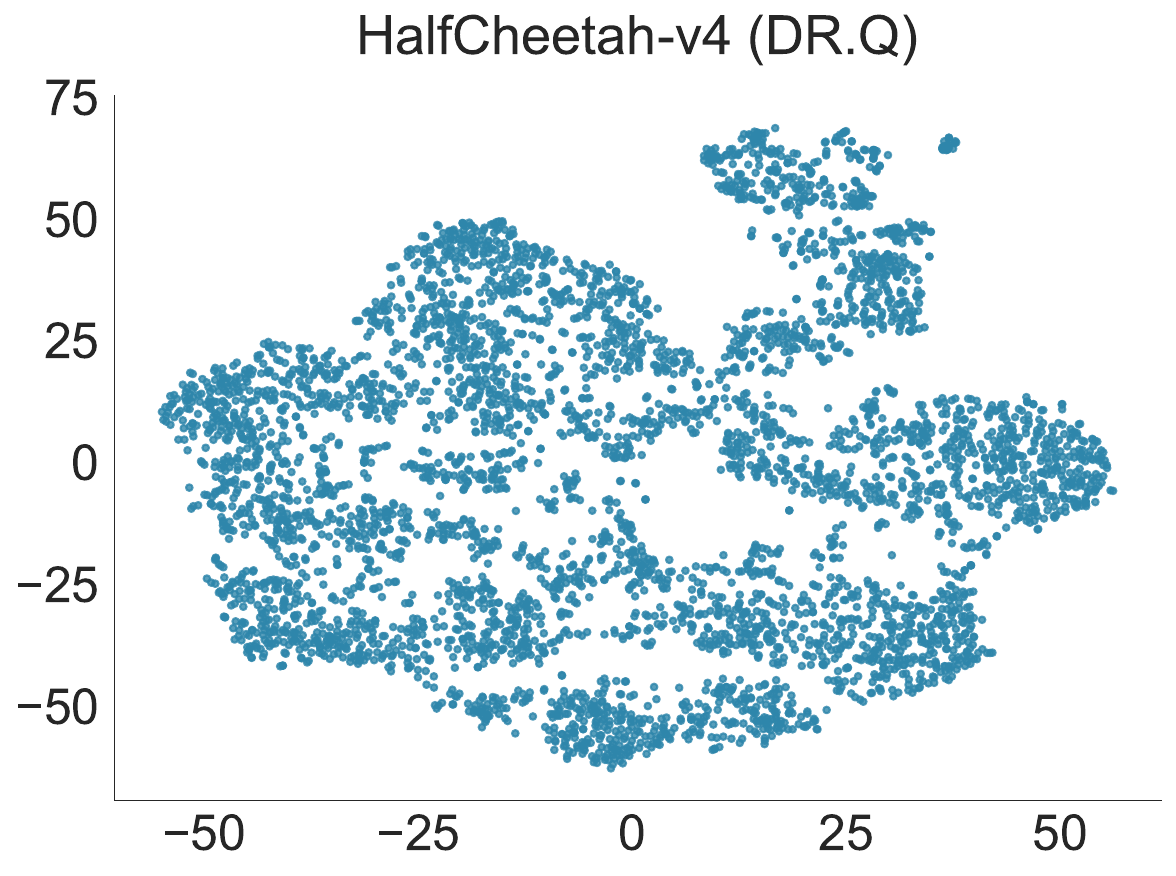}
    \includegraphics[width=0.24\linewidth]{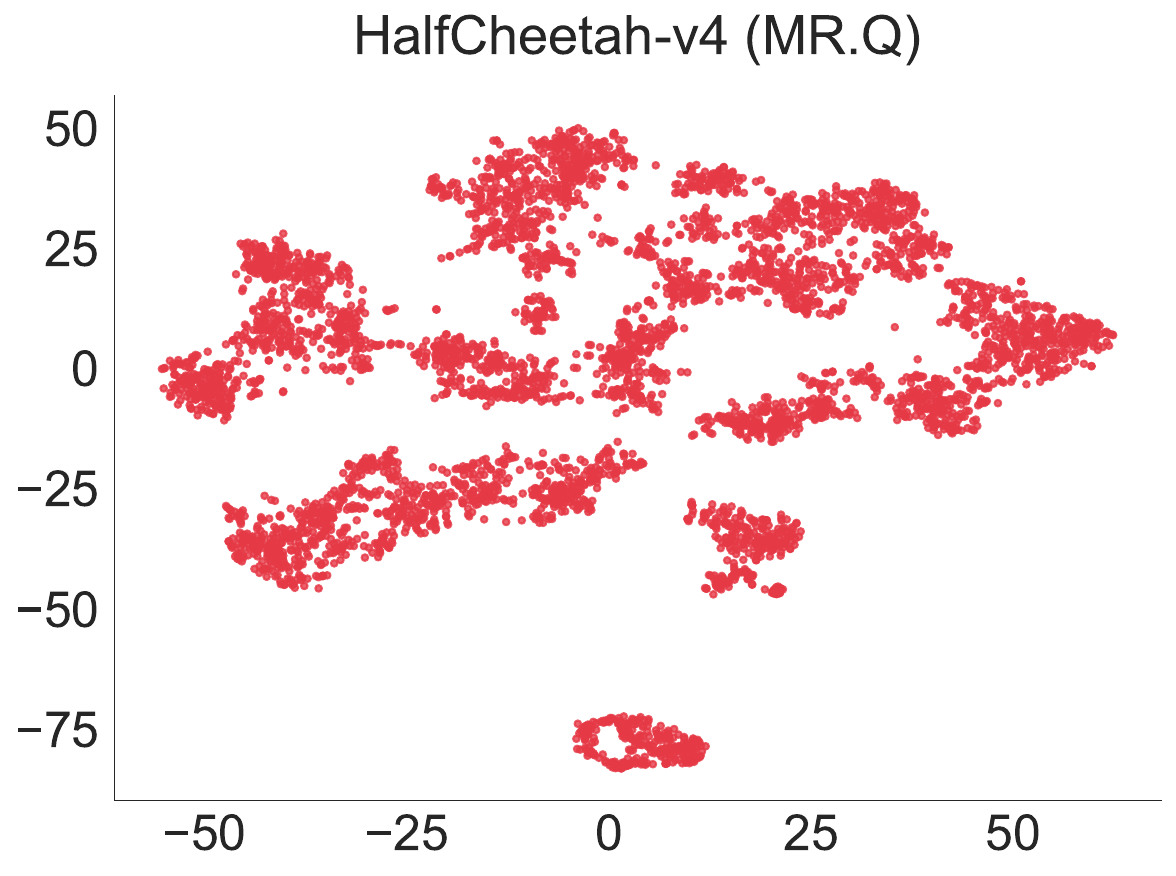}
    \includegraphics[width=0.24\linewidth]{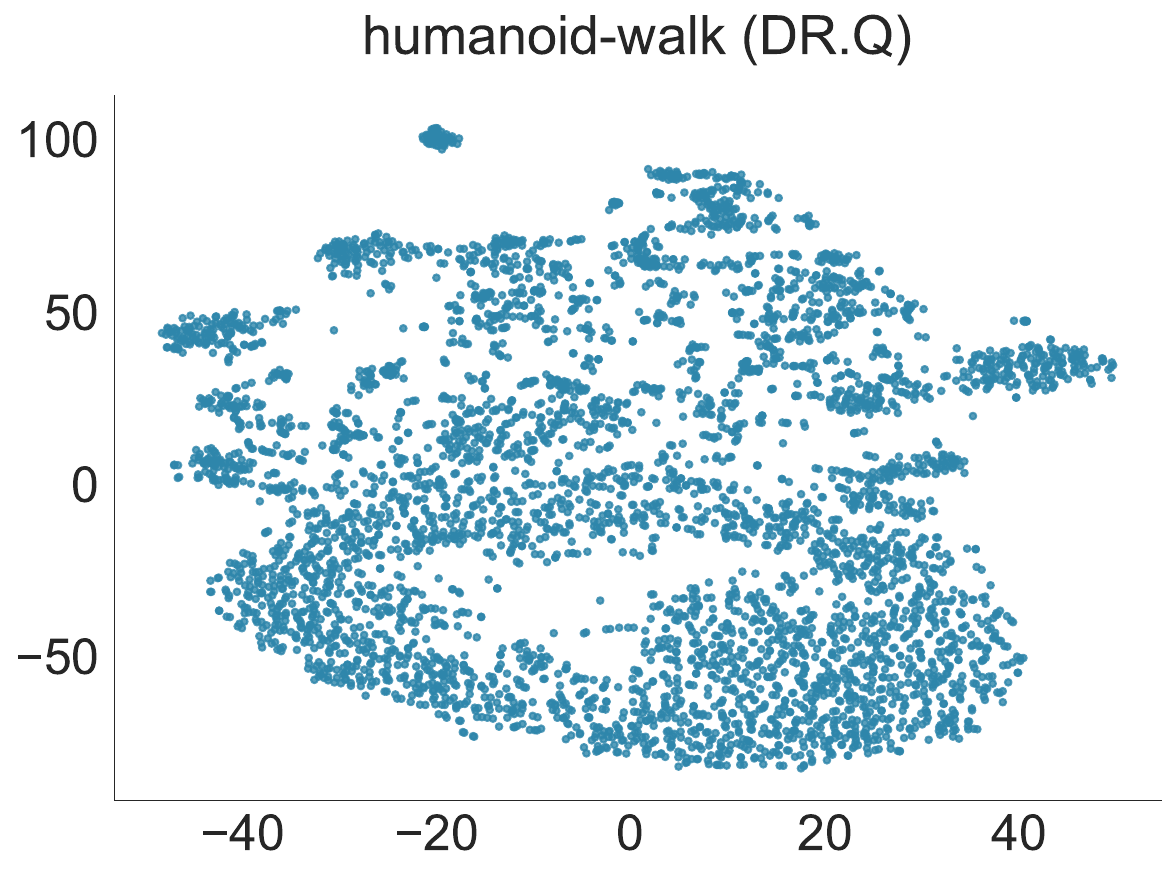}
    \includegraphics[width=0.24\linewidth]{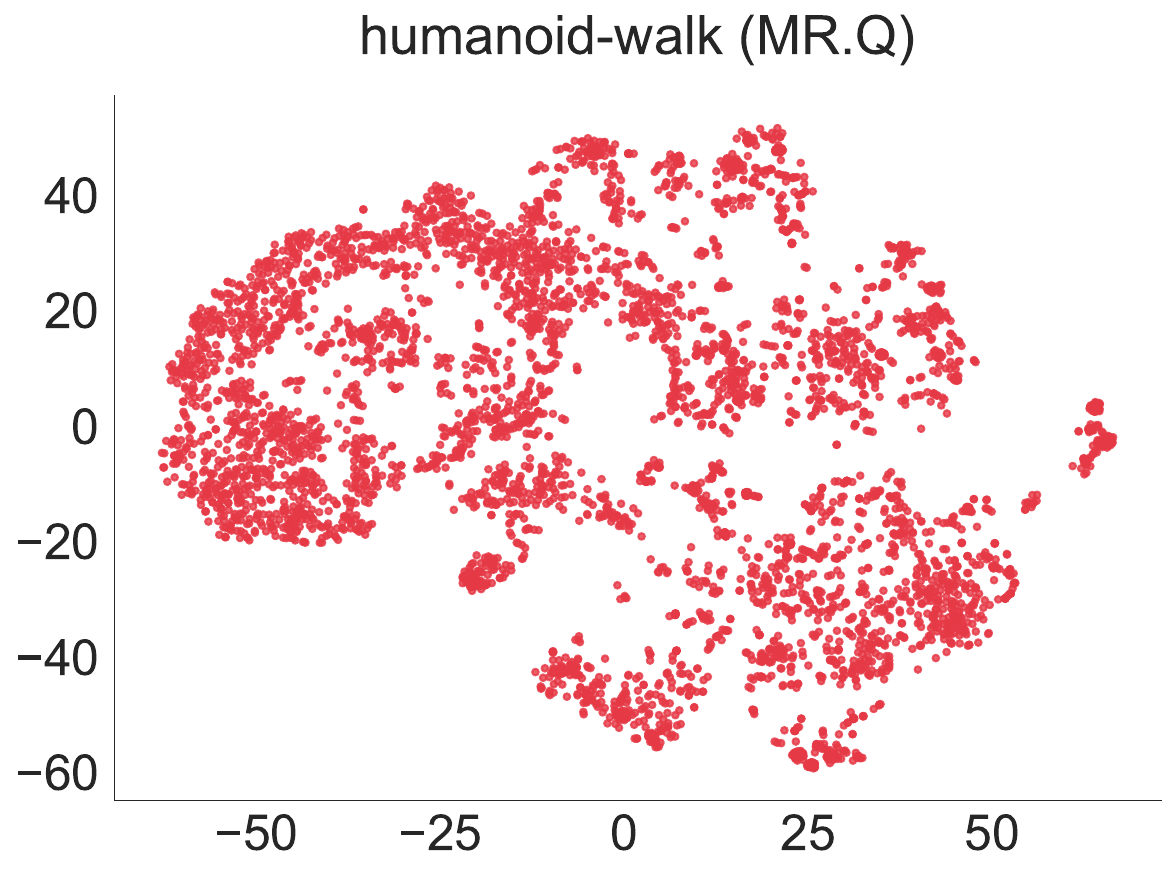}
    \includegraphics[width=0.24\linewidth]{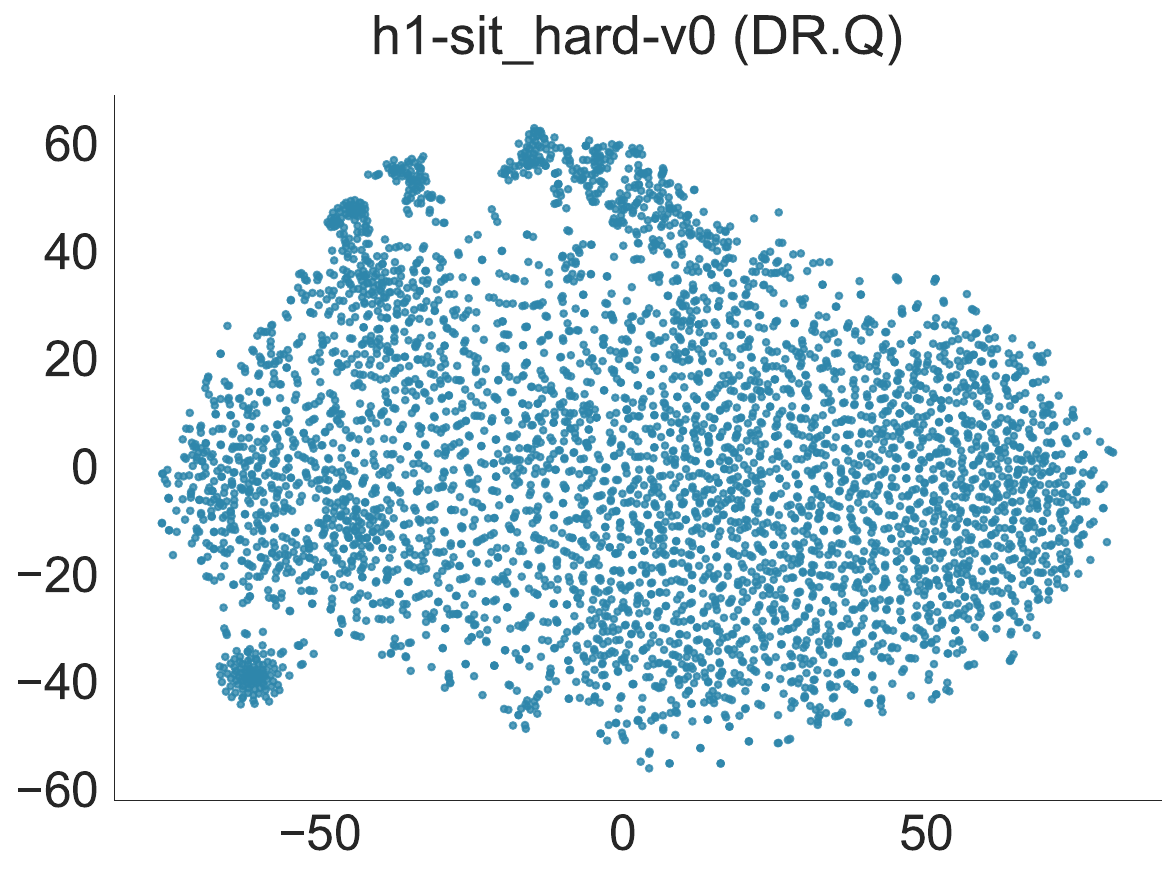}
    \includegraphics[width=0.24\linewidth]{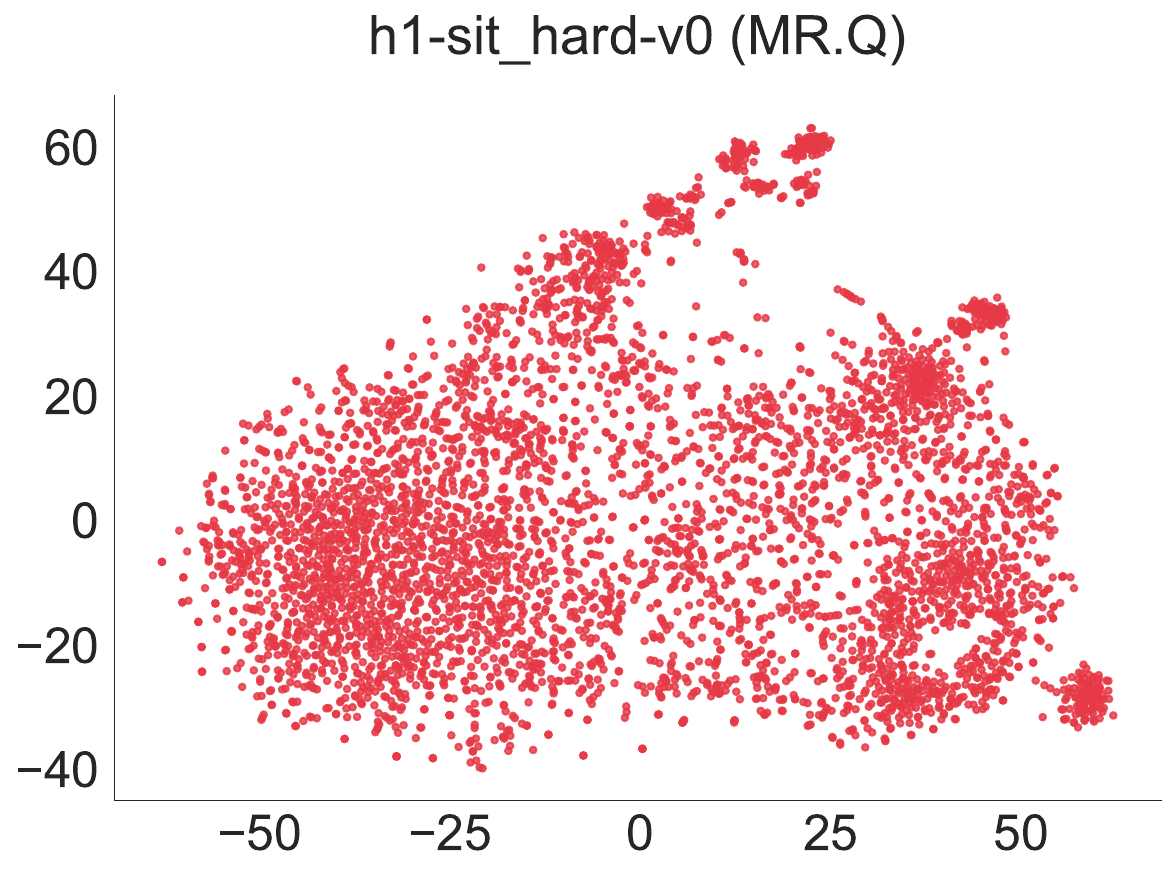}
    \includegraphics[width=0.24\linewidth]{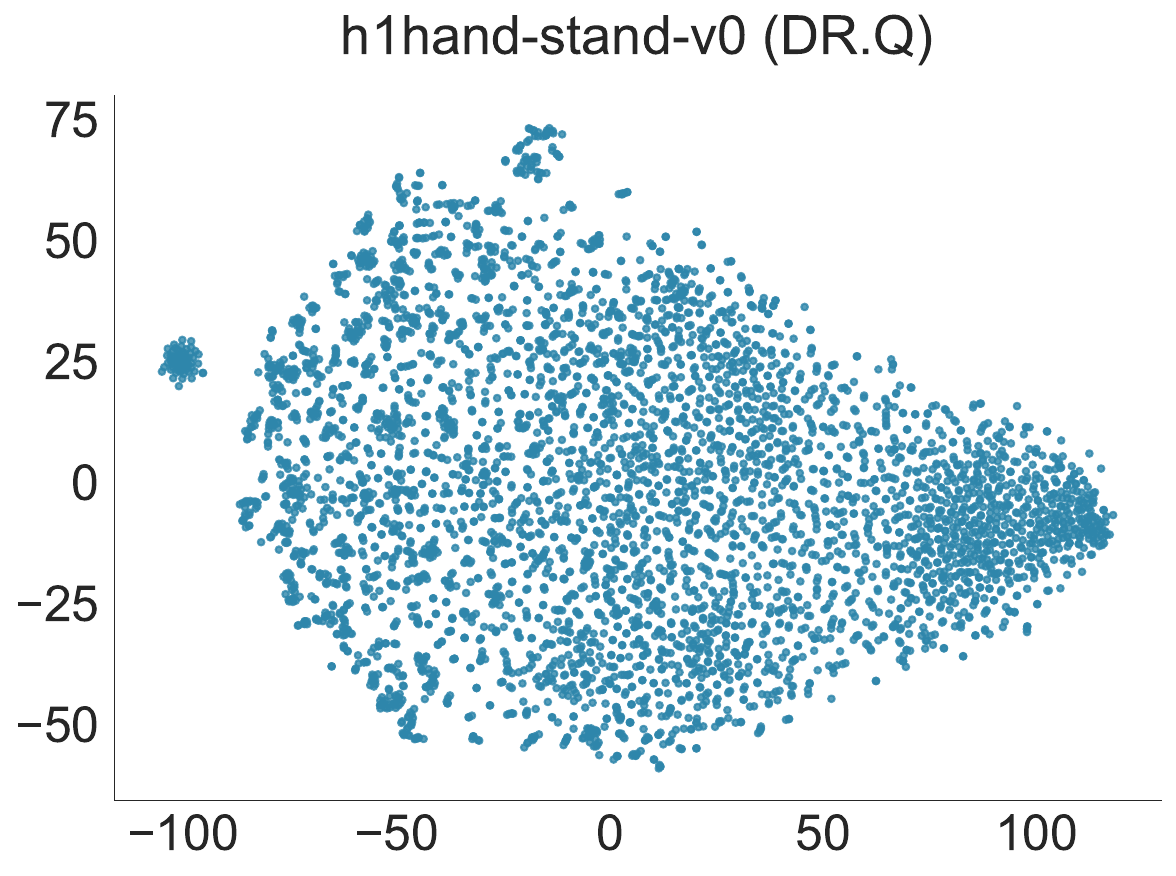}
    \includegraphics[width=0.24\linewidth]{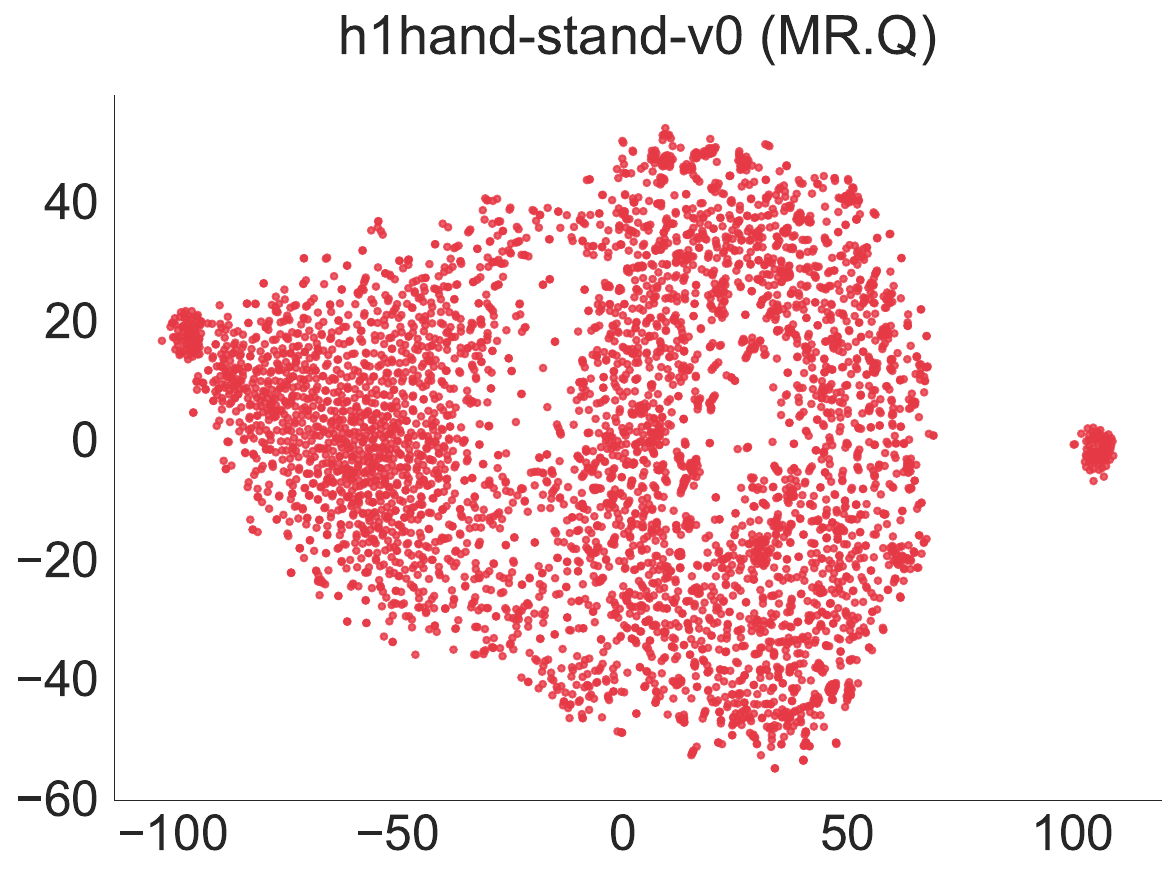}
    \caption{\textbf{T-SNE visualization results of representations.} The red dots denote the representation produced by MR.Q while the blue dots are representations output by DR.Q.}
    \label{fig:tsneresults}
\end{figure}

\subsection{Visualization Results of Sampling Strategies}
\label{sec:visualizesampling}

We now visualize the sampling probability of transitions in the replay buffer to better understand the faded PER method. While MR.Q simply employs LAP for experience replay, where sampling probability is proportional to TD error, DR.Q additionally incorporates a forget mechanism, making its sampling probability proportional to both TD error and the forget weight. We run MR.Q and DR.Q on 8 continuous control tasks for 200K steps, recording the TD errors of the most recent 100K samples. For DR.Q, we use the product of TD error and forget weight as the visualized metric. Figure \ref{fig:priorityplot} shows that using LAP alone can lead to large TD errors for old experiences (e.g., in \texttt{h1-run-v0}), meaning older samples may be revisited more frequently. In contrast, by integrating the forget mechanism, DR.Q successfully focuses more on recent valuable transitions—those with large TD error but small time indices. Consequently, no old experience attains a higher sampling probability than newer ones. These findings align well with the illustration in Figure \ref{fig:framework}.

\begin{figure}[!h]
    \centering
    \includegraphics[width=0.24\linewidth]{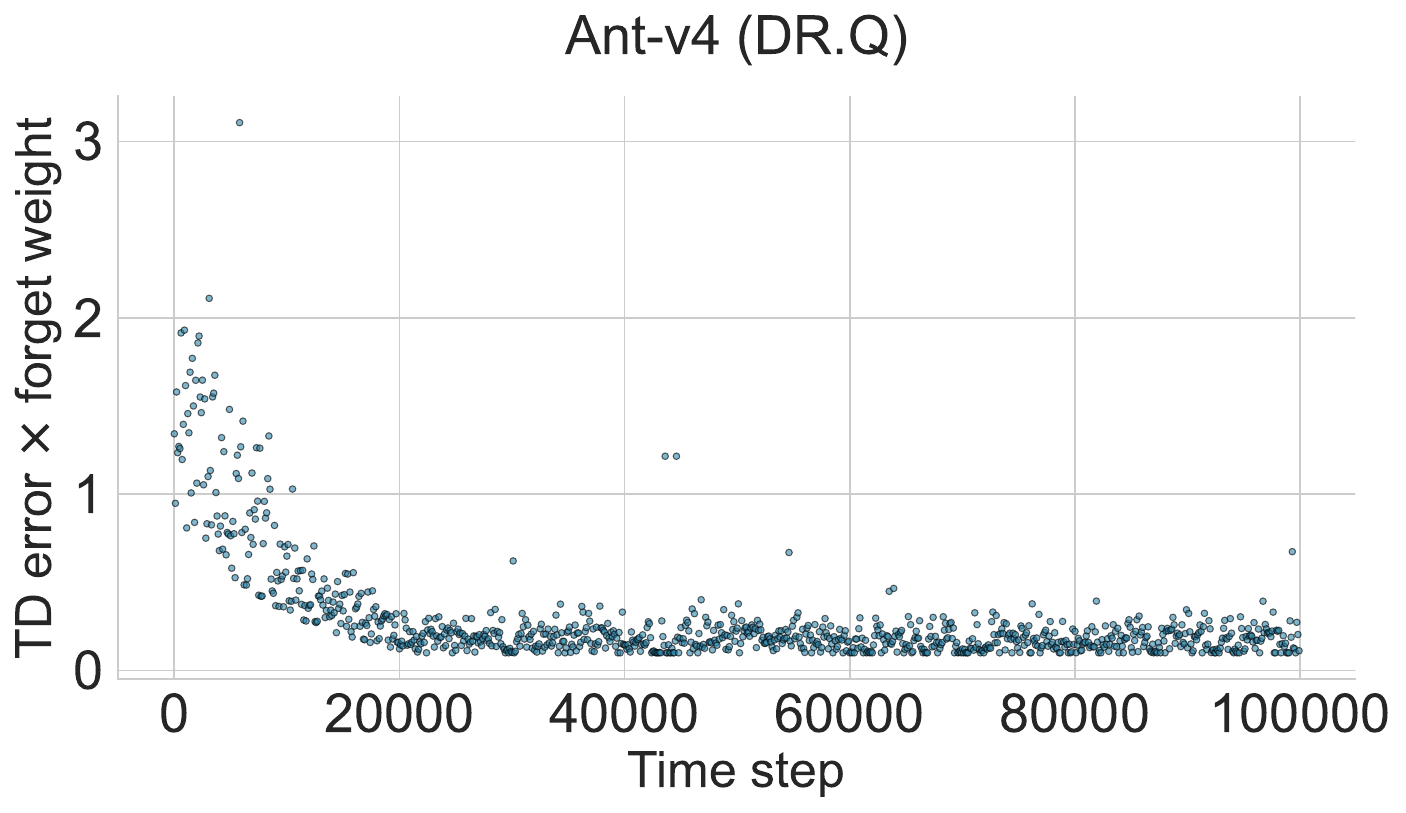}
    \includegraphics[width=0.24\linewidth]{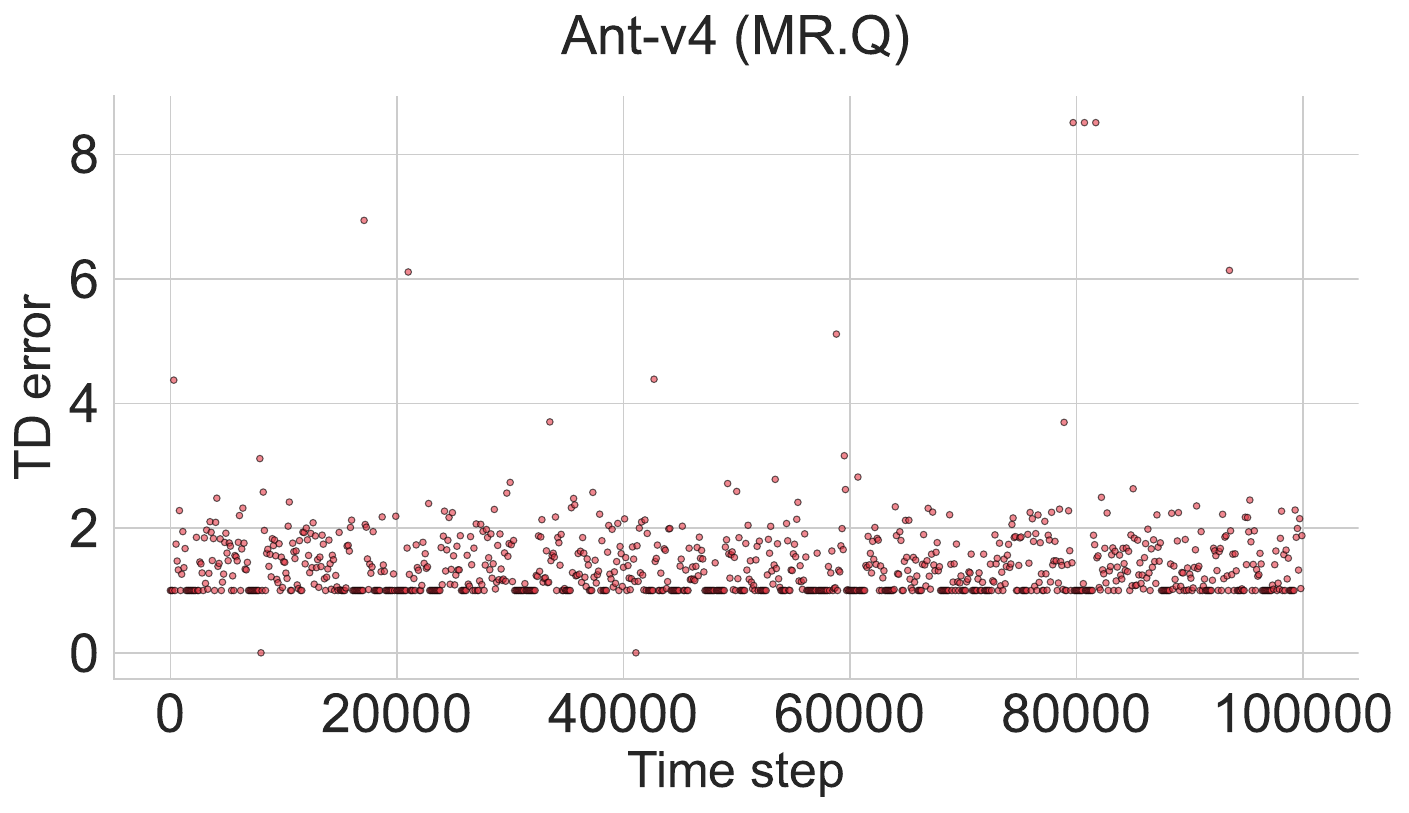}
    \includegraphics[width=0.24\linewidth]{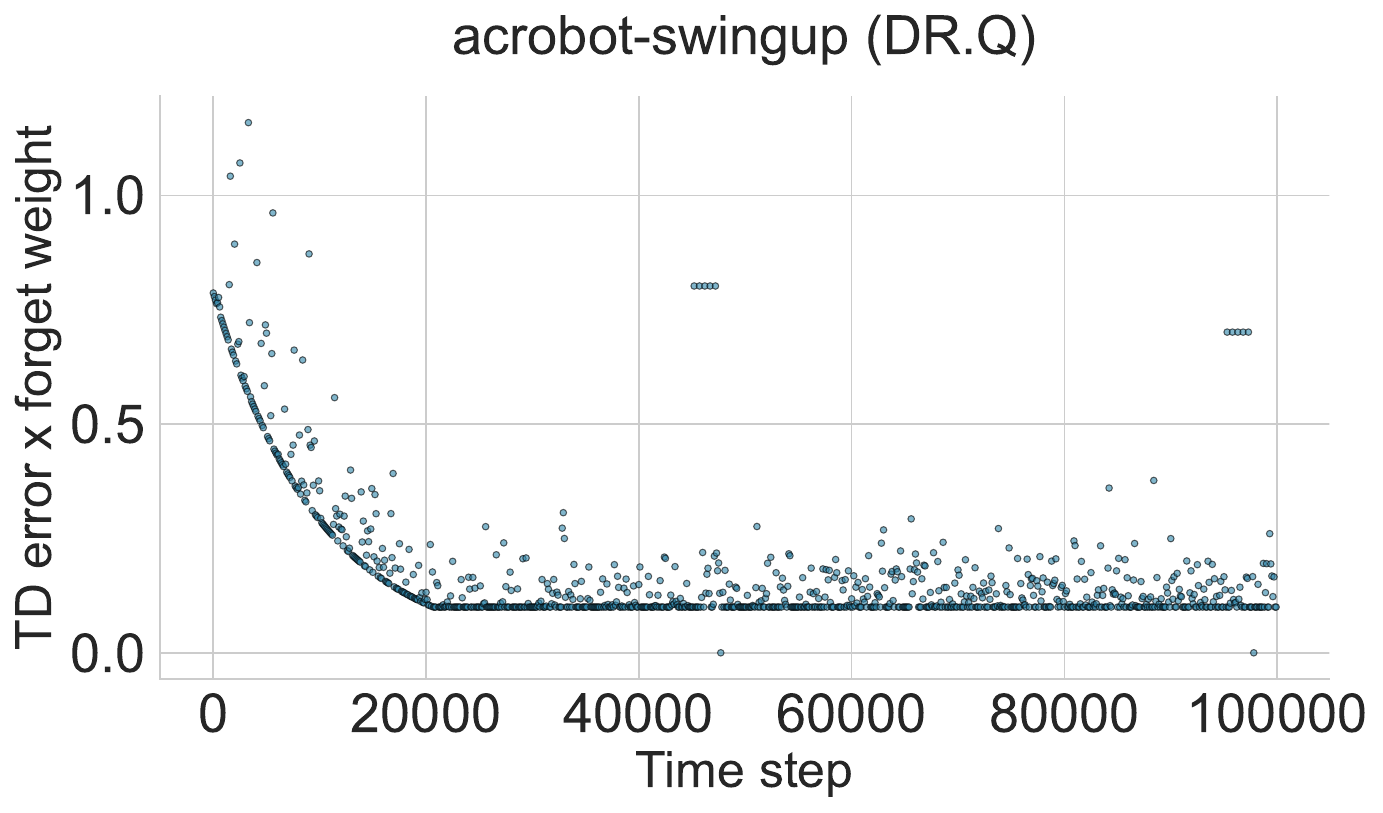}
    \includegraphics[width=0.24\linewidth]{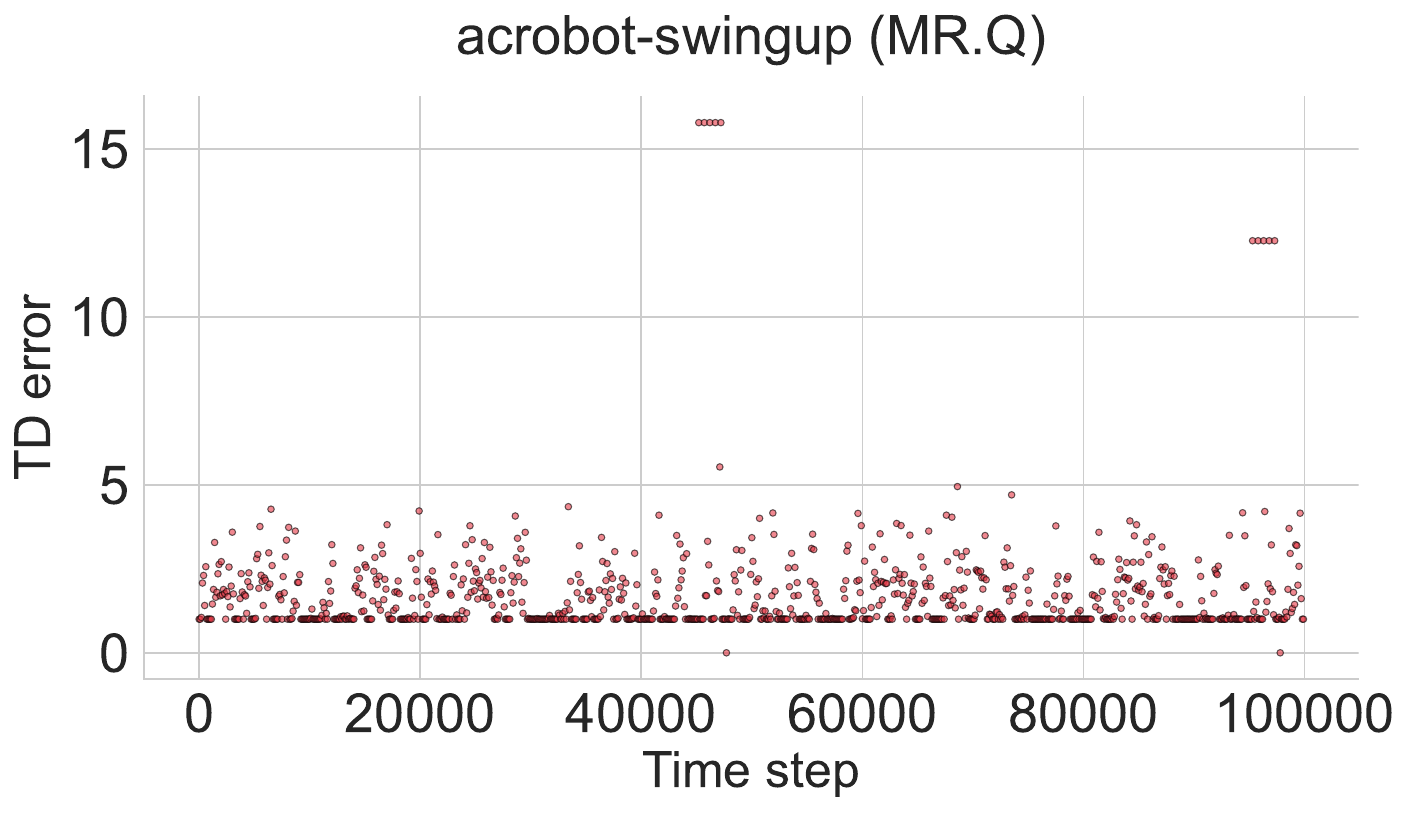}
    \includegraphics[width=0.24\linewidth]{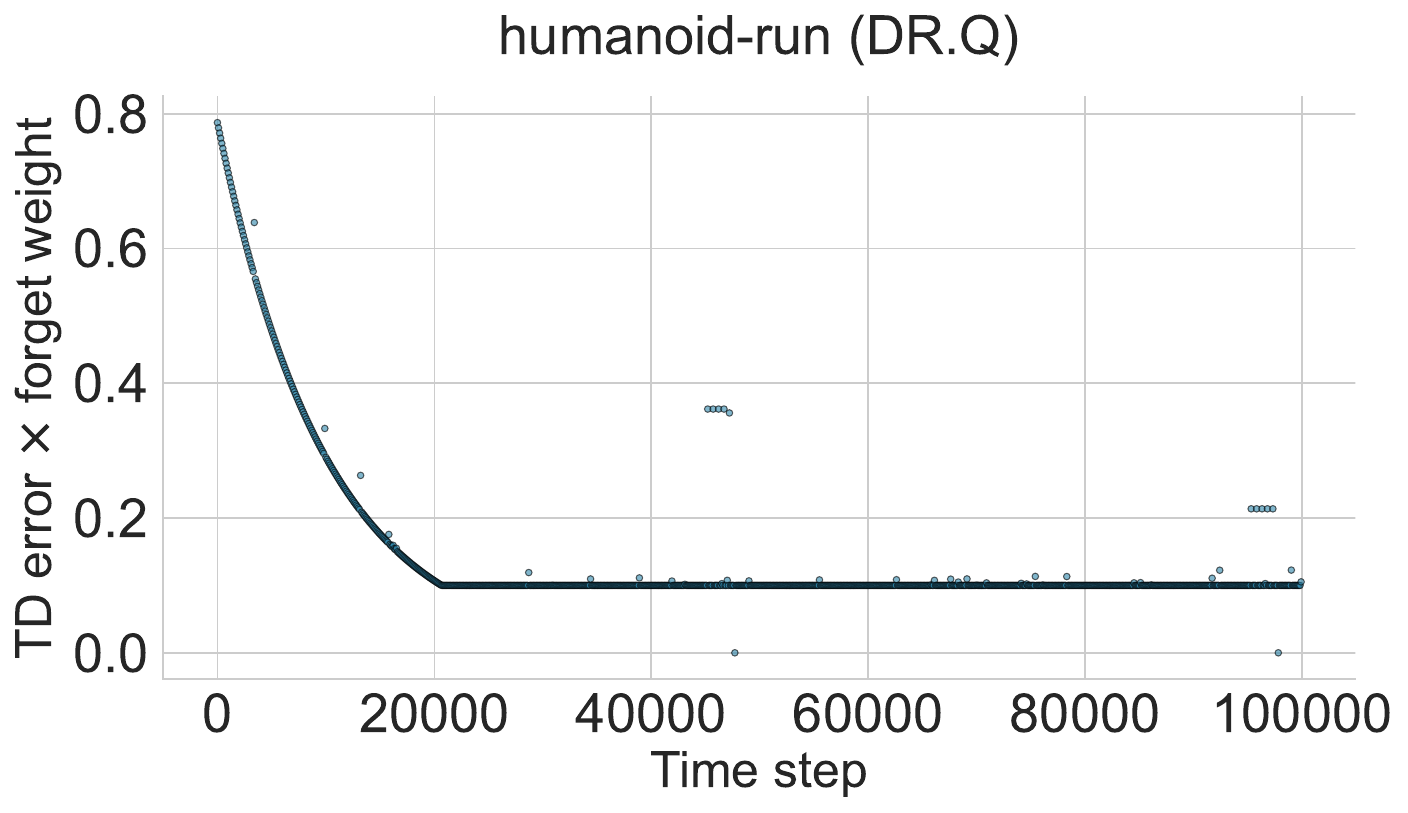}
    \includegraphics[width=0.24\linewidth]{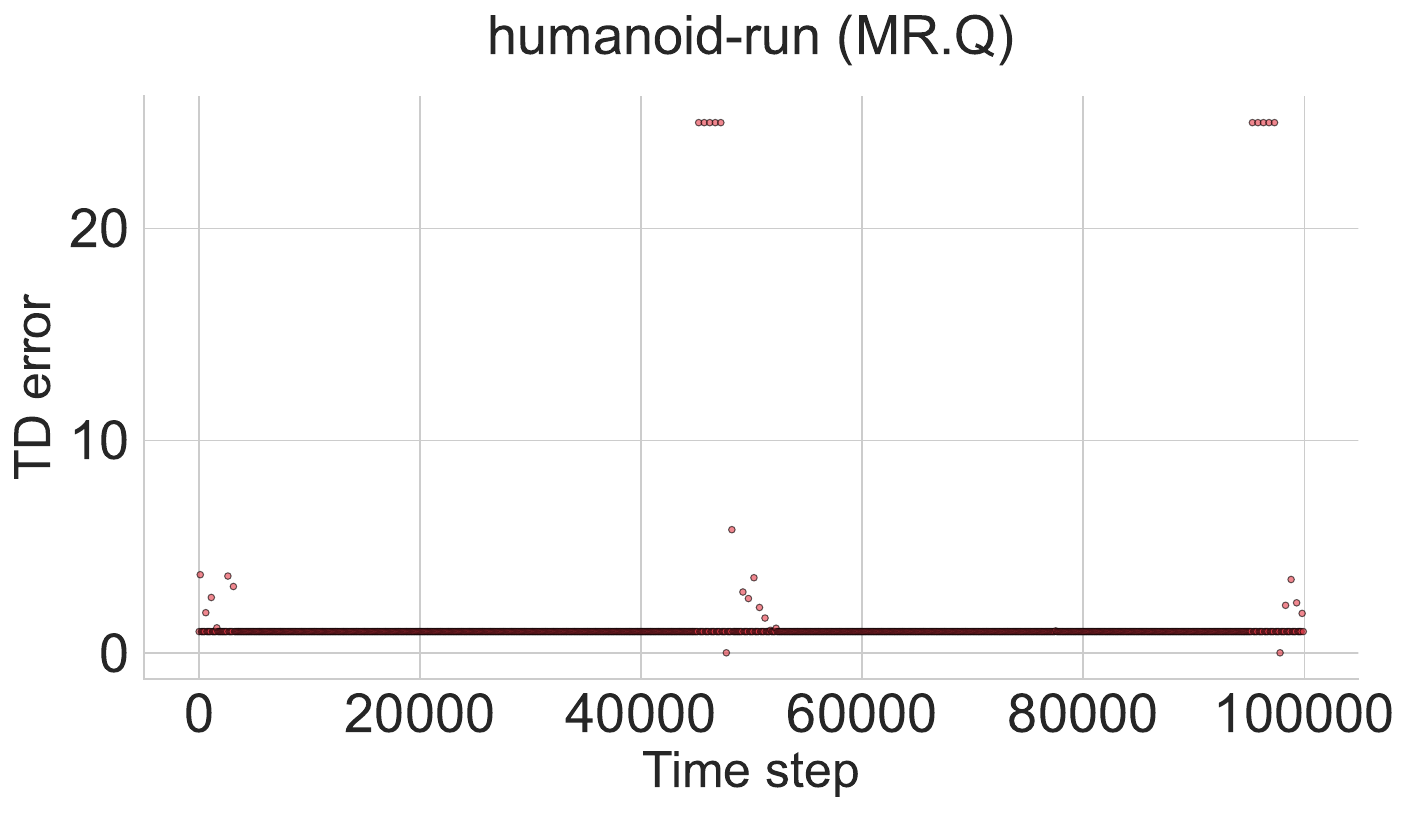}
    \includegraphics[width=0.24\linewidth]{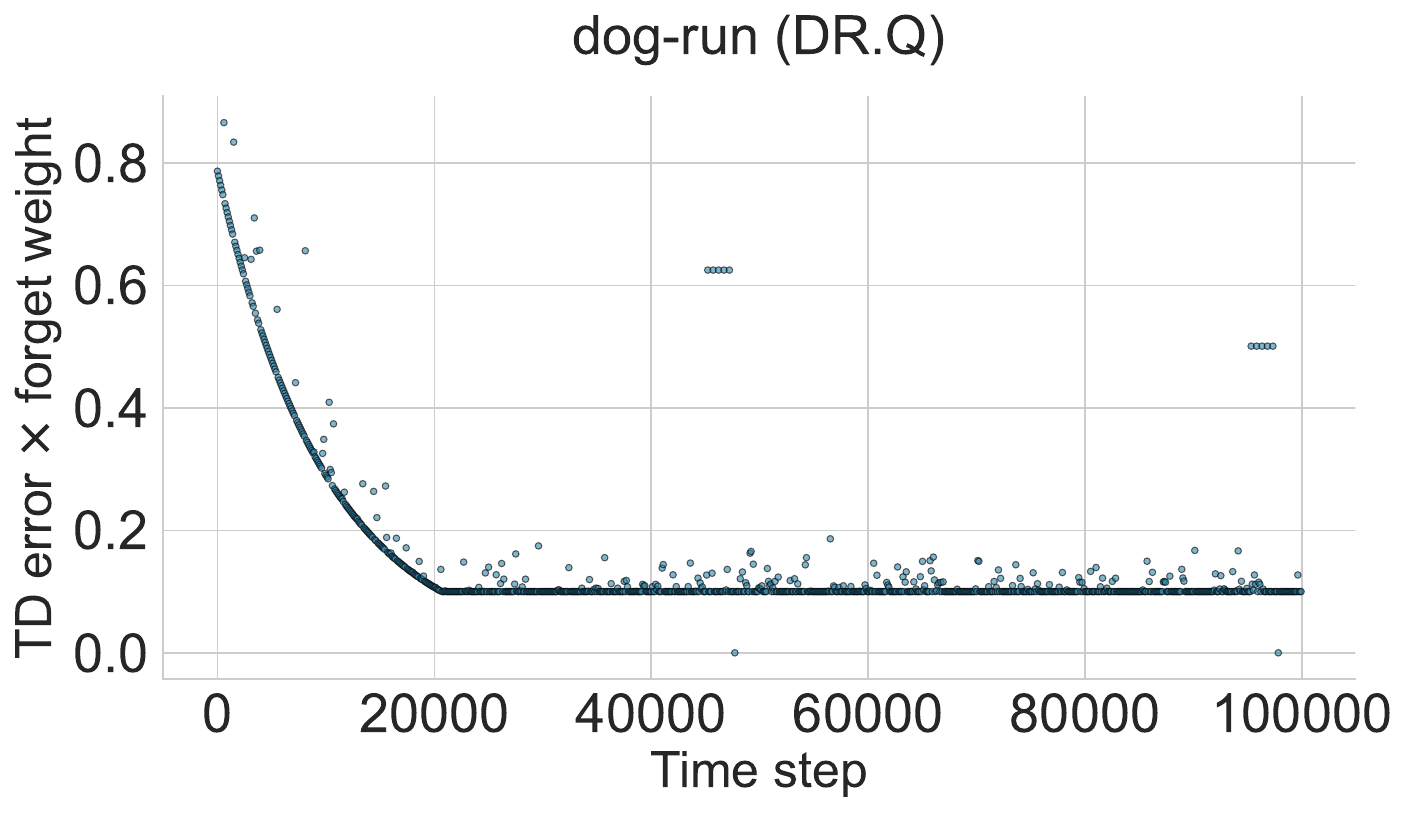}
    \includegraphics[width=0.24\linewidth]{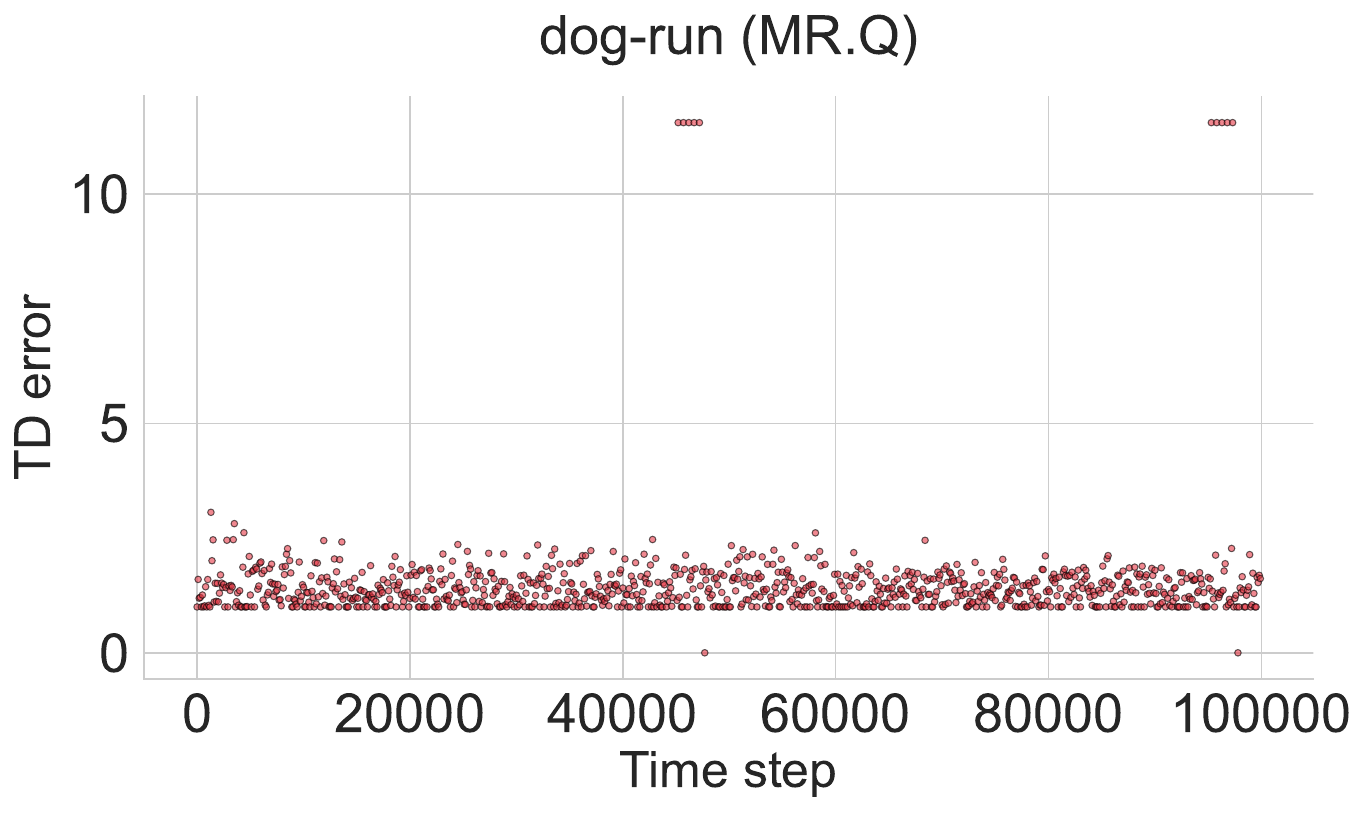}
    \includegraphics[width=0.24\linewidth]{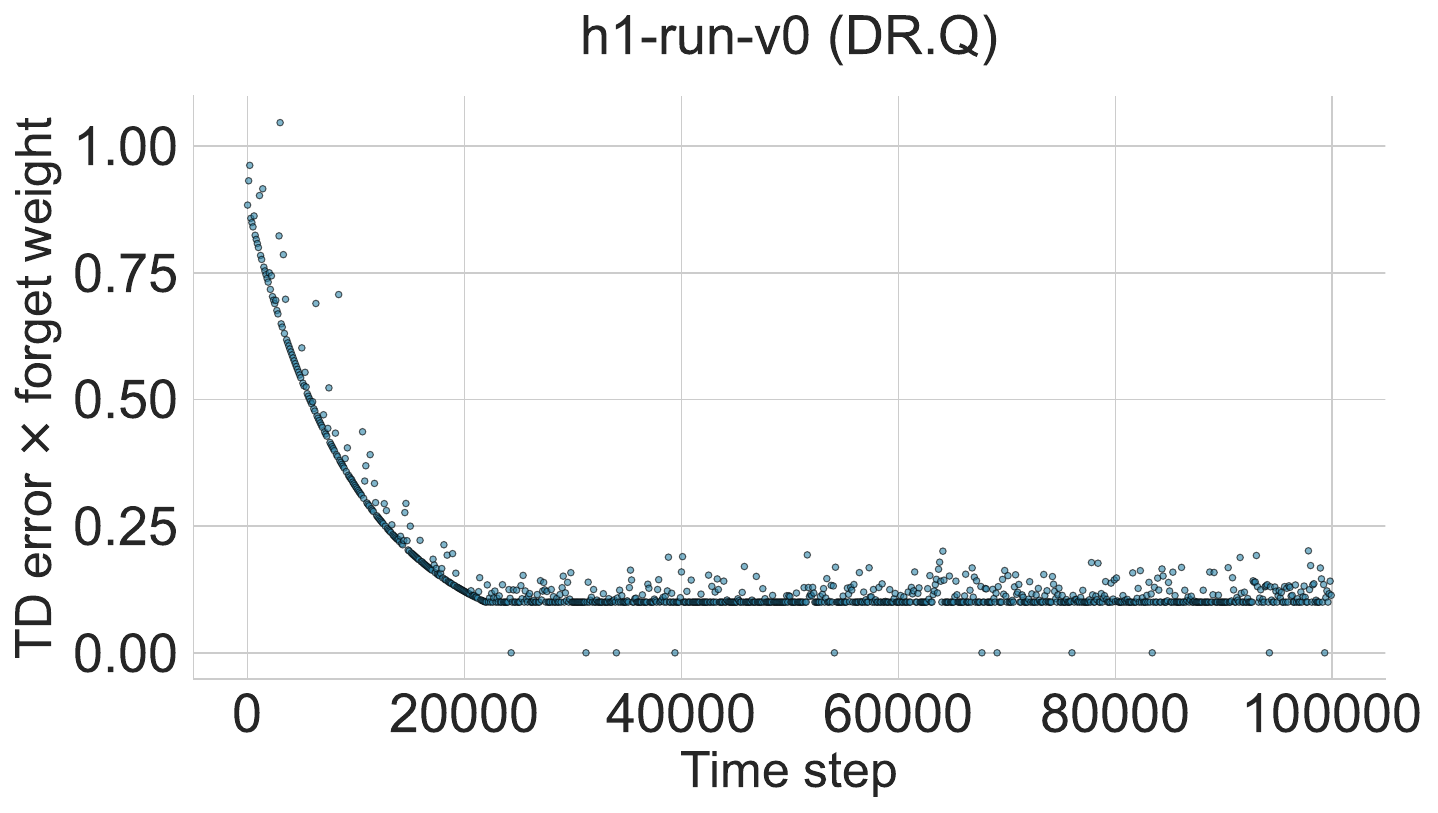}
    \includegraphics[width=0.24\linewidth]{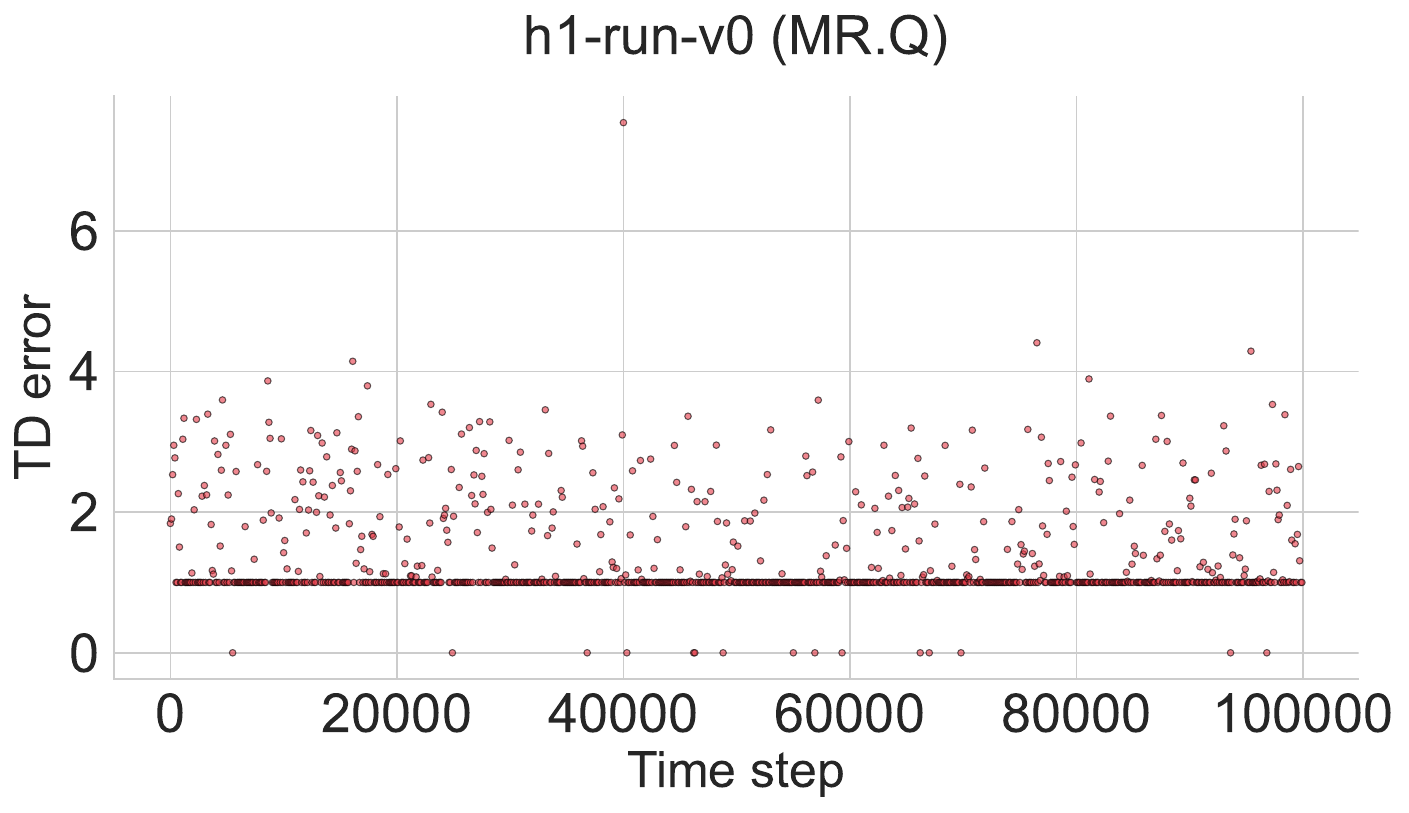}
    \includegraphics[width=0.24\linewidth]{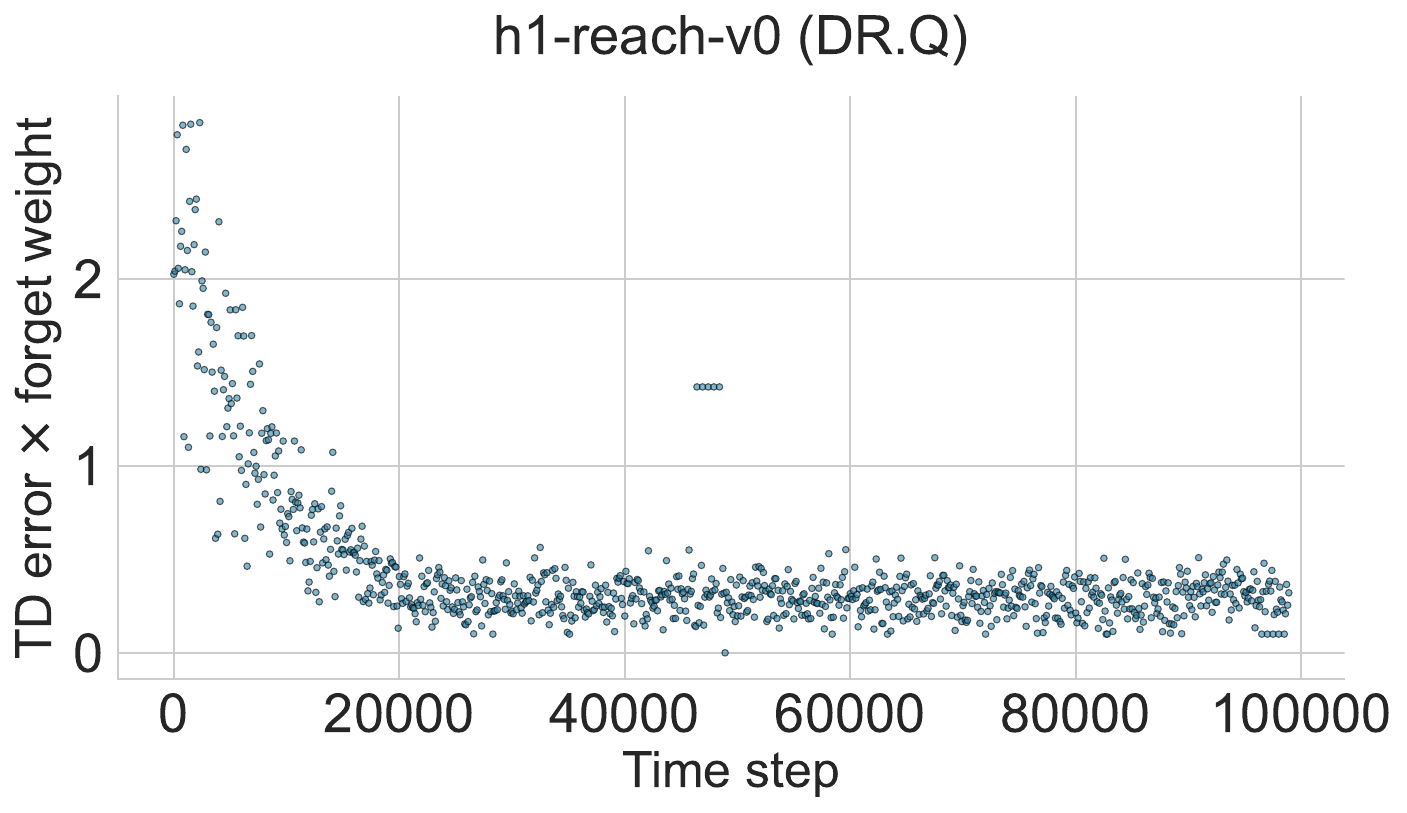}
    \includegraphics[width=0.24\linewidth]{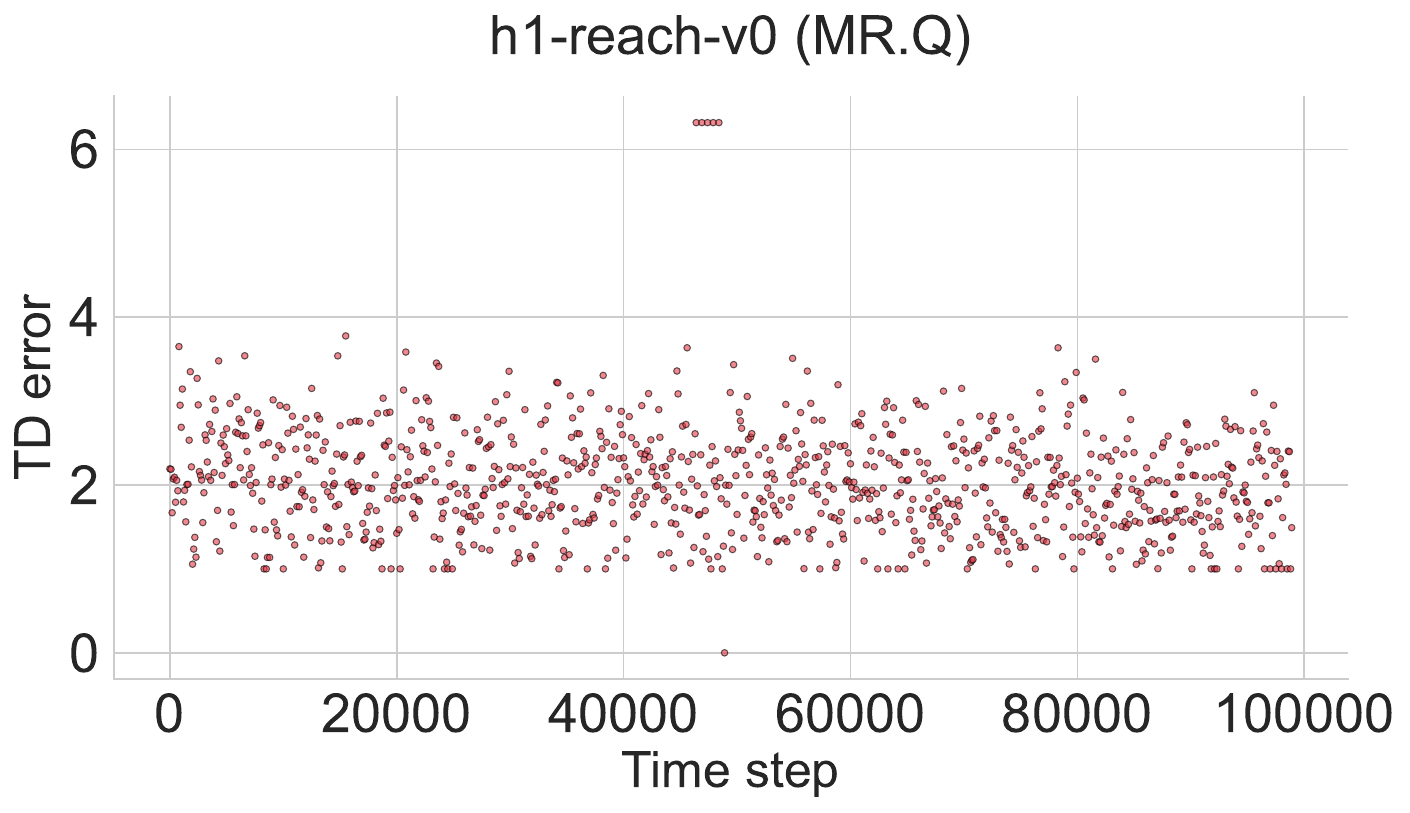}
    \includegraphics[width=0.24\linewidth]{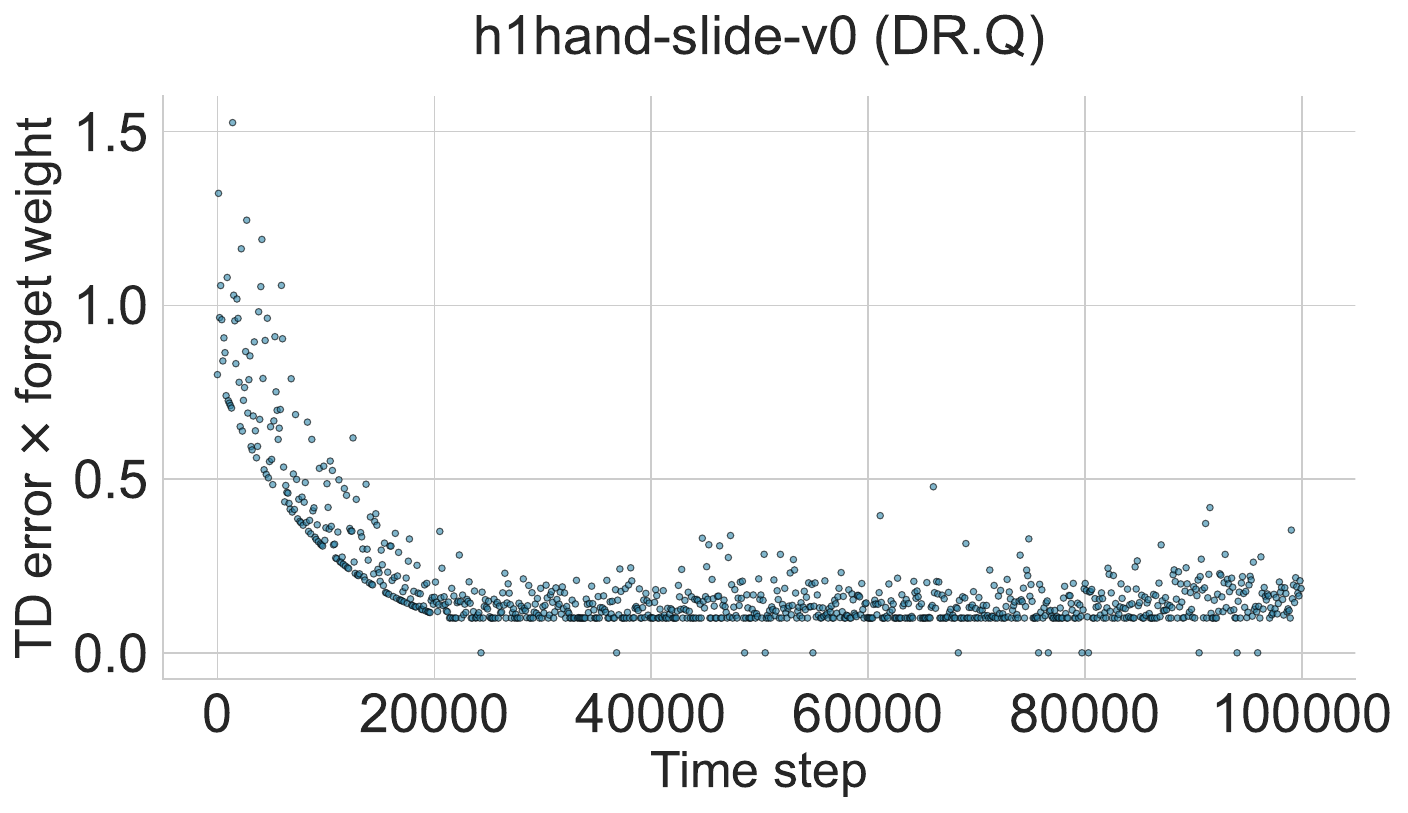}
    \includegraphics[width=0.24\linewidth]{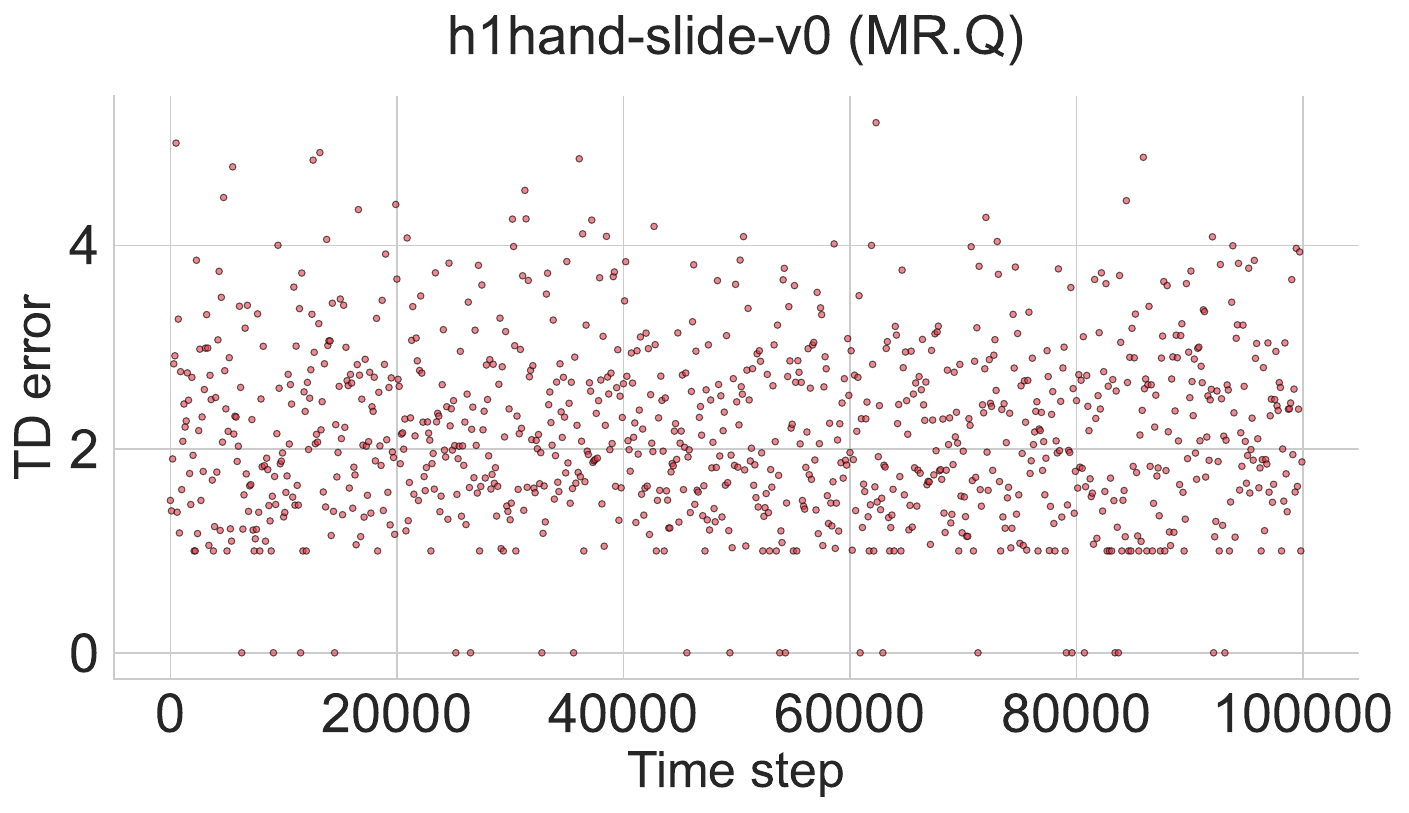}
    \includegraphics[width=0.24\linewidth]{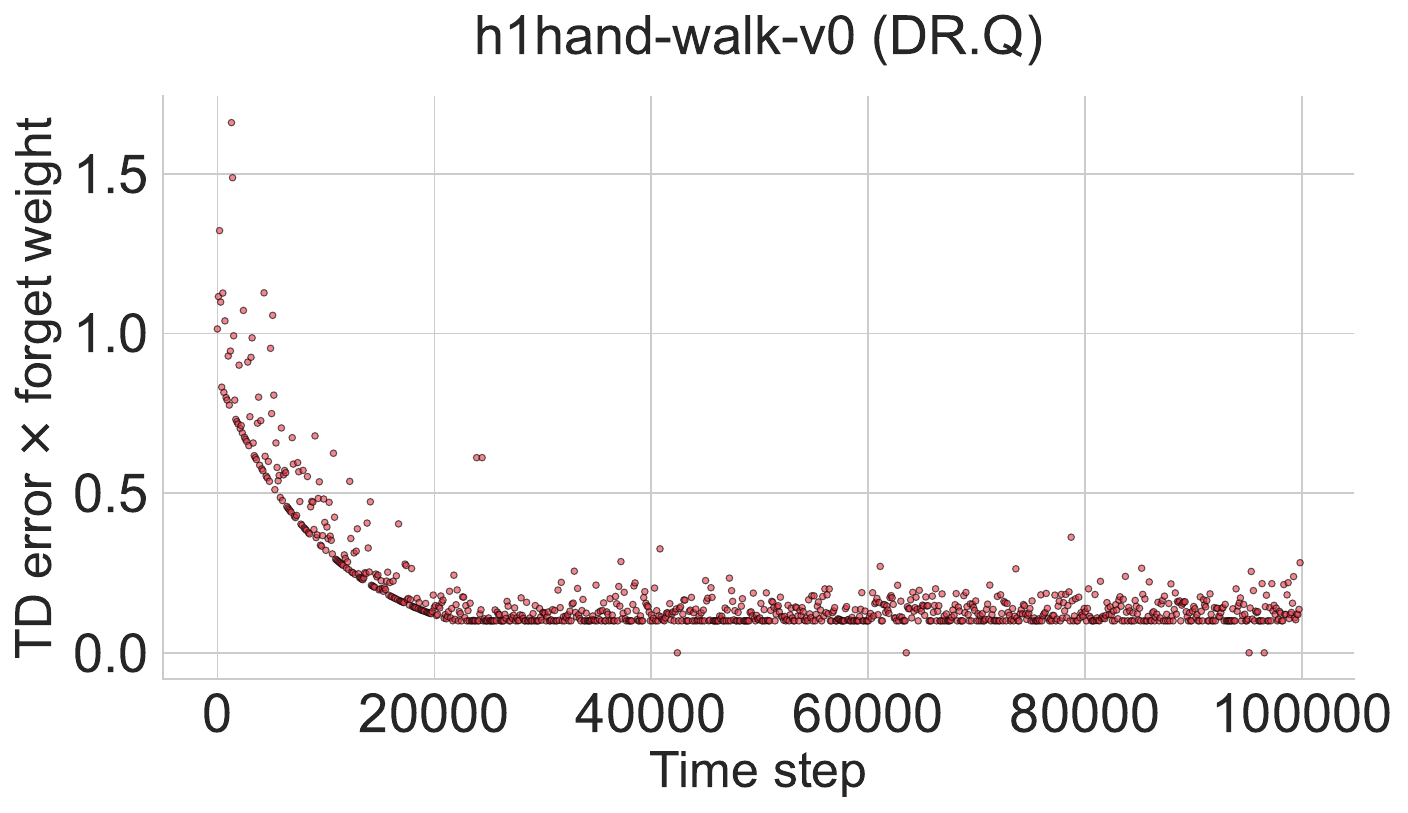}
    \includegraphics[width=0.24\linewidth]{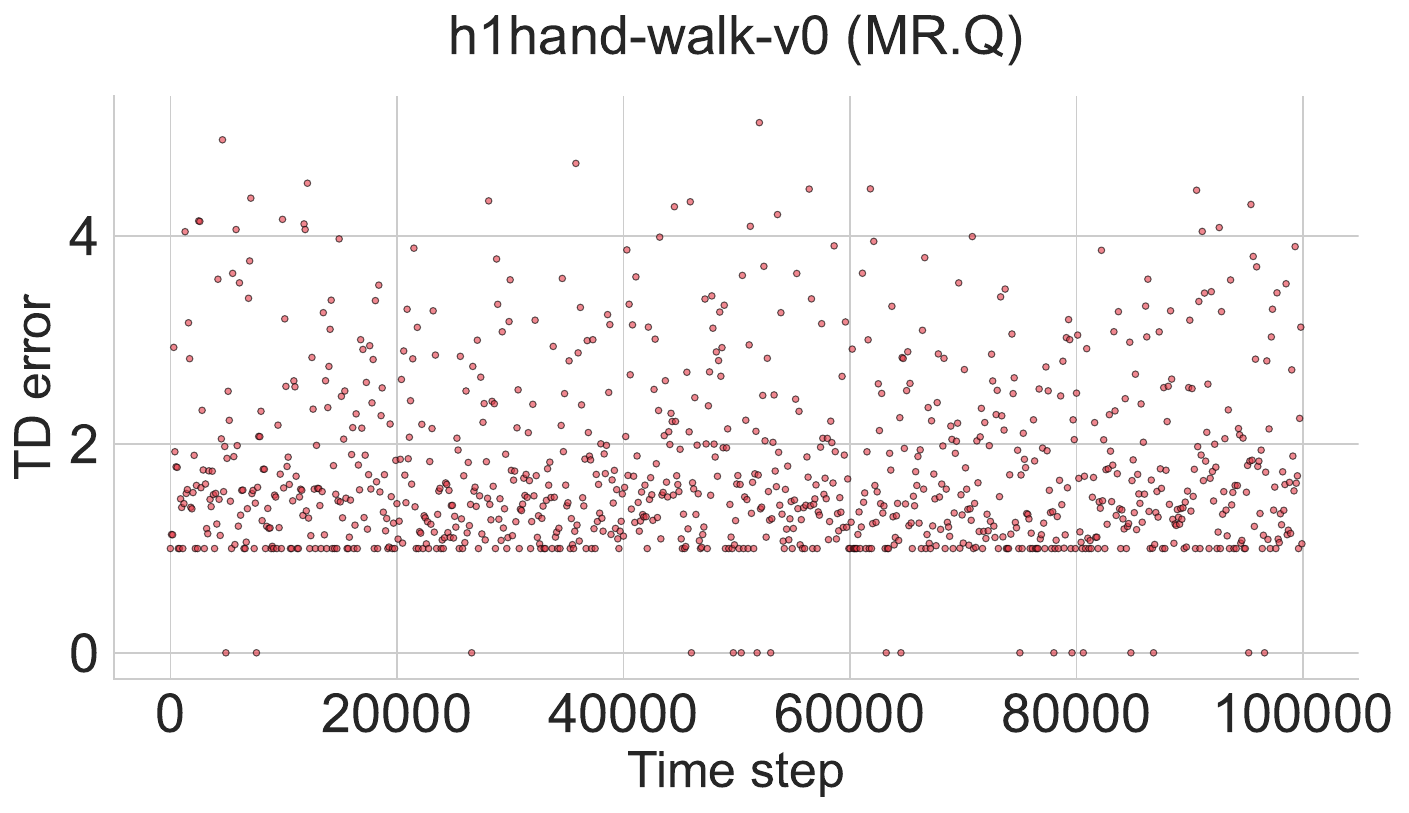}
    \caption{\textbf{Visualization results of sampling strategies.} The red dots mean the sample point of MR.Q and the blue dots are the samples from DR.Q. We use the first 100K transitions in the replay buffer, with time step 0 being the newest transition.}
    \label{fig:priorityplot}
\end{figure}

%%%%%%%%%%%%%%%%%%%%%%%%%%%%%%%%%%%%%%%%%%%%%%%%%%%%%%%%%%%%%%%%%%%%%%%%%%%%%%%
%%%%%%%%%%%%%%%%%%%%%%%%%%%%%%%%%%%%%%%%%%%%%%%%%%%%%%%%%%%%%%%%%%%%%%%%%%%%%%%

\end{document}